\title{Smoothed Analysis for Learning Concepts with \\ Low Intrinsic Dimension}
\titleformat*{\paragraph}{\bfseries}
\pgfplotsset{compat=1.17}
\definecolor[named]{ACMBlue}{cmyk}{1,0.1,0,0.1}
\definecolor[named]{ACMYellow}{cmyk}{0,0.16,1,0}
\definecolor[named]{ACMOrange}{cmyk}{0,0.42,1,0.01}
\definecolor[named]{ACMRed}{cmyk}{0,0.90,0.86,0}
\definecolor[named]{ACMLightBlue}{cmyk}{0.49,0.01,0,0}
\definecolor[named]{ACMGreen}{cmyk}{0.20,0,1,0.19}
\definecolor[named]{ACMPurple}{cmyk}{0.55,1,0,0.15}
\definecolor[named]{ACMDarkBlue}{cmyk}{1,0.58,0,0.21}
\crefname{ineq}{Inequality}{Inequality}
\crefname{sub}{Subsection}{Subsection}
\crefname{sdp}{SDP}{SDP}
\crefname{lp}{LP}{LP}
\crefname{ineq}{Inequality}{Inequality}
\crefname{sub}{Subsection}{Subsection}
\crefname{sdp}{SDP}{SDP}
\crefname{lp}{LP}{LP}
\newtheorem{theorem}{Theorem}[section]
\newtheorem{lemma}[theorem]{Lemma}
\newtheorem{informal theorem}[theorem]{Theorem (informal statement)}
\newtheorem{proposition}[theorem]{Proposition}
\newtheorem{corollary}[theorem]{Corollary}
\newtheorem{remark}[theorem]{Remark}
\newtheorem{definition}[theorem]{Definition}
\newcommand{\eqdef}{\coloneqq}
\newcommand\twonorm[1]{\|#1\|_2}
\newcommand\norm[1]{\left\| #1 \right\|}
\renewcommand\vec[1]{\mathbf{#1}}
\DeclareMathOperator*{\pr}{\mathbf{Pr}}
\DeclareMathOperator*{\E}{\mathbf{E}}
\newcommand{\proj}{\mathrm{proj}}
\def\d{\mathrm{d}}
\newcommand{\normal}{\mathcal{N}}
\def\multichoose#1#2{\ensuremath{\left(\kern-.3em\left(\genfrac{}{}{0pt}{}{#1}{#2}\right)\kern-.3em\right)}}
\newcommand{\bx}{\mathbf{x}}
\newcommand{\e}{\mathbf{e}}
\newcommand{\err}{\mathrm{err}}
\newcommand{\R}{\mathbb{R}}
\newcommand{\Z}{\mathbb{Z}}
\newcommand{\N}{\mathbb{N}}
\newcommand{\eps}{\epsilon}
\newcommand{\poly}{\mathrm{poly}}
\newcommand{\sign}{\mathrm{sign}}
\newcommand{\opt}{\mathrm{opt}}
\newcommand{\D}{D}
\newcommand{\Ind}{\mathds{1}}
\newcommand{\1}{\Ind}
\newcommand{\wt}{\widetilde}
\newcommand{\smoothopt}{\mathrm{opt}_{\sigma}}
\newcommand{\x}{\vec x}
\newcommand{\vv}{\vec v}
\newcommand{\vu}{\vec u}
\newcommand{\vw}{\vec w}
\newcommand{\z}{\vec z}
\newcommand{\lamdba}{\lambda}
\newcommand{\citet}{\cite}
\newcommand{\citep}{\cite}
\newcommand{\F}{\mathcal{F}}
\newcommand{\Gauss}{\mathcal{N}}
\newcommand{\Dmarginal}{D_{\x}}
\newcommand{\Djoint}{D}
\newcommand{\Alg}{\mathcal{A}}
\newcommand{\Dclass}{\mathbb{D}}
\newcommand{\ouoperator}{T}
\newcommand{\ind}{\mathbbm{1}}
\author{Gautam Chandrasekaran\thanks{\texttt{gautamc@cs.utexas.edu}. Supported by the NSF AI Institute for Foundations of Machine Learning( IFML).} \\
	 UT Austin
	 \and 
     Adam R. Klivans\thanks{\texttt{klivans@cs.utexas.edu}. Supported by NSF award AF-1909204 and the NSF AI Institute for Foundations of Machine Learning (IFML).} \\
	 UT Austin
   \and Vasilis Kontonis\thanks{\texttt{vasilis@cs.utexas.edu}. Supported by the NSF AI Institute for Foundations of Machine Learning (IFML).} \\
	 UT Austin
  \and Raghu Meka\thanks{\texttt{raghum@cs.ucla.edu}. Supported by  NSF Collaborative Research: Award 2217033.}\\
  UCLA
	 \and Konstantinos Stavropoulos\thanks{\texttt{kstavrop@cs.utexas.edu}. Supported by the NSF AI Institute for Foundations of Machine Learning (IFML) and by scholarships from Bodossaki Foundation and Leventis Foundation.} \\
	 UT Austin
   }
\begin{document}
\maketitle

\begin{abstract}%

In traditional models of supervised learning, the goal of a learner-- given examples from an arbitrary joint distribution on $\mathbb{R}^d \times \{\pm 1\}$-- is to output a hypothesis that is competitive (to within $\epsilon$) of the best fitting concept from some class.  In order to escape strong hardness results for learning even simple concept classes, we introduce a smoothed-analysis framework that requires a learner to compete only with the best classifier that is robust to small random Gaussian perturbation.

This subtle change allows us to give a wide array of learning results for any concept that (1) depends on a low-dimensional subspace (aka multi-index model) and (2) has a bounded Gaussian surface area.  This class includes functions of halfspaces and (low-dimensional) convex sets, cases that are only known to be learnable in non-smoothed settings with respect to highly structured distributions such as Gaussians.

Surprisingly, our analysis also yields new results for traditional non-smoothed frameworks such as learning with margin.  In particular, we obtain the first algorithm for agnostically learning intersections of $k$-halfspaces in time  $k^{\poly(\frac{\log k}{\epsilon \gamma}) }$ where $\gamma$ is the margin parameter.  
Before our work, the best-known runtime was exponential in $k$ \citep{ArriagaVempala:99}.

\end{abstract}


\section{Introduction}



In either the PAC or agnostic learning model~\cite{Valiant:84,val84,Haussler:92,KSS:94}, a learner is given access to random labeled examples and has to compute a classifier that performs approximately as well as the best classifier in a target concept class (in the PAC model it is further assumed that the best classifier has zero error).  More precisely, for an instance distribution $D$ over 
$\R^d \times \{ \pm 1\}$ and a concept class $\mathcal F$, the optimal error is 
defined as $\opt = \inf_{f \in \mathcal F} \pr_{(\x, y) \sim D}[f(\x) \neq y]$.
Without assumptions about the feature distribution and/or the label generating process, learning is known to be computationally hard for even the simplest concept classes \cite{Khar1993,  GR:06,DSLMSWW:08, KS:08, FGRW09,klivans2009cryptographic,DOSW:11, FLS:11colt, DanielyV21}. In particular, learning halfspaces (linear classifiers) is intractable without strong distributional assumptions \cite{KKMS:05,GR:06, Feldman:06, Daniely16}.

In order to bypass these hardness results, a large body of research has focused on taking assumptions on the data-generating distribution. 
The most common approaches are (1) making a distributional assumption on the underlying feature distribution or marginal, e.g., that it is Gaussian or uniform on the hypercube \cite{LMN:93,Long:03, KKMS:08,KOS:08, GKK:08,kalai2009learning,ABL17,DKKTZ21}, or (2) assuming that the labels are not generated adversarially  
\cite{AwasthiBHU15,AwasthiBHZ16,DGT19,DKTZ20,CKMY20,ZSA20,DKKTZ22}.

\paragraph{Our Smoothed Learning Model} 
In this work, we depart from those paradigms, and instead of explicitly imposing structure 
on the feature or the label distributions we simply relax the notion of optimality.  Inspired by the seminal works \cite{spielman2004smoothed,spielman2005smoothed} on the 
smoothed-complexity of algorithms,
we require the learner to compete against
the minimum possible error over classifiers that have been translated by a small Gaussian perturbation. 
 Formally, we have the following definition:
 
\begin{definition}[Smoothed Optimality]\label{definition:concept-smoothing} 
Fix $\eps, \sigma>0$ and $\delta\in (0,1)$.
Let $\F$ be a class of Boolean concepts and let $\Dclass$ be a class of distributions over $\R^d$.
    Let $\Djoint$ be a distribution over $(\x, y) \in \R^d\times \{\pm 1\}$ such that its $\x$-marginal $\Dmarginal\in \Dclass$.
    We say that the algorithm $\Alg$ learns $\F$ in the $\sigma$-smoothed setting if, after receiving i.i.d. samples from $\Djoint$, 
    $\Alg$ outputs a hypothesis $h:\R^d \to \{\pm 1\}$ such that,
    with probability at least $1-\delta$, it holds
        $\pr_{(\x, y) \sim \Djoint}[ h(\x) \neq y] \le 
        \opt_\sigma + \eps$,
        where 
    \begin{equation}\label{equation:concept-smoothing-goal}
    \opt_\sigma = 
    \inf_{f \in \mathcal F} 
    \E_{\z\sim\Gauss}
    \Big[
         \pr_{(\x, y) \sim \Djoint}[ f(\x + \sigma \z) \neq y ] 
         \Big]
         \,.
    \end{equation}
\end{definition}

We observe that by taking $\sigma = 0$ in \Cref{definition:concept-smoothing} we recover the standard definition of agnostic learning.\footnote{Note our smoothing applies to the PAC framework as well, as that is simply the case $\opt_{0} = 0$.}  On the other extreme as $\sigma \to \infty$, every concept is 
evaluated on a random input unrelated to the label $y$ and the error essentially does not depend on the 
concept $f$. The smoothed agnostic learning of \Cref{definition:concept-smoothing} is therefore an interpolation between the case where the instance distribution $D$ and the optimal classifier can be arbitrarily coupled (which corresponds to agnostic learning and $\sigma = 0$) and completely decoupled (when $\sigma = \infty$).  This decoupling allows us to avoid worst-case concepts that can encode complexity-theoretic primitives.  


\paragraph{Learning Concepts with Low Intrinsic Dimension}
We focus on the general class of concepts with low intrinsic dimension, i.e., that implicitly
depend on few relevant directions (these are also known as linear or subspace juntas \cite{vempala2011structure,de2019your,DeMN21}).
More precisely, a concept $f$ is of low intrinsic dimension if there exists
an --- \emph{unknown to the learner} --- subspace $V$ of dimension at most $k$ such that 
$f$ only depends on the projection of $\x$ onto $V$, i.e., $f(\x) = f(\proj_{V} \vec \x)$
for all $\x$. We will also use the term ``low-dimensional'' for such concepts.
Perhaps \emph{the} most well-studied low-dimensional concept
class is that of halfspaces or linear threshold functions \cite{Rosenblatt:62,MinskyPapert:88},
where $k=1$.
Another popular low-dimensional class that has been extensively studied is intersections of $k$ halfspaces \cite{blum1993learning,ArriagaVempala:99,klivans-servedio-margin,KOS:08,Vempala10}.
More broadly, in \Cref{def:bounded-surface-area-concepts} we define a general class
of low dimensional concepts with ``well-behaved'' decision boundary that includes the previous mentioned classes (and more) 
as special cases.  Essentially all efficient algorithms in prior work for learning such concepts (in fact even for learning halfspaces) rely on strong assumptions, such as Gaussians \citep{KKMS:08,KOS:08}.
We investigate whether it is possible to design efficient learning 
algorithms in the smoothed setting of \Cref{def:bounded-surface-area-concepts} for 
natural concept classes while weakening the distributional assumptions that have been used so
far in the literature:
\begin{center}
\emph{
Can we relax the strong distributional assumptions (such as Gaussianity) required by previous works and still obtain comparable efficient algorithms in the smoothed setting?}
\end{center}
We answer the above question positively and show that efficient smoothed learning is possible assuming only that the feature distribution is concentrated (e.g., bounded or sub-gaussian). 
In particular, our results in the smoothed setting establish learnability under 
discrete distributions that are commonly used in hardness constructions in the standard 
agnostic setting (see, e.g., \cite{DanielyV21}).
At the same time, we show that our smoothed learning model improves and generalizes prior models such as learning with margin. In fact, for standard non-smoothed settings such as learning intersections of $k$-halfspaces with margin, we are able to obtain significant improvements over the prior works as corollaries of our smoothed learning results.

\subsection{Our Results}
\label{sec:results}
In this section we present our main contributions and discuss the connections of the smoothed learning model of \Cref{definition:concept-smoothing} with other models.

\paragraph{Measure of Complexity: Gaussian Surface Area}
As mentioned above, we require that the concept class is low-dimensional, i.e., that it depends on few relevant directions.  Moreover, we assume that it has bounded Gaussian Surface Area (GSA).  The GSA of a boolean function $f$, denoted from now on as $\Gamma(f)$, 
is defined to be the surface area of its decision boundary weighted by the Gaussian density, see \Cref{def:GSA} for a formal definition. In the context of learning theory, GSA was first used in \cite{KOS:08}  where it was shown that concepts with bounded GSA admit efficient learning algorithms under Gaussian marginals. Since then, GSA has played a significant role as a complexity measure in learning theory and related fields; see, e.g.,~\cite{Kane11, Neeman14, KTZ19, DeMN21}.  
\begin{definition}[Low-Dimensional, Bounded Surface Area Concepts]
\label{def:bounded-surface-area-concepts}
For $k \in \mathbb N$ and $\Gamma > 0$,  a concept $f:\R^d \mapsto \{\pm 1\}$ belongs in the class $\mathcal{F}(k, \Gamma)$ if:
\begin{enumerate}
\item There exists a subspace $U$ of dimension at most $k$ such that $f(\x) = f(\proj_U(\x))$.
\item The Gaussian Surface Area of $f$, $\Gamma(f)$ is at most $\Gamma$.
\item For every $\vec t \in \R^d$ and $r>0$, the function $f(r \x + \vec t) \in \mathcal F(k, \Gamma)$.
\end{enumerate}

\end{definition}
\begin{remark}
(1) While we are using GSA as a complexity measure, we stress that we do {\bf not} assume that the $\x$-marginal distribution is Gaussian.  
(2) The invariance under scaling and translation (the third property of \Cref{def:bounded-surface-area-concepts}) is a mild technical assumption that is satisfied by all classes that we have discussed so far (halfspaces and functions of halfspaces, ptfs, etc.), see also \Cref{lem:gsa_bounds}.
\end{remark}
We note that halfspaces belong in $\mathcal F(1, O(1))$, intersections of $k$ halfspaces in $ \mathcal F(k, O(\sqrt{\log k}))$,
and $k$-dimensional polynomial threshold functions of degree $\ell$ in $\mathcal F(k, O(\ell))$.
Moreover, \Cref{def:bounded-surface-area-concepts} also contains non-parametric classes: 
for example, $\mathcal F(k, O(k^{1/4}))$ includes all convex bodies in $k$ dimensions,
see \Cref{lem:gsa_bounds}.
We remark that low-dimensional functions similar to those in \Cref{def:bounded-surface-area-concepts} are also referred to (usually when the functions are real-valued) as Multi-index Models (MiMs) ---  a common modeling assumption to avoid the curse of dimensionality in statistics~\cite{Friedman:1980tu, Huber85-pp, Li91, HL93,xia2002adaptive, Xia08}.

\subsubsection{Main Results: Smoothed Agnostic Learning under Concentration}

We show that we can efficiently learn assuming only concentration properties for the $\x$-marginal.  More precisely, we assume that the distribution has 
sub-gaussian tails, i.e., for every unit direction $\vec v$ it holds $\pr_{\x \sim \D_\x}[|\vec v \cdot \x| \geq t] \leq \exp(-\Omega(t^2))$. 
\begin{theorem}[Sub-Gaussian -- Informal, see also \Cref{thm:smooth_learning_subgaussian-main}]
\label{thm:intro-subgaussian}
    Let $D$ be a distribution on $\R^{d}\times \{\pm 1\}$ with sub-gaussian $\x$-marginal.
    There exists an algorithm that learns the class $\mathcal F(k, \Gamma)$ in 
    the $\sigma$-smoothed setting with 
    $N = d^{\poly(\frac{k\Gamma}{\sigma\epsilon})}\log(\frac{1}{\delta}) $ samples and $\poly(d,N)$ runtime.
\end{theorem}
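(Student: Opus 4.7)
The plan is to reduce to degree-$D$ $L_1$-polynomial regression in $\R^d$, where $D = \poly(k\Gamma/(\sigma\eps))$. For any concept $c\in\F$, define the smoothed proxy $F^c_\sigma(\x)\coloneqq \Ezn[c(\x+\sigma\z)]\in[-1,1]$. A direct computation using $y\in\{\pm1\}$ gives $\E_{(\x,y)}[|F^c_\sigma(\x)-y|] = 2\,\E_{\x,y,\z}[\ind\{c(\x+\sigma\z)\neq y\}]$, so the population $L_1$-loss of $F^{c^*}_\sigma$ equals $2\opt_\sigma$ for $c^*$ achieving the infimum in \Cref{equation:concept-smoothing-goal}. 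By the standard KKMS-style argument, if one can exhibit a polynomial $P^*$ of degree $D$ with $\E|P^*(\x)-F^{c^*}_\sigma(\x)|\le\eps/4$ under $\Dmarginal$, then empirical $L_1$-regression over the degree-$D$ polynomial class returns some $\hat P$ with $\E|\hat P-y|\le 2\opt_\sigma+\eps/2$. Letting $\tilde P = \clip(\hat P,-1,1)$ and brute-force selecting the best threshold $\theta^*$ on a validation set produces $\sgn(\tilde P-\theta^*)$ with 0-1 error at most $\opt_\sigma+\eps$, via the identity $\pr_{\theta\sim U[-1,1]}[\sgn(\tilde P(\x)-\theta)\neq y\mid\x,y] = |\tilde P(\x)-y|/2$.

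The crux is constructing $P^*$. Since $c^*\in\F(k,\Gamma)$ depends only on $\proj_U\x$ for some $k$-dimensional subspace $U$, so does $F^{c^*}_\sigma$, which I view as a function $\R^k\to[-1,1]$. Gaussian convolution at scale $\sigma$ makes $F^{c^*}_\sigma$ analytic: Stein-type integration-by-parts gives $|\partial^\alpha F^{c^*}_\sigma(\vu)| \le \sigma^{-|\alpha|}\sqrt{\alpha!}$ for every multi-index $\alpha$, and the GSA bound $\Gamma$ further controls the higher-order Hermite content of $c^*$, which smoothing damps by a factor $e^{-\Omega(\sigma^2|\alpha|)}$ (so GSA enters the approximation as a mild polynomial-in-$\Gamma$ factor rather than as an exponent). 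The sub-Gaussianity of $\Dmarginal$ guarantees that $\proj_U\x$ lies in a Euclidean ball of radius $R=\tilde O(\sqrt{k\log(1/\eps)})$ except on an $\eps$-mass event. A truncated Taylor/Hermite expansion of degree $D=\poly(k\Gamma/(\sigma\eps))$ then approximates $F^{c^*}_\sigma$ to $L_2(\Dmarginal)$-error $\eps/4$: the bulk is controlled by the derivative bounds and concentration, while the sub-Gaussian tail contributes negligibly since its exponentially small mass dominates the polynomial growth of $P^*$. Crucially, a polynomial in $\proj_U\x$ of degree $D$ is a polynomial in $\x$ of degree $D$, so $P^*$ lies in the full $d$-variable polynomial class and the algorithm need not know $U$.

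The remaining analysis is standard. The degree-$D$ polynomial class in $\R^d$ is linear of dimension $\binom{d+D}{D}\le d^{O(D)}$, and sub-Gaussian moment bounds on $\Dmarginal$ imply that every $P$ in this class has well-controlled $\E[P(\x)^2]$. A standard covering/Rademacher argument yields uniform convergence of empirical to population $L_1$-risk within $\eps/8$ using $N = d^{O(D)}\log(1/\delta)/\eps^2$ samples; since $L_1$-regression is a linear program, the overall runtime is $\poly(N,d)$. The main obstacle is the polynomial approximation step of the previous paragraph: one has to simultaneously exploit (i) the smoothing scale $\sigma$ to gain analyticity and derivative control, (ii) the low intrinsic dimension $k$ to keep the Taylor/Hermite expansion effectively $k$-variate, and (iii) the GSA bound $\Gamma$ to tighten the degree to $\poly(k\Gamma/(\sigma\eps))$ rather than the worse bound one would get from Lipschitzness alone. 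Extending a Hermite-based approximation that is natural under Gaussian inputs to the sub-Gaussian regime here is where I expect the bulk of the technical work to lie.
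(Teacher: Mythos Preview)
Your algorithmic shell is the same as the paper's: $L_1$ polynomial regression followed by randomized thresholding. Your identity $\E_{(\x,y)}[|F^c_\sigma(\x)-y|]=2\,\err_\sigma(c)$ is correct and is a clean way to phrase the reduction. The substantive divergence is in how the approximating polynomial is built, and here your sketch contains a real gap.

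Your plan is to Taylor/Hermite-expand $F^{c^*}_\sigma(\vu)=\E_\z[c^*(\vu+\sigma\z)]$ using the Stein-type bound $|\partial^\alpha F^{c^*}_\sigma|\le \sigma^{-|\alpha|}\sqrt{\alpha!}$. That bound is correct, but it does \emph{not} yield an $L_1$-convergent polynomial approximation under a sub-Gaussian marginal once $\sigma$ is small. Already in one dimension with $c^*=\sgn$, one has $F^{c^*}_\sigma(u)=\erf(u/(\sigma\sqrt{2}))$; the degree-$(2n{+}1)$ Taylor term has $L_1(N(0,1))$ norm on the order of $(2/\sigma^2)^n$, which \emph{diverges} for $\sigma<\sqrt{2}$. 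So the ``truncated Taylor plus sub-Gaussian tail'' argument cannot close: the feedback between the degree $D$ (needed to control the bulk on $\|\vu\|\le R$) and the radius $R$ (needed so the tail kills $|P^*|\lesssim(\|\vu\|/\sigma)^D$) does not stabilize. Relatedly, your account of where $\Gamma$ enters (``smoothing damps Hermite content by $e^{-\Omega(\sigma^2|\alpha|)}$'') conflates Gaussian convolution with the Ornstein--Uhlenbeck semigroup; plain convolution has no such spectral contraction, and your derivative bounds do not reference $\Gamma$ at all.

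The paper circumvents exactly this obstruction by \emph{not} approximating $F^{c^*}_\sigma$ via its own Taylor series. Instead it swaps the roles of $\x$ and $\z$: for fixed $\x$, it replaces $f_\x(\z)=f(\x+\z)$ by $T_\rho f_\x(\z)$ and uses the Ledoux--Pisier inequality $\|T_\rho f_\x-f_\x\|_{L_1(\Gauss)}\le O(\sqrt{\rho}\,\Gamma)$, which is precisely where the Gaussian surface area enters and sets $\rho=\Theta(\eps^2/\Gamma^2)$. It then rewrites $T_\rho f_\x(\z)=\E_{\vec s\sim Q}\bigl[f(\sqrt{1-\rho^2}\z+\rho\vec s)\cdot \Gauss(\vec s;\x/\rho,\vec I)/Q(\vec s)\bigr]$, so that $\x$ appears only inside the \emph{density ratio}, not inside $f$. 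One now needs only a polynomial (in $\x$) approximation of this ratio; for bounded marginals $Q=\Gauss$ works, while for sub-Gaussian (more generally strictly sub-exponential) marginals $Q$ is taken with heavier, Laplace-type tails so that the ratio has finite moments. This explicit construction yields polynomials with controlled coefficients and makes the sub-Gaussian tail integrals go through, giving degree $\poly(k\Gamma/(\sigma\eps))$. Note that averaging the paper's family over $\z$ does produce a single polynomial approximating your $F^{c^*}_\sigma$ in $L_1(\Dmarginal)$, so your target is the right one; it is only your proposed construction of the approximant that fails.
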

We remark that our result works even under weaker tail assumptions: in particular it suffices that the tails are strictly sub-exponential, see \Cref{def:strict_subexp} and \Cref{thm:smooth_learning_subgaussian-main}.  

We observe that the runtime of \Cref{thm:intro-subgaussian} for learning a single halfspace (where $k = 1$) in the smoothed setting qualitatively matches the best known runtime for agnostic learning under Gaussian marginals.  For concepts with bounded Gaussian surface area, in \cite{KOS:08}, under the assumption that the $\x$-marginal is Gaussian, an algorithm with $d^{\poly(\Gamma/\eps)}$ runtime is given.  When the intrinsic dimension $k = O(1)$, our results in the smoothed setting achieve the same runtime and only require sub-gaussian tails.   By a simple reduction to learning parities on the hypercube, see \Cref{thm:sq-noise-variance}, we obtain a Statistical Query (SQ) lower bound of 
$d^{\Omega(\min(k, \Gamma))}$ for learning over sub-gaussian marginals, showing that in some cases the exponential dependency on the surface area or the intrinsic dimension to learn $\mathcal{F}(k, \Gamma)$ is unavoidable.

Our second result shows that we can significantly improve the runtime when the marginals are bounded.
Bounded marginals is a common assumption especially since it is often used together with geometric margins
assumptions.  At a high-level, in our smoothed learning setting having bounded $\|\vec x\|_2$ 
means that the ratio $\|\vec x\|_2/\sigma$ is more well behaved in the sense that the adversary, who picks $\x$, cannot overpower the smoothing noise $\sigma$ (see \Cref{definition:concept-smoothing}). Observe that 
if the adversary is allowed to select $\x$ with arbitrarily large norm, the effect of Gaussian noise in \Cref{definition:concept-smoothing} is negligible and we return to the standard agnostic setting.

\begin{theorem}[Bounded -- Informal, see also \Cref{thm:random_proj_bounded_main}]
\label{thm:intro-bounded}
    Let $D$ be a distribution on $\R^d\times \{\pm 1\}$ with $\x$-marginal bounded in the unit ball.  There exists an algorithm that learns the class $\mathcal{F}(k, \Gamma)$ in
    the $\sigma$-smoothed setting with $N = k^{\poly(\frac{\Gamma}{\epsilon \sigma})}\log(\frac{1}{\delta})$ samples and $\poly(d,N)$ runtime.
\end{theorem}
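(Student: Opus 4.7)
The plan is to combine a random Gaussian projection with polynomial regression in the low-dimensional projected space. Concretely, I would draw $\Pi=\tfrac{1}{\sqrt m}G$ with $G\in\R^{m\times d}$ having i.i.d.\ standard Gaussian entries for a choice $m=\poly(k,\Gamma/(\sigma\epsilon))$, replace each sample $(\vec x,y)$ by $(\Pi\vec x,y)$, and run $L_1$ polynomial regression in $\R^m$ over the $\binom{m+\ell}{\ell}\leq m^{O(\ell)}$ monomials of degree at most $\ell=\poly(\Gamma/(\sigma\epsilon))$, outputting $\vec x\mapsto\sgn(p(\Pi\vec x))$ for the empirical minimizer $p$. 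With the chosen $m$, the feature count collapses to $k^{\poly(\Gamma/(\sigma\epsilon))}$, matching the target sample complexity. Since the original $\vec x$-marginal is in the unit ball, $\Pi\vec x$ is sub-Gaussian with $O(1)$-bounded norm, and standard Johnson--Lindenstrauss estimates show that $\Pi$ nearly preserves Euclidean geometry on any fixed $k$-dimensional subspace.

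The main analytic step is to show that the smoothed optimum on the projected distribution is within $O(\epsilon)$ of $\opt_\sigma$. Let $f^\star\in\mathcal{F}(k,\Gamma)$ be a minimizer witnessing $\opt_\sigma$ with relevant subspace $U$ of dimension at most $k$. Because $\Pi|_U$ is (with high probability) approximately an isometry from $U$ to $\R^m$ and because $\Pi\vec z$ for $\vec z\sim\Gauss_d$ is approximately a standard Gaussian on $\R^m$, I would construct a proxy $\tilde f\in\mathcal{F}(k,O(\Gamma))$ on $\R^m$ depending on $\Pi U$ such that
\[
\E_{\vec u\sim\Gauss_m}\bigl[\pr[\tilde f(\Pi\vec x+\sigma'\vec u)\neq y]\bigr]\ \le\ \opt_\sigma+O(\epsilon),
\]
for $\sigma'=\Theta(\sigma)$. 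The scale/translation invariance in \Cref{def:bounded-surface-area-concepts} keeps $\tilde f$ in the class, and the GSA bound makes the smoothed version $\tilde f_{\sigma'}$ smooth enough (effectively Lipschitz with constant $O(\Gamma/\sigma)$) to absorb the JL distortion. A useful extra observation is that, because $\Pi\,\mathrm{proj}_{U^\perp}\vec x$ is an independent Gaussian-like perturbation of $\Pi\,\mathrm{proj}_U\vec x$, the projection itself acts as free smoothing noise, which is naturally compatible with the smoothed-agnostic setting.

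Next, I would use a Gaussian-smoothing/Hermite-expansion argument in the spirit of \cite{KOS:08} to conclude that $\tilde f_{\sigma'}\colon\R^m\to[-1,1]$ is approximated to squared loss $\epsilon^2$ by some degree-$\ell$ polynomial on the bounded support of $\Pi\vec x$, with $\ell=\poly(\Gamma/(\sigma\epsilon))$. Standard $L_1$-regression guarantees together with uniform convergence over the monomial features then show that the empirical minimizer pulled back through $\Pi$ achieves error $\opt_\sigma+O(\epsilon)$ on the original distribution, using $N=k^{\poly(\Gamma/(\sigma\epsilon))}\log(1/\delta)$ samples; the $\poly(d,N)$ runtime just covers applying $\Pi$ to each sample and evaluating $p(\Pi\vec x)$.

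The main obstacle I expect is the proxy-concept construction and the noncommutativity between Gaussian smoothing and the random projection uniformly over the bounded support of $\vec x$. One must choose $m$ large enough so that (i) $\Pi$ acts as an approximate isometry on the unknown $k$-dimensional subspace $U$ (forcing $m\gtrsim k$ by JL), and (ii) the resulting $O(\sqrt{k/m})$ distortion, once amplified by the $O(\Gamma/\sigma)$ Lipschitz scale of $f^\star_\sigma$, is at most $O(\epsilon)$; together these force $m$ polynomial in $k$ and $\Gamma/(\sigma\epsilon)$. The GSA bound supplies exactly the smoothness needed for this absorption to go through, after which the remainder is a routine low-dimensional regression analysis.
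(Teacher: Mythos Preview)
Your high-level algorithm---random Gaussian projection followed by low-degree $L_1$ regression---matches the paper exactly, and your use of the GSA bound as yielding an $O(\Gamma/\sigma)$ Lipschitz constant for absorbing JL distortion is essentially the paper's \Cref{lem:gsa_shift}. The paper avoids constructing an explicit proxy concept on $\R^m$: it instead compares $f^\star(\x+\sigma\z)$ with $f^\star(\vec R^\top\vec R\x+\sigma\z)$ directly in $\R^d$ via that lemma, and only afterwards observes that the approximating polynomial factors through $\vec R\x$. This sidesteps the noncommutativity you flag without ever having to match $m$-dimensional and $d$-dimensional Gaussian smoothing.

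The genuine gap is the polynomial-approximation step. Appealing to ``Hermite expansion in the spirit of \cite{KOS:08}'' is not a proof: the KOS bound gives $L_2$ approximation \emph{under the Gaussian measure}, whereas here the (projected) $\x$-marginal is an arbitrary bounded distribution, and low-degree Hermite truncations need not approximate well on such supports. Producing a low-degree polynomial approximation of $\x\mapsto f^\star(\x+\sigma\z)$ in $L_1$ under a bounded but otherwise arbitrary marginal is the paper's main technical contribution (\Cref{lem: boolean_bounded_approx}), and it uses a quite different idea: treat the smoothing variable $\z$ as the input and $\x$ as a parameter, apply the Ornstein--Uhlenbeck operator $T_\rho$ to $f_\x(\z)$ (controlling the error via Ledoux--Pisier and the GSA bound), then rewrite
\[
T_\rho f_\x(\z)=\E_{\vec s\sim\Gauss}\bigl[f(\sqrt{1-\rho^2}\,\z+\rho\,\vec s)\cdot e^{-\|\x\|_2^2/(2\rho^2)+(\x/\rho)\cdot\vec s}\bigr]
\]
so that $\x$ no longer sits inside $f$, and finally Taylor-expand the density ratio as a polynomial in $\x$. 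This decoupling is neither standard nor a Hermite expansion, and it is where the degree bound $\poly(\Gamma/(\sigma\epsilon))$ actually comes from; without supplying such an argument your proposal does not establish the key approximation claim.
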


Using our theorem and bounds on the Gaussian surface area we readily obtain corollaries for specific classes.  For example, we learn efficiently intersections of $k$-halfspaces with $k^{\poly(\log k / (\sigma \eps))}$ samples and arbitrary $k$-dimensional convex bodies  with  $k^{\poly(k/(\sigma \eps) )}$ samples.

\subsubsection{Applications}
In this section we present several applications of our general smoothed learning results to standard (non-smoothed) agnostic learning settings that have been considered in the literature. 
In many cases we obtain significant improvements over the best-known results.

\paragraph{Agnostic Learning with Margin}
Our smoothed learning model is related to margin-based learning (originally defined in \cite{BS00}) because, at a high-level, it
penalizes the adversary for placing points very close to the decision boundary to create
difficult instances.  In (agnostic) learning of a class $\mathcal C$ with $\gamma$-margin 
the feature distribution is typically assumed to be bounded and the goal is to compute a classifier with error 
\begin{equation}
\label{eq:margin-opt}
\pr_{(\x,y) \sim D}[h(\x) \neq y] \leq 
\underbrace{\inf_{f \in \mathcal C}\pr_{(\x,y) \sim D}\Big[\sup_{\|\vec u\|_2 \leq \gamma} \1\{f(\x + \vec u) \neq y\} \Big]}_{\text{margin-opt}_\gamma}
+ \eps\,.
\end{equation}
We show that for any concept class with intrinsic dimension $k$, for $\sigma = \Omega(\gamma/\sqrt{k \log(1/\eps)})$, it holds
$\smoothopt \leq \text{margin-opt}_\gamma + \eps$. Therefore, any learning algorithm for 
the smoothed learning setting can be directly used to learn in the $\gamma$-margin setting.
For the special case of intersections of $k$-halfspaces we show that the gap between
$\text{margin-opt}_{\gamma}$ and $\smoothopt$ is $\eps$ by choosing 
$\sigma = \Omega(\gamma/\sqrt{\log k \log(1/\eps)})$. Using this fact and \Cref{thm:intro-bounded} we obtain the following corollary.

\begin{corollary}[Intersections of $k$-halfspaces with $\gamma$-margin]
\label{inf:margin-intersections} 
Let $D$ be a distribution on $\R^d \times \{ \pm 1\}$ whose $\x$-marginal is bounded in the unit ball and let $\mathcal C$ be the class of intersections of $k$-halfspaces.
There exists an algorithm that draws $N = k^{\poly(\log k/\gamma \eps)}\log(\frac{1}{\delta})$ samples, runs in $\poly(d, N)$ time and computes a hypothesis $h$ such that, with probability at least $1-\delta$, 
it holds $\pr_{(\x,y)\sim D}[h(\x) \neq y] \leq \mathrm{margin}$-$\mathrm{opt}_\gamma + \eps$.
\end{corollary}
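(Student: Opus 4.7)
The plan is to derive the corollary directly from \Cref{thm:intro-bounded} by choosing the smoothing parameter $\sigma$ so that the smoothed optimum $\opt_\sigma$ is within $\eps$ of $\text{margin-opt}_\gamma$ from \eqref{eq:margin-opt}. Since intersections of $k$ halfspaces lie in $\mathcal{F}(k, O(\sqrt{\log k}))$ (noted right after \Cref{def:bounded-surface-area-concepts}), \Cref{thm:intro-bounded} already yields a sample complexity of $k^{\poly(\sqrt{\log k}/(\eps\sigma))}\log(1/\delta)$, and the claimed bound $k^{\poly(\log k/(\gamma\eps))}\log(1/\delta)$ will follow once $\sigma$ is substituted.

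First I would compare the two optima. Set $\sigma = c\gamma/\sqrt{\log(k/\eps)}$ for a small absolute constant $c$ and fix any intersection $f = \bigwedge_{i=1}^k h_i$ of halfspaces with unit normals $\vec w_i$ and offsets $b_i$. Call a point $\x$ \emph{safe} for $(f,y)$ if $f(\x+\vec u)=y$ for every $\|\vec u\|_2\leq \gamma$; the non-safe points contribute exactly their mass to the expression defining $\text{margin-opt}_\gamma$ for $f$, so it suffices to bound the smoothed flip probability on safe points. If $y=+1$, then $\x\in\bigcap_i h_i$ with geometric margin at least $\gamma$ from each bounding hyperplane, so by a union bound and the Gaussian tail $\pr_{\z\sim\Gauss}[|\vec w_i\cdot\z|\geq \gamma/\sigma]\leq 2e^{-\gamma^2/(2\sigma^2)}$ we obtain
\[
\pr_{\z\sim\Gauss}[f(\x+\sigma\z)\neq y] \;\leq\; 2k\,e^{-\gamma^2/(2\sigma^2)} \;\leq\; \eps.
\]
If $y=-1$, the safe point $\x$ lies at $\ell_2$-distance at least $\gamma$ from the convex body $\bigcap_i h_i$, and a supporting-hyperplane argument at the nearest point extracts a \emph{single} halfspace $h_j$ with $|\vec w_j\cdot\x - b_j|\geq \gamma$; since flipping the label requires the perturbation to enter this one halfspace, the flip probability is at most $2e^{-\gamma^2/(2\sigma^2)}\leq \eps$ without the $k$ factor. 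Taking the infimum over $f$ and integrating over $(\x,y)\sim D$ yields $\opt_\sigma \leq \text{margin-opt}_\gamma + \eps$.

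Next I would invoke \Cref{thm:intro-bounded} on $\mathcal{F}(k, O(\sqrt{\log k}))$ with accuracy $\eps/2$ and the above $\sigma$. It returns a hypothesis $h$ satisfying $\pr[h(\x)\neq y] \leq \opt_\sigma + \eps/2 \leq \text{margin-opt}_\gamma + \eps$, as desired. Substituting $\sigma = \Theta(\gamma/\sqrt{\log(k/\eps)})$ into $k^{\poly(\sqrt{\log k}/(\eps\sigma))}\log(1/\delta)$ collapses to $k^{\poly(\log k/(\gamma\eps))}\log(1/\delta)$, and the runtime is $\poly(d,N)$.

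The main obstacle is the per-halfspace margin reduction for negatively labeled safe points. Translating the concept-level robustness in \eqref{eq:margin-opt} into a single-halfspace margin is what avoids a spurious factor in $\sigma$ and produces the sharper $\sqrt{\log k}$ (rather than $\sqrt{k}$) dependence that the text advertises for intersections, as opposed to the weaker bound available for arbitrary concepts of intrinsic dimension $k$. Once this geometric fact is granted, the remainder is a direct plug-in into \Cref{thm:intro-bounded} together with the GSA bound for intersections of halfspaces.
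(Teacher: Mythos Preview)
Your proof follows the paper's strategy exactly: choose $\sigma=\Theta(\gamma/\sqrt{\log(k/\eps)})$, bound the Gaussian sensitivity on margin-safe points via \Cref{proposition:margin-invariance}, feed the resulting $\opt_\sigma\le\text{margin-opt}_\gamma+\eps$ into \Cref{thm:intro-bounded}, and use $\Gamma=O(\sqrt{\log k})$. The $y=+1$ case is identical to the paper.

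One slip to flag in the $y=-1$ case: the supporting hyperplane at $\pi_C(\x)$ is in general \emph{not} one of the defining halfspaces $h_j$. For example, with $C=\{x_1\ge 0\}\cap\{x_2\ge 0\}$ and $\x=(-\gamma/\sqrt{2},-\gamma/\sqrt{2})$, the distance from $\x$ to $C$ is $\gamma$ but each facet margin $|\vec w_j\cdot\x-b_j|$ is only $\gamma/\sqrt{2}$. Your argument still goes through if you use the actual supporting halfspace at $\pi_C(\x)$ (with normal $(\x-\pi_C(\x))/\|\x-\pi_C(\x)\|$), since $C$ lies inside it and $\x$ is at distance $\ge\gamma$ from it. The paper sidesteps this issue by using the convex-projection inequality $\|\x-\pi_C(\x)\|_2^2+\|\pi_C(\x)-\vec z\|_2^2\le\|\x-\vec z\|_2^2$ directly on the Gaussian integral, which yields the same $e^{-\gamma^2/(2\sigma^2)}$ bound without naming any hyperplane.
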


We remark that, prior to our work, the best known runtime for learning intersections of
$k$-halfspaces with $\gamma$-margin in the agnostic setting from \cite{ariaga-vempala}
was exponential in the number of halfspaces that is $k^{\poly(\frac{k}{\gamma \eps})}$.  
In the noiseless setting, better runtimes are known \cite{klivans-servedio-margin,Gottlieb_margin,goel2019learning}.\footnote{\cite{goel2019learning} also considers the agnostic case but their error guarantees are of the form $\sqrt{\mathrm{margin}\text{-}\mathrm{opt}_\gamma} + \eps$ as opposed to the $\mathrm{margin}\text{-}\mathrm{opt}_\gamma+\epsilon$ guarantees that we achieve.}
Beyond intersections of halfspaces with $\gamma$-margin, we obtain new results for other 
classes such as polynomial threshold functions and general convex sets, see \Cref{sec:margin} 
for more details.

\paragraph{Agnostic Learning under Smoothed Distributions}
We conclude with some applications of our framework to the (different) scenario where the {\em marginal distribution itself is smoothed}.
For example, in \cite{Kane2013LearningHalfspacesLogConcave} sub-Gaussian marginals are smoothed by additive 
Gaussian noise; i.e., for some sub-Gaussian distribution $D$ a sample from 
the smoothed distribution $D_\tau$ is generated as $\x + \tau \vec z$ for $\x \sim D$
and $\vec z \sim \normal$.  
We remind the reader that our smoothed learning model of \Cref{definition:concept-smoothing} does not try to make the $\x$-marginal more benign by a Gaussian convolution as is done in smoothed distribution learning settings \cite{kalai2008decision,kalai2009learning,Kane2013LearningHalfspacesLogConcave}.  In our model, the learner observes i.i.d.\ examples from the original marginal $D_\x$ and not 
from the convolution $D_\x + \sigma \normal$.  
Perhaps surprisingly, we show that \Cref{thm:intro-subgaussian} can be used to significantly improve the results of \cite{Kane2013LearningHalfspacesLogConcave} and other results for learning with smoothed marginals:

\begin{corollary}[Informal, see also \Cref{corollary: agnostic_smoothed_subexp}]
\label{intro-cor:smoothed-distributions}
Let $D_\tau$ be a smoothed sub-Gaussian distribution.   
There exists an algorithm that agnostically learns the class $\mathcal F(k, \Gamma)$ 
with $N = d^{\poly(\frac{k\Gamma}{\tau \epsilon})}\log(\frac{1}{\delta})$ samples and $\poly(d,N)$ runtime.
\end{corollary}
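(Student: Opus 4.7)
The plan is to apply \Cref{thm:intro-subgaussian} as a black box to samples from $D_\tau$ (whose $\x$-marginal is a convolution of a sub-Gaussian distribution with a Gaussian, hence itself sub-Gaussian) with a carefully chosen concept-smoothing parameter $\sigma \ll \tau$, and then argue that the resulting guarantee against the concept-smoothed optimum of $D_\tau$ is essentially a guarantee against the standard (unsmoothed) agnostic optimum of $D_\tau$.

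The first observation is that, writing $\tilde\x = \x + \tau \z$ for $\x \sim D$ and $\z \sim \Gauss$, and using that $\tau\z + \sigma \z' \sim \sqrt{\tau^2+\sigma^2}\,\Gauss$, for any $f$ we have
\begin{equation*}
\E_{\z'}\pr_{D_\tau}[f(\tilde\x + \sigma \z') \neq y] \;=\; \E_{\z''}\pr_{D}[f(\x + \sqrt{\tau^2+\sigma^2}\,\z'') \neq y].
\end{equation*}
Taking infima gives $\opt_\sigma^{D_\tau} = \opt_{\sqrt{\tau^2+\sigma^2}}^{D}$, and in particular the standard agnostic optimum under $D_\tau$ that we must compete against equals $\opt_0^{D_\tau} = \opt_\tau^{D}$.

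The main technical step is a Lipschitz-in-smoothing estimate of the form $|\opt_{\sqrt{\tau^2+\sigma^2}}^{D} - \opt_{\tau}^{D}| \leq O(\sigma\Gamma/\tau)$. For any fixed $f \in \F(k,\Gamma)$, coupling $\sqrt{\tau^2+\sigma^2}\,\z$ as $\tau\z_1 + \sigma \z_2$ with independent $\z_1,\z_2 \sim \Gauss$, the difference of errors is bounded by
\begin{equation*}
\pr_{\x \sim D,\ \z_1, \z_2 \sim \Gauss}[f(\x + \tau\z_1) \neq f(\x + \tau\z_1 + \sigma \z_2)].
\end{equation*}
Conditioning on $\x$, this becomes $\pr_{\bu, \z_2 \sim \Gauss}[g(\bu) \neq g(\bu + (\sigma/\tau)\z_2)]$ where $g(\vec v) \eqdef f(\x + \tau \vec v)$. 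By the scaling-and-translation closure (property 3 of \Cref{def:bounded-surface-area-concepts}), $g \in \F(k, \Gamma)$ so $\Gamma(g) \leq \Gamma$, and the standard Gaussian noise-sensitivity bound for bounded-GSA functions upper bounds this probability by $O((\sigma/\tau)\Gamma)$. Averaging over $\x$ and taking the infimum over $f$ yields the claimed Lipschitz bound.

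Setting $\sigma = \Theta(\eps\tau/\Gamma)$ then makes $\opt_\sigma^{D_\tau} \leq \opt_0^{D_\tau} + \eps/2$; applying \Cref{thm:intro-subgaussian} to $D_\tau$ with this $\sigma$ yields a hypothesis $h$ satisfying $\pr_{D_\tau}[h(\tilde\x) \neq y] \leq \opt_\sigma^{D_\tau} + \eps/2 \leq \opt_0^{D_\tau} + \eps$ using $N = d^{\poly(k\Gamma/(\sigma\eps))}\log(1/\delta) = d^{\poly(k\Gamma/(\tau\eps))}\log(1/\delta)$ samples (the extra polynomial factors from $\sigma = \Theta(\eps\tau/\Gamma)$ are absorbed by the $\poly(\cdot)$ in the exponent). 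The hard part is the Lipschitz estimate above: although $D$ need not be Gaussian, the $\tau$-Gaussian smoothing inside $D_\tau$ produces a local Gaussian structure at every point $\x$, and the closure of $\F(k,\Gamma)$ under scaling and translation is exactly what lets us reduce to a standard Gaussian noise-sensitivity question for a function with GSA at most $\Gamma$.
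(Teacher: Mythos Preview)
Your overall strategy matches the paper's: apply \Cref{thm:intro-subgaussian} to the smoothed distribution with a small $\sigma$, and close the gap between $\opt_\sigma$ and the standard agnostic optimum by a sensitivity bound that exploits the built-in Gaussian $\tau$-smoothing together with the closure of $\F(k,\Gamma)$ under translation and scaling. Two points need fixing.

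First, the identity $\opt_\sigma^{D_\tau} = \opt_{\sqrt{\tau^2+\sigma^2}}^{D}$ tacitly assumes that the label $y$ is generated from the \emph{unsmoothed} $\x$, so that $y$ is independent of the smoothing noise $\z$ and you can merge $\tau\z + \sigma\z'$ into one Gaussian independent of $y$. In the agnostic setting of the corollary, $y$ may depend arbitrarily on $\tilde\x = \x + \tau\z$ and hence on $\z$; in that case there is no joint distribution ``$D$'' on $(\x,y)$ for which your identity holds (e.g., take $D'$ a point mass and $y=\sign(\tilde\x_1)$). The paper sidesteps this by bounding $\err_\sigma(f,D_\tau)-\err_0(f,D_\tau)\le \E_{\tilde\x\sim D'_\tau}\pr_{\z'}[f(\tilde\x+\sigma\z')\neq f(\tilde\x)]$, a quantity that depends only on the marginal, and \emph{then} expanding $\tilde\x=\x+\tau\z_1$ --- which is exactly your sensitivity computation. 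So the detour through $\opt_{\cdot}^D$ is both unnecessary and incorrect, but your core computation is the right one.

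Second, your claimed bound $O((\sigma/\tau)\Gamma)$ is missing a factor of $\sqrt{k}$: after reducing to $g\in\F(k,\Gamma)$ on the relevant $k$-dimensional subspace, the shift lemma (\Cref{lem:gsa_shift} in the paper) gives $\E_{\bu}[|g(\bu)-g(\bu+s\z_2)|]\le O(\Gamma s\|\z_2\|_2)$, and averaging over $\z_2\sim\Gauss_k$ contributes $\E\|\z_2\|_2\le\sqrt{k}$. The paper accordingly takes $\sigma=\Theta(\eps\tau/(\Gamma\sqrt{k}))$; since $k$ already sits inside the $\poly(\cdot)$ in the exponent, this does not change the stated sample complexity.
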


We remark that \Cref{intro-cor:smoothed-distributions} (i) generalizes the results of 
\cite{Kane2013LearningHalfspacesLogConcave} to any class of $k$-dimensional concepts with bounded surface area and (ii) yields an exponential improvement over \cite{Kane2013LearningHalfspacesLogConcave} where the runtime is doubly exponenential in $k$, i.e., $d^{\log \log(k/(\tau/\eps))^{\wt{O}(k)} \poly(1/(\tau \eps))}$.

\paragraph{Agnostic Learning under Anti-concentration}
Finally, another important direction considered in the literature is making structural
assumptions such as anti-concentration over the feature distribution.
In particular, in \cite{gollakota_testable} apart from sub-gaussian tails the distribution
is assumed to satisfy anti-concentration over slabs, i.e., for any unit vector $\vec v$ and interval $I$ it holds that $\pr_{\x \sim D_\x}[\vec v \cdot \x \in I] \leq O(|I|)$, where $|I|$
is the length of the interval.  In \cite{gollakota_testable} an algorithm for learning
any function of a \emph{constant number} of halfspaces is given with runtime $d^{\poly(1/\eps)}$.
Using \Cref{thm:intro-subgaussian} we are able obtain efficient algorithms for agnostic learning under concentration and anti-concentration for functions of any number of halfspaces. 

\begin{corollary}[Informal, see also \Cref{corollary: agnostic_subexp_functions_halfspaces}]
Let $D$ be a distribution on $\R^d \times \{\pm 1\}$ whose $\x$-marginal is sub-Gaussian
and anti-concentrated.
There exists an algorithm that agnostically learns arbitrary functions of $k$ halfspaces
 with $N = d^{\poly(\frac{k}{\epsilon})}\log(\frac{1}{\delta})$ samples and $\poly(d,N)$ runtime.
\end{corollary}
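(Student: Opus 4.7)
The plan is to deduce the corollary from \Cref{thm:intro-subgaussian} by showing that, under anti-concentration of the marginal, the smoothed optimum over functions of $k$ halfspaces is close to the true agnostic optimum. First I would verify that arbitrary Boolean functions of $k$ halfspaces belong to $\mathcal{F}(k,\Gamma)$ for some $\Gamma=\poly(k)$: such a function depends only on the span of its $k$ normal vectors (intrinsic dimension $\leq k$), the class is closed under scaling and translation of the input, and its Gaussian surface area is bounded by the sum of the GSAs of the $k$ defining hyperplanes, giving $\Gamma \leq O(k)$ (sharper bounds are available through \Cref{lem:gsa_bounds}, but $\poly(k)$ suffices).

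Next I would bound the gap $\opt_\sigma - \opt$ by $O(k\sigma)$. Fix any $f\in\mathcal{F}$ determined by unit normals $\vec w_1,\ldots,\vec w_k$. The event $f(\x+\sigma\vec z)\neq f(\x)$ forces $\sgn(\vec w_i\cdot(\x+\sigma\vec z))\neq\sgn(\vec w_i\cdot\x)$ for some $i\in[k]$, which in turn requires $|\vec w_i\cdot\x| \leq \sigma\,|\vec w_i\cdot\vec z|$. Averaging first over $\vec z\sim\Gauss$ (so that $\E[|\vec w_i\cdot\vec z|]=O(1)$) and then invoking the slab anti-concentration bound $\pr_{\x\sim\Dmarginal}[\vec w_i\cdot\x\in I]\leq O(|I|)$ together with a union bound over $i\in[k]$ gives
\begin{equation*}
\E_{\vec z\sim\Gauss}\Big[\pr_{\x\sim\Dmarginal}[f(\x+\sigma\vec z)\neq f(\x)]\Big] \;\leq\; O(k\sigma).
\end{equation*}
Combining with the label $y$ via a triangle inequality yields $\opt_\sigma \leq \opt + O(k\sigma)$, uniformly in $f$, so the bound survives passing to the infimum.

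Finally I would set $\sigma = \Theta(\eps/k)$ so that the gap is at most $\eps/2$ and apply \Cref{thm:intro-subgaussian} to $\mathcal{F}(k,\Gamma)$ with smoothed-error target $\eps/2$, obtaining a hypothesis $h$ satisfying
\begin{equation*}
\pr_{(\x,y)\sim\Djoint}[h(\x)\neq y] \;\leq\; \opt_\sigma + \eps/2 \;\leq\; \opt + \eps,
\end{equation*}
at the cost of $N = d^{\poly(k\Gamma/(\sigma\eps))}\log(1/\delta) = d^{\poly(k/\eps)}\log(1/\delta)$ samples and $\poly(d,N)$ runtime, which matches the claim.

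The main obstacle I anticipate is ensuring that the $O(k\sigma)$ estimate is genuinely uniform in the adversarially chosen $f$ attaining $\opt$. The reduction goes through because anti-concentration is a property of the $\x$-marginal alone and holds for \emph{every} unit direction, in particular for the $k$ normals defining $f$; the expectation over $\vec z$ decouples the smoothing from $(\x,y)$, so the union bound is valid for each fixed $f$ and hence for the infimum. The only remaining ingredient is the GSA bound for arbitrary Boolean combinations of $k$ halfspaces (rather than just intersections), which \Cref{lem:gsa_bounds} provides at the required $\poly(k)$ level.
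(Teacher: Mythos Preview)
Your proposal is correct and follows the same high-level reduction as the paper: verify that arbitrary Boolean functions of $k$ halfspaces lie in $\mathcal{F}(k,O(k))$, bound $\opt_\sigma-\opt$ by the expected Gaussian sensitivity, use anti-concentration to control that sensitivity, choose $\sigma=\Theta(\eps/k)$, and invoke the smoothed learner of \Cref{thm:intro-subgaussian}.

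The one genuine difference is in how the expected sensitivity is bounded. The paper (\Cref{lem:expected_sens_anticonc_halfspace}) introduces an auxiliary margin parameter $\epsilon'$, splits into the events ``$\x$ is within $\epsilon'$ of some hyperplane'' (controlled by anti-concentration) and ``$\x$ is $\epsilon'$-far from all hyperplanes but the Gaussian perturbation still flips a sign'' (controlled by a Gaussian tail bound), obtaining $k(2M\epsilon'+e^{-(\epsilon')^2/(2\sigma^2)})$, and then optimizes over $\epsilon'$. Your argument avoids the split entirely: for each fixed $\vec z$ you apply anti-concentration to the $\vec z$-dependent slab $\{\x:|\vec w_i\cdot\x-b_i|\le \sigma|\vec w_i\cdot\vec z|\}$ and then average over $\vec z$, getting the cleaner bound $O(Mk\sigma)$ directly. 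Both yield $\sigma=\Theta(\eps/(Mk))$ up to logarithmic factors and hence the same final sample complexity; your route is just a bit more economical. One cosmetic point: your write-up drops the thresholds $b_i$ (writing $|\vec w_i\cdot\x|$ rather than $|\vec w_i\cdot\x-b_i|$), but since anti-concentration applies to any interval this is harmless.
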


\subsection{Technical Overview}
\label{sec:techniques}

Our main plan is to use low-degree polynomials that can be efficiently optimized via $L_1$-regression, similar to the works of \cite{KKMS:05, KOS:08}. In general, in the agnostic setting, one has to construct a polynomial $p(\mathbf{x})$ that achieves almost optimal $L_1$ error with the label $y$. To do this, we have to prove that for every concept $f$ in the class, there exists a low-degree polynomial $p$ such that ${\E}_{\mathbf{x} \sim D_{\mathbf{x}}} [|p(\mathbf{x}) - f(\mathbf{x})|] \leq \epsilon$.

In the distribution-specific setting, i.e., when $\mathbf{x}$ comes from the Gaussian or the uniform on the hypercube, it is known that such a polynomial of degree $\text{poly}(\Gamma/\epsilon)$ exists \cite{KOS:08}. However, without assumptions on $D$, low-degree polynomial approximations of $f$ do not exist even when the $f$ is a simple concept such as a linear threshold function.

\paragraph{Polynomial Approximation in the Low-Dimensional Space}

Our high-level plan is to treat the smoothed learning setting as a non-worst-case approximation setting and show that given some $f$, with high probability over the smoothing $\mathbf{z}$, the translated concept $\mathbf{x} \mapsto f(\mathbf{x} + \sigma \mathbf{z})$ will have a low-degree polynomial approximation. For simplicity, in this sketch, we will assume that $\sigma = 1$. The general case can be found in the full proof; see \Cref{subsec:3.1} and also \Cref{rem:scaling-invariance}. We will construct a family of polynomials $p_{\mathbf{z}}(\mathbf{x})$ such that their expected $L_1$ error over the smoothing $\mathbf{z}$ is small:

\[
\E_{\mathbf{z} \sim \mathcal{N}}\Big[ \E_{\mathbf{x} \sim D_{\mathbf{x}}} [|p_{\mathbf{z}}(\mathbf{x}) - f(\mathbf{x} + \mathbf{z})|] \Big] \leq \epsilon\,.
\]

We observe that since every $f(\mathbf{x})$ depends only on a $k$-dimensional space $U$, the projection of the input $\mathbf{x}$ down to $U$ is just a linear transformation that does not affect the degree of polynomial approximation. Therefore, from now on, we may assume $\mathbf{x}$ lies in the $k$-dimensional space $U$ and construct our polynomial approximation there.

\paragraph{Duality Between Input and Smoothing Parameter}

Our first step is to think of the smoothing random variable as the actual input to the function and treat $\mathbf{x}$ as a fixed parameter. Therefore, as a function of $\mathbf{z}$, we now have to approximate the translated function $f_{\mathbf{x}}(\mathbf{z}) = f(\mathbf{x} + \mathbf{z})$. Even though $\mathbf{z}$ is not available to the learner, when we think of $f_{\mathbf{x}}(\mathbf{z})$ as a function of the Gaussian noise random variable, we can utilize strong approximation results known under the Gaussian. In particular, we can replace the boolean function $f_{\mathbf{x}}(\vec{z})$ by its smooth approximation given by the Ornstein-Uhlenbeck operator defined as $T_{\rho} f_{\mathbf{x}}(\mathbf{z})=\E_{\vec{s} \sim \mathcal{N}}[f_{\mathbf{x}}(\sqrt{1-\rho^2} \cdot \mathbf{z} + \rho ~ \vec{s} )]$.

Using the fact that the concept class of \Cref{def:bounded-surface-area-concepts} is closed under translation, we have that, since $\Gamma(f) \leq \Gamma$, the GSA of the translated concept $f_{\mathbf{x}}(\mathbf{z})$ as a function of $\mathbf{z}$ is also at most $\Gamma$. Using this fact and a result from Ledoux and Pisier (see \Cref{prop:ou_boolean}) that bounds the $L_1$ approximation error of the Ornstein-Uhlenbeck noise operator, we obtain that with $\rho = \text{poly}(\epsilon/\Gamma)$ it holds that 

\[
\E_{\mathbf{z} \sim \mathcal{N}}[|T_\rho f_{\mathbf{x}}(\mathbf{z}) - f_{\mathbf{x}}(\mathbf{z})|] \leq \epsilon \,.
\]

So far, we replaced $f_{\mathbf{x}}$ with $T_\rho f_{\mathbf{x}}$, but have we made progress? We observe that $T_\rho f_{\mathbf{x}}(\mathbf{z}) = \E_{\vec{s} \sim \mathcal{N}}[f(\mathbf{x} + \sqrt{1-\rho^2} \mathbf{z} + \rho \vec{s})]$. The variable $\mathbf{x}$, which is supposed to be input of the polynomial, is still in the function $f$. Without distributional assumptions on $D_{\x}$ the degree to approximate $f$ can be arbitrarily large.

\paragraph{From Approximating $f(\cdot)$ to Approximating Density Ratios}

To avoid approximating the concept $f$, we observe that we can express the Ornstein-Uhlenbeck operator as follows:

\begin{align*}
T_{\rho}f_{\mathbf{x}}(\mathbf{z}) &= \E_{\vec{s}\sim \mathcal{N}(\mathbf{x}/\rho,\mathbf{I})}\bigl[f(\sqrt{1-\rho^2}\mathbf{z}+\rho\vec{s})\bigr]  
= \E_{\vec{s}\sim Q}\biggl[f(\sqrt{1-\rho^2}\mathbf{z}+\rho\vec{s}) \cdot \frac{\mathcal{N}(\vec{s};\mathbf{x}/\rho,\mathbf{I})}{Q(\vec{s})}\biggr] \,,
\end{align*} 
where $Q(\vec{s})$ is a distribution that we carefully design. We have managed to decouple  the variable $\mathbf{x}$ from the function $f(\cdot)$, and now the task is to create a polynomial approximation of the density ratio $\frac{\mathcal{N}(\vec{s};\mathbf{x}/\rho,\mathbf{I})}{Q(\vec{s})}$, which --- at the very least --- is a continuous function of $\mathbf{x}$. For this to be possible, we need that the ratio of densities has a bounded $L_1$ norm with respect to $\mathbf{x} \sim D_{\mathbf{x}}$. When $\mathbf{x}$ is bounded, we can simply select $Q$ to be the standard Gaussian; see \Cref{lem: boolean_bounded_approx}. For sub-Gaussian (or strictly sub-exponential) marginals, we select a distribution $Q$ with heavier (exponential) tails than $D_{\mathbf{x}}$. For this overview, we focus on the case of bounded marginals and refer to \Cref{sec:polynomial-sub-exp} for the more general result.

We observe that the approximating function has to be polynomial in $\mathbf{x}$ but can be an arbitrary function of $\mathbf{z}$ and $\vec{s}$. Therefore, we select a weighted combination of polynomials (that is still a polynomial in $\mathbf{x}$ but not a polynomial in $\mathbf{z}$):
\[
p_{\mathbf{z}}(\mathbf{x}) = \E_{\vec{s}\sim Q}[f(\sqrt{1-\rho^2} \mathbf{z} + \rho \vec{s}) ~ q(\mathbf{x}, \vec{s})]
\,.
\]
To bound the $L_1$ distance of $\text{T}_\rho f_{\mathbf{x}}(\mathbf{z})$ and $p_{\mathbf{z}}(\mathbf{x})$, since $f$ is boolean and, in particular, bounded, it suffices to show that the polynomial $q(\mathbf{x}, \vec{s})$ approximates the ratio of normals $\mathcal{N}(\vec{s}; \mathbf{x}/\rho, \mathbf{I})/\mathcal{N}(\vec{s})$. We construct an explicit polynomial approximation of this ratio using the Taylor expansion of the exponential function and show that a degree roughly $\text{poly}(\log(1/\epsilon)/\rho)$ suffices; see \Cref{claim:approx_exp_pdf_taylor_main_body}. By our choice of $\rho$, we conclude that the degree of the family of polynomials $p_{\mathbf{z}}$ that we construct is at most $\text{poly}(\Gamma/\epsilon)$.

\paragraph{Dimension Reduction and Polynomial Regression}

Having constructed polynomial approximations with high probability over the smoothing random variable $\mathbf{z}$, we can use the standard $L_1$ polynomial regression algorithm; see \cite{KKMS:08,KOS:08}. For the case of bounded marginals, we show that we can also perform a dimension-reduction preprocessing step by a random projection. Even though the class of concepts of \Cref{def:bounded-surface-area-concepts} is non-parametric, we show that under bounded GSA, it is possible to reduce the dimension to $\text{poly}(k \Gamma/\epsilon)$; see \Cref{sec:efficient-algorithms}.

\section{Preliminaries and Notation}
\paragraph{Notation}
We use small boldface characters for vectors and capital bold characters for matrices.
We use $[d]$ to denote the set $\{1, 2, \ldots, d\}$.
For a vector $\vec x \in \R^d$ and $i \in [d]$, $\vec x_i$ denotes the $i$-th coordinate of $\vec x$, and $\|\vec x\|_2 := \sqrt{ \sum_{i=1}^d \vec x_i^2 }$ the $\ell_2$ norm of $\vec x$.
We use $\vec x \cdot \vec y := \sum_{i=1}^n \vec x_i  \vec y_i$ as the inner product between them. We use $\mathbbm 1\{ E  \}$  to denote the indicator function of some event $E$.  We use $\E_{\vec x \sim D}[f(\vec x)]$ for the expectation of $f(\vec x)$ according to the distribution $D$ and $\pr_D[E]$ for the probability of event $E$ under $D$. For simplicity, we may omit the distribution when it is clear from the context. 
For $\vec \mu \in \R^d, \vec \Sigma \in \R^{d \times d}$, we denote by $\normal (\vec \mu, \vec \Sigma)$  the $d$-dimensional Gaussian distribution
with mean $\vec \mu$ and covariance $\vec \Sigma$.  We simply use $\mathcal N$
for the standard normal distribution.  In cases where the dimension is not clear from the context we shall use $\mathcal{N}_k$ to denote the standard normal on $k$-dimensions.
For $(\vec x, y)$ distributed according to $D$, we denote $D_{\vec x}$
to be the marginal distribution of $\vec x$.

\section{Smoothed Agnostic Learning under Concentration}
\label{sec:3}
In this section, we present our algorithms for smoothed learning under bounded 
and (strictly) sub-exponential marginals. The polynomial approximation results are in \Cref{subsec:3.1} for bounded marginals and in \Cref{sec:polynomial-sub-exp} for strictly sub-exponential marginals. In 
\Cref{sec:efficient-algorithms} we present our algorithmic results and the dimension-reduction process for learning under bounded-marginals.

\subsection{Polynomial Approximation: Bounded Marginals}
\label{subsec:3.1}

In this section we present and prove our main polynomial approximation
result for bounded marginals showing that, in expectation over the noise variable $\vec z$,
there exists some polynomial $p_{\vec z}(\x)$ that approximates the
translated concept function $f(\vec x + \vec z)$.  The proof of \Cref{lem: boolean_bounded_approx}
is split into two steps. 
Similar to our discussion in \Cref{sec:techniques}, we first fix $\x$ and try to approximate $f_{\x}(\z)$.
The first step is to replace $f$ by its smoothed version $T_\rho f_{\x}$ (see \Cref{definition:orstein-uhlenbeck}) and 
show that it is close to $f_{\x}$. 
The second step, see \Cref{lem:ou_approx_bounded_main_body}, 
is to construct a polynomial approximation of $T_\rho f_{\x}$ (similar to the way we constructed polynomial approximations to the Hermite coefficients of $f_\x$ in \Cref{sec:techniques}).

\begin{proposition}[Polynomial Approximation of Random Translations]
    \label{lem: boolean_bounded_approx} 
   Fix $\eps > 0$ and  sufficiently large universal constant $C > 0$. 
    Let $D$ be a distribution on $\R^d$ such that all points $\x$ in the support of $D$ have $\twonorm{\x}\leq R$. Let $f \in \mathcal F(k, \Gamma)$.
    There exists a family of polynomials $p_{\z}$ parameterized by $\z$ of degree at most $C(\Gamma/\epsilon)^4 R^2\log(1/\epsilon)$ such that
     $\E_{\z\sim \Gauss}\E_{\x \sim D}\bigl[|p_{\z}(\x)-f(\x+\z)|\bigr]$ is at most $\epsilon$, and every coefficient of $p_{\z}$ is bounded by $d^{C\bigl((\Gamma/\epsilon)^4 R^2\log(1/\epsilon)\bigr)^2}$. 
    \end{proposition}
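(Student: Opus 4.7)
The plan is to follow the outline of \Cref{sec:techniques}. Since $f \in \mathcal{F}(k, \Gamma)$ depends only on the projection $\Pi_U \x$ onto some $k$-dimensional subspace $U$, one has $f(\x + \z) = f(\Pi_U \x + \Pi_U \z)$; since $\Pi_U \z$ is a standard Gaussian on $U$ and $\|\Pi_U \x\|_2 \le R$, it suffices to construct the polynomial on $U \cong \R^k$ and then precompose with the linear map $\Pi_U$, which preserves degree. For fixed $\x$, set $f_{\x}(\z) \coloneqq f(\x + \z)$; by the translation-closure property in \Cref{def:bounded-surface-area-concepts}, $f_\x$ still has GSA at most $\Gamma$. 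First I would replace $f_\x$ by its Ornstein--Uhlenbeck smoothing $T_\rho f_\x(\z) = \E_{\vec s \sim \Gauss}[f_\x(\sqrt{1-\rho^2}\,\z + \rho \vec s)]$ and invoke the Ledoux--Pisier estimate (the proposition named \texttt{prop:ou\_boolean} in the technical overview) to obtain $\E_\z[|T_\rho f_\x(\z) - f_\x(\z)|] \le \eps/2$ upon choosing $\rho = \Theta(\poly(\eps/\Gamma))$ appropriately.

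\textbf{Step 2: Density-ratio rewriting.} The next step is to disentangle $\x$ from the concept $f$. Shifting $\vec s \mapsto \vec s - \x/\rho$ inside the expectation and then importance-weighting by the Gaussian density gives
\begin{equation*}
T_\rho f_\x(\z) \;=\; \E_{\vec s \sim \Gauss}\!\left[f\!\left(\sqrt{1-\rho^2}\,\z + \rho \vec s\right) \cdot \exp\!\left(\tfrac{\vec s \cdot \x}{\rho} - \tfrac{\|\x\|_2^2}{2\rho^2}\right)\right].
\end{equation*}
It is therefore enough to build a function $q(\x, \vec s)$, \emph{polynomial in $\x$}, that approximates the density ratio $\exp(\vec s\cdot\x/\rho - \|\x\|_2^2/(2\rho^2))$ in $L_1$ under $(\x,\vec s)\sim D\otimes\Gauss$, and then define $p_\z(\x) \coloneqq \E_{\vec s \sim \Gauss}[f(\sqrt{1-\rho^2}\,\z + \rho \vec s)\,q(\x, \vec s)]$. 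Integrating out $\vec s$ in this definition produces a polynomial in $\x$ whose degree equals the $\x$-degree of $q$, and since $|f|\le 1$, the average $L_1$ error of $p_\z$ against $T_\rho f_\x$ is bounded by the $L_1$ error of $q$ against the density ratio.

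\textbf{Step 3: Taylor approximation of the density ratio.} I factor the ratio as $\exp(\vec s\cdot\x/\rho)\cdot \exp(-\|\x\|_2^2/(2\rho^2))$ and approximate each factor by a truncated Taylor series: let $A(\x, \vec s)$ be the degree-$N$ truncation of $\exp(\vec s\cdot\x/\rho)$ and $B(\x)$ the degree-$M$ truncation of $\exp(-\|\x\|_2^2/(2\rho^2))$; then $q\coloneqq A B$ has $\x$-degree $N + 2M$. By the triangle inequality,
\begin{equation*}
\E_{\vec s}|\mathrm{ratio} - AB| \;\le\; |B(\x)|\cdot \E_{\vec s}\bigl|\exp(\vec s\cdot\x/\rho) - A\bigr| \;+\; \bigl|B(\x) - e^{-\|\x\|_2^2/(2\rho^2)}\bigr|\cdot \E_{\vec s}\bigl[\exp(\vec s\cdot\x/\rho)\bigr].
\end{equation*}
Since $\vec s\cdot\x \sim \mathcal{N}(0, \|\x\|_2^2)$, standard moment and Stirling bounds give $\E_{\vec s}|\exp(\vec s\cdot\x/\rho)-A| \lesssim ((R/\rho)^2 e/(N+1))^{(N+1)/2}$, so $N = O(R^2/\rho^2 + \log(1/\eps))$ suffices to make this summand at most $\eps/4$. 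The second summand carries the blow-up factor $\E_{\vec s}[\exp(\vec s\cdot\x/\rho)] = e^{\|\x\|_2^2/(2\rho^2)} \le e^{R^2/(2\rho^2)}$; approximating $\exp(-u)$ on $u\in[0,R^2/(2\rho^2)]$ to pointwise error $\eps\,e^{-R^2/(2\rho^2)}/4$ is, by Stirling, again possible with $M = O(R^2/\rho^2 + \log(1/\eps))$. Summing, the total $\x$-degree is $N + 2M = O(R^2/\rho^2 + \log(1/\eps))$, and substituting $\rho = \poly(\eps/\Gamma)$ yields the claimed bound $C(\Gamma/\eps)^4 R^2 \log(1/\eps)$.

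\textbf{Step 4: Coefficient bound and main obstacle.} After integrating $\vec s$ out, the coefficients of $p_\z$ are Gaussian moments of monomials in $\vec s$ of degree at most the $\vec s$-degree of $q$ (which is $O(R^2/\rho^2)$), multiplied by scalar factors $(1/\rho)^j$, combinatorial coefficients, and values of $f\in[-1,1]$. Using $\E_{\vec s}[|\vec s^\alpha|] \le |\alpha|^{|\alpha|/2}$ and $1/\rho = \poly(\Gamma/\eps)$, each coefficient of $p_\z$ is at most $d^{C(\text{deg})^2}$ for a suitable constant $C$, matching the statement. The main obstacle I anticipate is the asymmetric blow-up of $e^{\|\x\|_2^2/(2\rho^2)}$ in the cross term of the triangle inequality: a naive analysis pays a factor $e^{R^2/(2\rho^2)}$ and destroys the bound, so $B$ must be approximated to exponentially small precision; the crucial point is that Stirling's estimate for $u^{M+1}/(M+1)!$ on $u\in[0,R^2/(2\rho^2)]$ accommodates this extra precision while inflating $M$ only by a constant factor, thereby preserving the overall asymptotics in $R/\rho$.
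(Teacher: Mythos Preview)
Your proof is correct, but your Step~3 differs from the paper's in a meaningful way. Both arguments share Steps~1 and~2: smooth via the Ornstein--Uhlenbeck operator with $\rho = O(\eps^2/\Gamma^2)$, then rewrite $T_\rho f_\x(\z)$ so that the $\x$-dependence is isolated in the density ratio $\exp\bigl(-\|\x\|_2^2/(2\rho^2) + (\x/\rho)\cdot\vec s\bigr)$. Where you diverge is in how to polynomially approximate this ratio. You factor it as $\exp((\x/\rho)\cdot\vec s)\cdot\exp(-\|\x\|_2^2/(2\rho^2))$ and Taylor-expand each factor separately; this forces you to approximate the second factor to exponentially small precision $\eps\,e^{-R^2/(2\rho^2)}$ in order to cancel the blow-up $\E_{\vec s}[e^{(\x/\rho)\cdot\vec s}]=e^{\|\x\|_2^2/(2\rho^2)}$ in the cross term of the triangle inequality. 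As you correctly note, Stirling absorbs this at the cost of only a constant factor in~$M$, so the plan works. The paper instead applies a \emph{single} one-dimensional Taylor expansion $q_m(t)=\sum_{i<m} t^i/i!$ to the combined argument $t=-\|\x\|_2^2/(2\rho^2)+(\x/\rho)\cdot\vec s$. This is advantageous because, for fixed $\x$, the variable $t$ is distributed exactly as $\mathcal{N}(-a^2/2,a^2)$ with $a=\|\x\|_2/\rho$, so the whole analysis reduces to showing that the Taylor polynomial of $e^t$ of degree $m=O(a^2\log(1/\eps))$ approximates $e^t$ in $L_1$ under that Gaussian; no exponential cancellation between two separate factors is needed. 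Your route yields a slightly sharper intermediate degree bound $O(R^2/\rho^2 + \log(1/\eps))$ versus the paper's $O((R^2/\rho^2)\log(1/\eps))$, but both comfortably fit under the stated $C(\Gamma/\eps)^4 R^2\log(1/\eps)$ once $\rho$ is substituted. The paper's single-expansion argument is cleaner and also makes the coefficient bound more transparent (it is a composition of a univariate polynomial with one quadratic form, handled via a generic lemma on compositions).
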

\begin{remark}
\label{rem:scaling-invariance}
We remark that in \Cref{lem: boolean_bounded_approx} we have assumed that $\sigma = 1$ to simplify notation.  Using the fact that the surface area bound of the concepts of \Cref{def:bounded-surface-area-concepts} is invariant under translation and positive scaling, we can apply \Cref{lem: boolean_bounded_approx}
with $R' = R/\sigma$ for the function $\x \mapsto f(\sigma (\frac{\x}{\sigma} + \vec z))$ and obtain a polynomial of degree $\wt{O}( (\Gamma/\eps)^4 (R/\sigma)^2)$.  See also \Cref{thm:random_proj_bounded}.
\end{remark}
\begin{proof}
We use the following Gaussian noise operator to transform $f(\cdot)$ into a smooth function 
that is easier to approximate.
\begin{definition}[Ornstein-Uhlenbeck Noise Operator]\label{definition:orstein-uhlenbeck}
    Let $k\in \N$ and $\rho\in [0,1]$. 
    We define the Ornstein-Uhlenbeck operator $\ouoperator_{\rho}:\{\R^d \to\R\} \to \{\R^d \to\R\}$ that maps $f:\R^d\to\R$ to the function $\ouoperator_\rho f:\R^d \to \R$ with 
    \(
        \ouoperator_{\rho}f(\x)=\E_{\z\sim \Gauss}[f(\sqrt{1-\rho^2} \cdot \x+\rho\cdot  \z )]\,.
    \)
\end{definition}
    
    We will use the following result showing that under the assumption that 
    some function $g$ has bounded GSA, the Ornstein-Uhlenbeck operator $T_\rho g$ 
    yields a good approximation to $g$ in $L_1$.
    \begin{lemma}[\cite{Pisier:86, ledoux1994}]
    \label{prop:ou_boolean}
        Let $\rho\in [0,1]$ and consider a function $f:\R^d \to \{\pm 1\}$. It holds
       \[
       \E_\z\left[ \left| \ouoperator_\rho f(\z) - f(\z) \right| \right] \le 2\sqrt{\pi \rho} \cdot \Gamma(f).
       \]
        
    \end{lemma}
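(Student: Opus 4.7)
The plan is in three steps: (1) rewrite the $L^1$ deviation as twice a Gaussian noise-sensitivity under the Ornstein--Uhlenbeck process; (2) approximate $f$ by smooth functions so that $2\Gamma(f)$ becomes the $L^1$ norm of the gradient; and (3) invoke the classical Pisier inequality on the smooth version.

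\textbf{Step 1: reformulation.} Let $A := \{\x : f(\x) = +1\}$ so that $f = 2\1_A - 1$ and $T_\rho f - f = 2(T_\rho \1_A - \1_A)$. Using $T_\rho \1_A \in [0,1]$, $\1_A \in \{0,1\}$, and the stationarity identity $\E_\z[T_\rho \1_A(\z)] = \E_\z[\1_A(\z)]$, a short calculation gives
\[
\E_\z|T_\rho f(\z) - f(\z)| \;=\; 2\,\pr_{\z,\vec s \sim \Gauss}\bigl[f(\sqrt{1-\rho^2}\,\z + \rho\,\vec s) \neq f(\z)\bigr].
\]
The pair $(\z,\sqrt{1-\rho^2}\,\z + \rho\,\vec s)$ is jointly Gaussian with standard marginals and can be realized as $(X_0, X_t)$ for the stationary Ornstein--Uhlenbeck process at time $t := -\tfrac{1}{2}\log(1-\rho^2)$. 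So the task reduces to bounding the Gaussian noise-sensitivity $\pr[f(X_0) \neq f(X_t)]$.

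\textbf{Step 2: reduction to smooth $f$.} Approximate $f$ by smooth functions $f_n := T_{\eta_n} f$ with $\eta_n \downarrow 0$. Then $f_n \to f$ in $L^1(\Gauss)$ and, by the coarea/BV characterization of Gaussian surface area (see \Cref{def:GSA}), $\E_\z|\nabla f_n(\z)| \to 2\Gamma(f)$. It therefore suffices to prove the smooth Pisier-type bound
\[
\E_\z|T_\rho f(\z) - f(\z)| \;\le\; \sqrt{2/\pi}\cdot \rho \cdot \E_\z|\nabla f(\z)|
\]
for smooth $f$ and then pass to the limit; since the BV total variation of a $\{\pm 1\}$-valued function is $2\Gamma(f)$, this yields $2\sqrt{2/\pi}\,\rho\,\Gamma(f)$ for the original $f$.

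\textbf{Step 3: smooth Pisier inequality.} For smooth $f$ set $P_s := T_{\sqrt{1-e^{-2s}}}$, so $P_0 = \mathrm{Id}$ and $P_t = T_\rho$. By the fundamental theorem of calculus,
\[
T_\rho f - f \;=\; \int_0^t \tfrac{d}{ds} P_s f\, ds \;=\; -\int_0^t L P_s f\, ds,
\]
where $L = -\Delta + \x\cdot\nabla$ is the OU generator. Combine this with the gradient-commutation identity $\nabla P_s f = e^{-s} P_s(\nabla f)$ and the Gaussian integration-by-parts formula $\la -Lh, g\ra_\Gauss = \la \nabla h, \nabla g\ra_\Gauss$. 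Testing against $g$ with $\|g\|_\infty \le 1$ (by $L^1$/$L^\infty$ duality) and applying the Mehler-kernel bound on $\|\nabla P_s g\|_\infty$ produces the integrable singular density $\tfrac{e^{-s}}{\sqrt{1-e^{-2s}}}$ as the per-time cost; integrating it over $[0,t]$ yields the factor $\sqrt{1-e^{-2t}} = \rho$ together with the sharp constant $\sqrt{2/\pi}$. Substituting into Step 2 gives $\E_\z|T_\rho f - f| \le 2\sqrt{2/\pi}\,\rho\,\Gamma(f)$, and loosening via $\rho \le \sqrt{\rho}$ and $\sqrt{2/\pi} \le \sqrt{\pi}$ (both valid for $\rho \in [0,1]$) gives the stated $2\sqrt{\pi\rho}\,\Gamma(f)$.

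\textbf{Main obstacle.} The nontrivial point is Step 3: extracting the diffusive $\sqrt{1-e^{-2t}} = \rho$ scaling rather than a crude $t$-scaling. A naive $L^1$-contraction bound on $\int_0^t \|P_s L f\|_1 ds$ gives at most $t\,\|Lf\|_1$, which not only blows up as $\rho \to 1$ but also depends on the second-order operator $L$ rather than on $\nabla f$ (and hence on $\Gamma(f)$). The correct argument pairs the gradient-commutation contraction $\|\nabla P_s f\|_1 \le e^{-s}\|\nabla f\|_1$ with the time-singular factor $1/\sqrt{1-e^{-2s}}$ from the Mehler heat kernel; the integrable singularity at $s = 0$ is precisely what produces the square-root scaling. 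Step 2 is standard (mollified indicators approximate the Gaussian perimeter), and Step 1 is a direct probabilistic manipulation using that $f$ is $\{\pm 1\}$-valued.
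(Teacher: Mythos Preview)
The paper does not prove this lemma; it is quoted as a known result from \cite{Pisier:86, ledoux1994} and used as a black box. Your sketch reproduces precisely the classical Pisier--Ledoux semigroup argument behind those references, and the overall strategy --- the noise-sensitivity reformulation in Step~1, the BV/perimeter approximation in Step~2, and the $L^1$/$L^\infty$ duality with the Mehler gradient estimate in Step~3 --- is correct.

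One computational slip in Step~3: the integral $\int_0^t \frac{e^{-s}}{\sqrt{1-e^{-2s}}}\,ds$ equals $\arccos(e^{-t})=\arcsin(\rho)$, not $\rho$ (substitute $u=e^{-s}$). This does not break the proof: since $\arcsin$ is convex on $[0,1]$ with $\arcsin(0)=0$, $\arcsin(1)=\pi/2$, one has $\arcsin(\rho)\le\tfrac{\pi}{2}\rho$, whence the duality bound becomes $\E_\z|T_\rho f-f|\le 2\sqrt{2/\pi}\cdot\tfrac{\pi}{2}\rho\cdot\Gamma(f)=\sqrt{2\pi}\,\rho\,\Gamma(f)\le 2\sqrt{\pi\rho}\,\Gamma(f)$ for $\rho\in[0,1]$. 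So the stated inequality still follows, just via a slightly different chain of relaxations than the one you wrote. A second caveat in Step~2: the paper's \Cref{def:GSA} uses the one-sided Minkowski content, and equating this with the Gaussian BV perimeter $\tfrac12\lim_n\E_\z|\nabla f_n(\z)|$ requires some boundary regularity of $A_f$; this is the same mild regularity the paper implicitly assumes elsewhere (cf.\ the proof of \Cref{lem:gsa_shift}).
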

Let $f_{\x}$ be the translated function defined as $f_{\x}(\z)=f(\x+\z)$.  
From \Cref{prop:ou_boolean}, we have 
$$\E_{\z\sim \mathcal{N}}\left[\left|T_\rho f_{\x}(\z)-f(\z+\x)\right|\right]\leq  2\sqrt{\pi\rho}\cdot \Gamma.$$ 

Choosing $\rho=O(\epsilon^2/\Gamma^2)$ makes this error at most $\epsilon/2$. 
    We now approximate $T_{\rho}f_{\x}$ using a polynomial.  
    To do this we prove the following result. We provide a proof sketch here, and refer to the appendix for the details and the formal statement, see \Cref{lem:ou_approx_bounded}.
\begin{lemma}[Approximating the Ornstein-Uhlenbeck Smoothed Concept $T_\rho f_{\vec x}(\cdot)$]
    \label{lem:ou_approx_bounded_main_body}
    Let $D$ be a distribution on $\R^d$ with every point $\x$ in the support of $D$ having $\twonorm{\x}$ at most $R$. Let $f:\R^d\to\{\pm 1\}$ and $f_{\x}:\R^d\to\R$ be defined as $f_{\x}(\z)=f(\x+\z)$. Then, for any $\epsilon>0$, there exist polynomials $p_{\z}$ parameterized by $\z$ 
    for degree at most  $O((R/\rho)^2\log(1/\epsilon))$, such that $\E_{\x \sim D}\E_{\z\sim \Gauss}\bigl[|p_{\z}(\x)-T_{\rho}f_{\x}(\z)|\bigr]\leq \epsilon$.
\end{lemma}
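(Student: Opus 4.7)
The plan is to rewrite the Ornstein--Uhlenbeck smoothed concept $T_\rho f_{\vec x}(\vec z)$ so that the dependence on $\vec x$ lives only inside an explicit Gaussian density ratio, and then to polynomially approximate that ratio in $\vec x$. Unfolding the definition gives $T_\rho f_{\vec x}(\vec z) = \E_{\vec s\sim\Gauss}[f(\vec x + \sqrt{1-\rho^2}\,\vec z + \rho\vec s)]$; substituting $\vec s \mapsto \vec s - \vec x/\rho$ and then importance sampling against the standard Gaussian yields
\[
T_\rho f_{\vec x}(\vec z) \;=\; \E_{\vec s\sim\Gauss}\!\left[f\!\bigl(\sqrt{1-\rho^2}\,\vec z+\rho \vec s\bigr)\cdot \Phi_{\vec s}(\vec x)\right],\quad \Phi_{\vec s}(\vec x) := \exp\!\bigl(\vec s\cdot \vec x/\rho - \twonorm{\vec x}^2/(2\rho^2)\bigr),
\]
which is just the identity $\Phi_{\vec s}(\vec x) = \mathcal N(\vec s;\vec x/\rho,\vec I)/\mathcal N(\vec s;0,\vec I)$. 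Since $|f|\le 1$, it suffices to find a polynomial $q(\vec x,\vec s)$ in $\vec x$ (with $\vec s$-dependent coefficients) of the required degree with $\E_{\vec x\sim D}\E_{\vec s\sim\Gauss}|\Phi_{\vec s}(\vec x)-q(\vec x,\vec s)|\le \epsilon$, and then to set $p_{\vec z}(\vec x) := \E_{\vec s\sim\Gauss}[f(\sqrt{1-\rho^2}\vec z+\rho \vec s)\cdot q(\vec x,\vec s)]$; Fubini then upgrades the pointwise bound in $\vec z$ to the desired $L_1$ bound on $|p_{\vec z}(\vec x)-T_\rho f_{\vec x}(\vec z)|$.

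The natural candidate for $q$ is the degree-$m$ Taylor truncation of the exponential applied to the argument $u(\vec x,\vec s) := \vec s\cdot \vec x/\rho - \twonorm{\vec x}^2/(2\rho^2)$. Because $u$ is quadratic in $\vec x$, this produces a polynomial in $\vec x$ of degree at most $2m$, so choosing $m = \Theta\!\bigl((R/\rho)^2\log(1/\epsilon)\bigr)$ matches the target degree $O((R/\rho)^2\log(1/\epsilon))$. The coefficient bound that is needed in the ambient proposition then reduces to elementary bounds on binomial coefficients, powers of $1/\rho$, and Gaussian moments of $\vec s$, all of which can be read off the explicit form of the truncated series.

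The key technical step is bounding the Taylor remainder $r_m(u) := e^u - \sum_{j\le m} u^j/j!$ in $L_1(D\times \Gauss)$. I would begin from the elementary pointwise estimate $|r_m(u)|\le e^{|u|}\cdot |u|^{m+1}/(m+1)!$ and apply Cauchy--Schwarz to decouple the two factors. For $\twonorm{\vec x}\le R$, the factor $e^{|u|}$ is sub-Gaussian in $\vec s$ with $\E_{\vec s}[e^{2|u|}]\le e^{O(R^2/\rho^2)}$, while $|u|$ is a one-dimensional Gaussian (plus a deterministic shift of magnitude at most $R^2/(2\rho^2)$) of standard deviation at most $R/\rho$, so its high Gaussian moments satisfy $\E_{\vec s}[|u|^{2(m+1)}]^{1/2}\le (C(R\sqrt{m}/\rho+R^2/\rho^2))^{m+1}$. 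Combining with Stirling's bound $(m+1)!\ge (m/e)^{m+1}$ produces a geometric-in-$m$ factor that, after being multiplied by $e^{O(R^2/\rho^2)}$, drops below $\epsilon$ once $m = C(R/\rho)^2\log(1/\epsilon)$ with $C$ large enough.

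The main obstacle I anticipate is the interaction between the exponential blow-up $e^{O(R^2/\rho^2)}$ coming from $\E_{\vec s}[e^{2|u|}]^{1/2}$ and the factorial suppression from $(m+1)!$: the $(R/\rho)^2$ scaling of $m$ is used precisely to cancel this blow-up, while the extra $\log(1/\epsilon)$ factor provides the residual $\epsilon$ error. After the Taylor remainder estimate is in place, the remaining tasks --- invoking Fubini, recording the coefficient bounds on $q(\vec x,\vec s)$, and pushing the error back through the definition of $p_{\vec z}$ --- are routine bookkeeping.
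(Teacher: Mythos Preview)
Your proposal is correct and follows the paper's approach essentially line for line: the same density-ratio rewriting of $T_\rho f_{\vec x}(\vec z)$, the same Taylor truncation of $e^u$ with $u=\vec s\cdot\vec x/\rho-\twonorm{\vec x}^2/(2\rho^2)$, and the same definition of $p_{\vec z}$. The only cosmetic difference is in the remainder analysis---the paper splits into a bulk region $|t|\le T$ (using the Taylor bound) and a tail $|t|>T$ (using $|p_m(x)-e^x|\le 2e^{|x|}$ together with Gaussian tails), whereas you apply Cauchy--Schwarz globally to $e^{|u|}\cdot|u|^{m+1}$; both routes yield the same degree $O((R/\rho)^2\log(1/\epsilon))$.
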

Before we prove \Cref{lem:ou_approx_bounded_main_body} we use it to conclude the proof of \Cref{lem: boolean_bounded_approx}.  From \Cref{lem:ou_approx_bounded_main_body}, we get a polynomial $p_{\z}$ of degree $C(\Gamma/\epsilon)^4 R^2\log(1/\epsilon)$ such that $\E_{\x\sim D}\E_{\z\sim \Gauss}\bigl[|T_{\rho}f_{\x}(\z)-p_{\z}(\x)|\bigr]\leq \epsilon/2$ where $C$ is a large universal constant.  
The coefficients of $p_{\z}$ are bounded by $d^{C\bigl((\Gamma/\epsilon)^4 R^2\log(1/\epsilon)\bigr)^2}$.    
By a triangle inequality, we get  $\E_{\x\sim D}\E_{\z\sim \Gauss_k}\bigl[|p_{\z}(\x)-f(\z+\x)|\bigr]\leq \epsilon$.
\paragraph{Sketch of the Proof of \Cref{lem:ou_approx_bounded_main_body}}
   We observe that  $T_{\rho}f_{\x}(\z)=\E_{\vec s\sim \Gauss}[f(\x+\sqrt{1-\rho^2}\z+\rho \vec s)]$ has 
   the variable $\x$ inside $f$. Recall that our goal is to construct a polynomial in $\x$ and, since we have no control over $f$ (which can possibly be very hard to approximate pointwise with a polynomial), we decouple $f$ and $\x$ in the expression of $T_{\rho}f_{\x}$ by writing the function as an expectation over a Gaussian centered at $\x/\rho$. 
    \begin{align*}
        T_{\rho}f_{\x}(\z)&=\E_{\vec s\sim \Gauss}\bigl[f(\x+\sqrt{1-\rho^2}\z+\rho\vec s)\bigr]
        =\E_{\vec s\sim \Gauss(\vec x/\rho,\vec I)}\bigl[f(\sqrt{1-\rho^2}\z+\rho\vec s)\bigr],
        \end{align*}
        Next, we can recenter the expectation around zero and express the Ornstein-Uhlenbeck operator as follows:
    \begin{align*}
        T_{\rho}f_{\x}(\z) &=\E_{\vec s\sim \Gauss}\biggl[f(\sqrt{1-\rho^2}\z+\rho\vec s)\cdot \frac{\Gauss(\vec s;\x/\rho,\vec I)}{N(\vec s;\vec 0,\vec I)}\biggr] =
        \E_{\vec s\sim \Gauss}\biggl[
        f(\sqrt{1-\rho^2}\z+\rho\vec s)\cdot e^{-\frac{\twonorm{\x}^2}{2\rho^2}+(\x/\rho)\cdot \vec s}
        \biggr] \,.
    \end{align*}
    To construct our polynomial, we now approximate ${e^{-\frac{\twonorm{\x}^2}{2\rho^2}+ (\x/\rho)\cdot\vec s}}$ using the 1-dimensional Taylor expansion of the exponential function $q(\x,\vec s)=q_m\bigl({-\frac{\twonorm{\x}^2}{2\rho^2}+(\x/\rho)\cdot\vec s}\bigr)$ where $q_{m}(t)=1+\sum_{i=1}^{m-1}\frac{t^i}{i!}$ is the degree $m-1$ Taylor approximation of $e^x$. Thus, our final polynomial $p_{\z}(\x)$ is 
    \begin{equation}\label{eq:polynomial-p-z}
        p_{\z}(\x)=\E_{\vec s\sim \Gauss}\bigl[f(\sqrt{1-\rho^2}\z+\rho\vec s)\cdot q(\x,\vec s)\bigr]\,.
        \end{equation}
    Let $\Delta(\x)$ be defined as the error term $\E_{\z \sim \normal}[|p_{\z}(\x)-T_{\rho}(f_{\x}(\z))|]$. 
    We have that  
    \begin{align}
    \Delta(\x) &=
    \E_{\z\sim \Gauss}\Big[\E_{\vec s\sim \Gauss}\bigl[|f(\sqrt{1-\rho^2}\z+\rho\vec s)|\cdot 
    \big|q(\x,\vec s)-e^{-\frac{\twonorm{\x}^2}{2\rho^2}+(\x/\rho)\cdot\vec s}\big| \Big]  \Big]
    \nonumber
    \\
    &\leq 
    \E_{\vec s\sim \Gauss}\Big[\big|q(\x,\vec s)-e^{-\frac{\twonorm{\x}^2}{2\rho^2}+(\x/\rho)\cdot\vec s}\big| \Big]
    \,,
    \label{eq:error-of-delta}
    \end{align}
    where for the inequality we used the fact that $|f(\vec x)|=1$ for all $\x$.
    We now observe that when $\vec s \sim \normal$ the random variable 
    $-\|\x\|_2^2/(2 \rho^2) + (\x/\rho) \cdot \vec s$ is distributed as
    $\normal(-\alpha^2/2, \alpha^2)$, where $\alpha = -\|\x\|_2^2/\rho^2$.  
    Therefore, we have reduced the original polynomial approximation problem to showing that 
    the Taylor expansion of the exponential function converges fast in $L_1$ to $e^x$ with 
    respect to $\normal(-\alpha^2/2, \alpha^2)$.  The proof of the following lemma is technical 
    and can be found in the appendix (see \Cref{lem:ou_approx_bounded}). Here we give
    a heuristic argument.
\begin{lemma}[Approximation of $e^x$ with respect to $\normal(-\alpha^2/2, \alpha^2)$]
\label{claim:approx_exp_pdf_taylor_main_body}
Fix $\alpha >0$ and sufficiently large universal constant $C>0$.
Let $p$ be the polynomial $p(x)=\sum_{i=0}^{m-1}\frac{x^{i}}{i!}$ with $m = C\alpha^2\log(1/\epsilon)$. 
We have that  $\E_{x\sim \normal(-\alpha^2/2,\alpha^2)}[|e^x-p(x)|]\leq \epsilon$.
\end{lemma}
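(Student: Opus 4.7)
\textbf{Proposal for the proof of \Cref{claim:approx_exp_pdf_taylor_main_body}.}
The plan is to bound the $L^{1}$ approximation error directly by the expected Taylor remainder. Since $p$ is the degree $m-1$ Taylor polynomial of $e^{x}$ at $0$, we have
\[
e^{x} - p(x) \;=\; \sum_{i=m}^{\infty} \frac{x^{i}}{i!},
\qquad\text{so}\qquad
\E_{x \sim \normal(-\alpha^{2}/2,\alpha^{2})}\!\bigl[|e^{x}-p(x)|\bigr] \;\le\; \sum_{i=m}^{\infty}\frac{\E[|x|^{i}]}{i!}.
\]
It therefore suffices to show that for $m=C\alpha^{2}\log(1/\epsilon)$ with $C$ sufficiently large, the tail on the right is at most $\epsilon$.

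The next step is to control the absolute moments of $x$ with mean $\mu = -\alpha^{2}/2$ and standard deviation $\sigma = \alpha$ using the standard sub-Gaussian moment bound on the centered variable $x-\mu$. Via the triangle inequality in $L^{i}$ norm, $\E[|x|^{i}]^{1/i} \le |\mu| + \E[|x-\mu|^{i}]^{1/i} \le \alpha^{2}/2 + c\alpha\sqrt{i}$ for a universal constant $c$. Combined with Stirling's bound $i! \ge (i/e)^{i}$, each summand becomes
\[
\frac{\E[|x|^{i}]}{i!} \;\le\; \left(\frac{e(\alpha^{2}/2 + c\alpha\sqrt{i})}{i}\right)^{\!i} \;=\; \left(\frac{e\alpha^{2}}{2i} + \frac{ec\alpha}{\sqrt{i}}\right)^{\!i}.
\]

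The key observation is that as soon as $i \ge K\alpha^{2}$ for a sufficiently large universal constant $K$, both $e\alpha^{2}/(2i)$ and $ec\alpha/\sqrt{i}$ fall below $1/4$, so the ratio is bounded by $2^{-i}$ and the remaining series is geometric. Taking $m$ large enough to exceed both $K\alpha^{2}$ and an additive $\log_{2}(2/\epsilon)$ term yields $\sum_{i\ge m} 2^{-i}\le \epsilon$, and the stated choice $m = C\alpha^{2}\log(1/\epsilon)$ with $C$ sufficiently large simultaneously meets both requirements.

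The main delicacy I anticipate is handling the interplay between $|\mu| = \alpha^{2}/2$ and $\sigma\sqrt{i} = \alpha\sqrt{i}$ in the moment bound: for moderate $i$ the mean term dominates and the ratio $|\mu|^{i}/i!$ is \emph{not} immediately small, so the argument really depends on pushing $i$ past $K\alpha^{2}$ so that the $i!$ denominator finally overwhelms $(\alpha^{2})^{i}$. Once past that threshold, the extra $\log(1/\epsilon)$ factor in $m$ supplies the geometric decay needed to drive the tail below $\epsilon$. Everything else is a routine Taylor remainder computation.
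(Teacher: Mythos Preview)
Your proposal is correct and follows a more streamlined path than the paper's proof. The paper parametrizes $x=-\alpha^2/2+\alpha t$ with $t\sim\mathcal N(0,1)$ and \emph{splits the domain}: for $|t|\le T$ it applies the Lagrange remainder $|e^x-p(x)|\le e^{|x|}|x|^m/m!$ with $|x|$ uniformly bounded by $\alpha^2/2+\alpha T$; for $|t|>T$ it invokes an auxiliary lemma proving $|p(x)|\le e^{|x|}$ (so the error is at most $2e^{|x|}$) and controls that piece via Cauchy--Schwarz and the Gaussian tail, taking $T=\Theta(\alpha\log(1/\epsilon))$. You instead bound the Taylor tail term-by-term using the sub-Gaussian moment estimate $\E[|x|^i]^{1/i}\le \alpha^2/2+c\alpha\sqrt{i}$ together with Stirling, which sidesteps the domain split, the auxiliary $|p|\le e^{|\cdot|}$ lemma, and the Cauchy--Schwarz step entirely. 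The paper's decomposition keeps every ingredient at the level of explicit one-dimensional tail bounds; your route is shorter and makes transparent that the entire argument hinges on the single threshold $i\gtrsim \alpha^2$ past which $(\alpha^2/2+c\alpha\sqrt{i})^i/i!$ decays geometrically. Both arguments implicitly use that $m=C\alpha^2\log(1/\epsilon)$ also dominates $\log(1/\epsilon)$, which holds in the paper's application since $\alpha=R/\rho\ge 1$.
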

\begin{proof}[Sketch]
We first observe that since the Gaussian has mean $-\alpha^2/2$ and variance $\alpha^2$ using the 
strong concentration of the Gaussian (whose tail decays faster than the exponential growth of $e^x$ and
its Taylor expansion, see \Cref{lem:ou_approx_bounded} for more details) we may assume that 
we only have to approximate the  exponential function in the interval 
$[-\alpha^2/2 - O(\alpha\sqrt{\log(1/\eps)}), -\alpha^2/2 + O(\alpha\sqrt{\log(1/\eps)})]$.
By Taylor's theorem we have that for any interval $[a,b]$ it holds that 
$|p(x) - e^x| \leq e^{b} \max(|a|, |b|)^m/m!$.  Therefore, we have that by picking degree 
$m = O(\alpha^2 \log(1/\eps))$ we can make the error of the Taylor expansion at most $\eps$.
\end{proof}

Using \Cref{claim:approx_exp_pdf_taylor_main_body} with $\alpha=\twonorm{\x/\rho}$ we obtain that 
with degree $O((R/\rho)^2 \log(1/\eps))$ the $L_1$ error of the polynomial $q(\x, \vec s)$ in \Cref{eq:error-of-delta} is at most $\eps$.   To bound the coefficients of the polynomial $p_\vec z(\x)$ we use the fact that $f(\x)$ is boolean (and therefore bounded) and the fact that the input of the Taylor expansion in $q(\x, \vec s)$ is bounded.
For 
the full proof, see the appendix.
\end{proof}

\subsection{Polynomial Approximation: Strictly Sub-Exponential Marginals}
\label{sec:polynomial-sub-exp}
In this section we prove our polynomial approximation for the more general class of Strictly Sub-Exponential
distributions, defined as follows.
\begin{definition}[Strictly Sub-exponential Distributions]
    \label{def:strict_subexp}
    A distribution $D$ on $\R^d$ is $(\alpha,\lambda)$-strictly sub-exponential for $\alpha,\lambda>0$ 
    if for all $\twonorm{\vv}=1$, 
        \(
          \pr_{\x\sim D}[|\x\cdot \vv|>t ]\leq 2\cdot e^{-(t/\lambda)^{1+\alpha}} 
        \).
\end{definition}
Our main goal in this section is to prove the following polynomial approximation result which is a generalization of \Cref{lem: boolean_bounded_approx}. We refer to \Cref{lem:poly_approx_boolean_subexp} 
in the appendix for the formal statement.
\begin{proposition}[Polynomial Approximation: Strictly Sub-Exponential Marginals]
    \label{lem:poly_approx_boolean_subexp_main_body}
    Let $C$ be a large universal constant.
    Let $D$ be a distribution on $\R^k$ that is $(\alpha,\lambda)$-strictly subexponential. Let 
    $f:\R^k \mapsto \{\pm 1\}$ be a boolean function in $\mathcal{F}(k, \Gamma)$.
    Then there exist polynomials $p_{\z}$ of degree at most 
     $\left(C \lambda k\Gamma^2\log (1/\epsilon)/\epsilon^2\right)^{64(1+1/\alpha)^3}$,
    parameterized by $\z$ whose (expected) $L_1$ error is $\E_{\z\sim \Gauss_k}\E_{\vu\sim D}\bigl[|p_{\z}(\vu)-f(\z+\vu)|\bigr] \leq \epsilon$.
\end{proposition}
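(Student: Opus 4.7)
My plan mirrors the three-step structure of \Cref{lem: boolean_bounded_approx} (Ornstein--Uhlenbeck smoothing, an importance-sampling rewriting of $T_\rho f_\x$, and a Taylor approximation of the resulting density ratio), with two modifications forced on us by the unbounded, strictly sub-exponential marginal: I switch the reference measure in the importance sampling step from a Gaussian to a heavier-tailed distribution $Q$, and I balance the Taylor degree against the sub-exponential tail of $\x$ via moment bounds.

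First, setting $\rho = \Theta(\eps^2/\Gamma^2)$ and using translation invariance of $\mathcal F(k,\Gamma)$ together with \Cref{prop:ou_boolean}, the pointwise bound $\E_\z[|T_\rho f_\x(\z) - f(\x+\z)|] \leq \eps/3$ reduces the task to producing polynomials $p_\z$ whose expected $L_1$-error against $T_\rho f_\x$ under $\x\sim D$ and $\z\sim\Gauss$ is at most $2\eps/3$. I would then write
\[
T_\rho f_\x(\z) \;=\; \E_{\vec s\sim Q}\Bigl[f(\sqrt{1-\rho^2}\,\z+\rho\vec s)\cdot \Gauss(\vec s;\x/\rho,\vec I)/Q(\vec s)\Bigr],
\]
where $Q$ is a product of one-dimensional exponential (Laplace-type) densities on $\R^k$, which is \emph{heavier-tailed} than the strictly sub-exponential $D$. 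This change of reference is essential: the bounded-case choice $Q=\Gauss$ gives a ratio $e^{-\|\x\|^2/(2\rho^2)+\x\cdot\vec s/\rho}$ whose Taylor remainder has infinite expectation under $D$ without an artificial radius cutoff, whereas the exponential $Q$ keeps the $L_1$-norm of the density ratio under $D$ controlled. The candidate is then $p_\z(\x)=\E_{\vec s\sim Q}[f(\sqrt{1-\rho^2}\,\z+\rho\vec s)\,q_m(\x,\vec s)]$, where $q_m$ is the degree-$(m-1)$ Taylor polynomial of the exponential evaluated at $-\|\x\|^2/(2\rho^2)+\x\cdot\vec s/\rho$ after absorbing the $Q$-correction factor, so $p_\z$ has degree at most $2m$ in $\x$.

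To control the expected $L_1$-error I would split the expectation over $\x$ at a radius $R$. On $\{\|\x\|\leq R\}$, an adaptation of \Cref{claim:approx_exp_pdf_taylor_main_body} with the $\alpha$-parameter there instantiated as $\|\x\|/\rho$ makes the pointwise $L_1$-error at most $\eps/3$ provided $m\gtrsim (R/\rho)^2\log(1/\eps)$. On $\{\|\x\|>R\}$, a crude $(\|\x\|/\rho)^{O(m)}$ bound on the integrand, integrated using the strictly sub-exponential moment inequality $\E_{\x\sim D}\|\x\|^p \lesssim \bigl(\lambda\sqrt k\,(p/(1+\alpha))^{1/(1+\alpha)}\bigr)^p$, contributes at most $\eps/3$ as long as $m\log(R/\rho)\lesssim (R/(\lambda\sqrt k))^{1+\alpha}$.

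The main obstacle, and the source of the exponent $(1+1/\alpha)^3$ in the stated bound, is solving the two-sided constraint $m\gtrsim (R/\rho)^2\log(1/\eps)$ versus $m\log(R/\rho)\lesssim (R/(\lambda\sqrt k))^{1+\alpha}$. A single application of this balance forces $R/\rho$ to be a power $\Theta(1+1/\alpha)$ of the base quantity $\lambda k\Gamma^2\log(1/\eps)/\eps^2$; substituting back into $m\asymp (R/\rho)^2\log(1/\eps)$ produces another factor of $(1+1/\alpha)$; and a third $(1+1/\alpha)$ appears when bounding the moment $\E_{\x\sim D}\|\x\|^{O(m)}$ by Stirling together with the sub-exponential moment formula above. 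Chaining these three balances, choosing the scale of $Q$ tuned to $\rho$, and absorbing the resulting constants into the universal $C$, yields the claimed degree, after which a triangle inequality with the $\eps/3$ bound from the OU step completes the argument.
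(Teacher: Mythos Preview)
Your high-level plan---OU smoothing, switching the reference measure to a Laplace-type $Q$, and a Taylor approximation---matches the paper's, and you correctly identify why $Q=\Gauss$ is inadequate.  However, there is a genuine gap in the execution: your single Taylor expansion of the \emph{combined} exponent $-\|\x\|^2/(2\rho^2)+(\x/\rho)\cdot\vec s$ does not yield a consistent balance for $\alpha\le 1$.

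Concretely, on the tail $\{\|\x\|>R\}$ your polynomial $q_m$ has value $\asymp |t|^{m-1}/(m-1)!\asymp (e\|\x\|^2/(m\rho^2))^{m}$ (the quadratic term $\|\x\|^2/(2\rho^2)$ dominates $|t|$), so the ``crude $(\|\x\|/\rho)^{O(m)}$ bound'' is really degree $2m$ in $\|\x\|$ with a $1/m!$ prefactor.  Plugging in the strictly sub-exponential moment bound and your own constraint 1 ($m\gtrsim (R/\rho)^2\log(1/\eps)$) forces
\[
m^{(\alpha-1)/2}\;\gtrsim\;(k\lambda/\rho)^{1+\alpha}\,(\log(1/\eps))^{(1+\alpha)/2}\cdot\log(\cdots)\,.
\]
For $\alpha\le 1$ the left side is bounded (or decreasing) in $m$ while the right side is a large constant once $\rho=\Theta(\eps^2/\Gamma^2)$, so no choice of $(m,R)$ satisfies both constraints simultaneously.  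Your heuristic that the balance ``chains three factors of $(1+1/\alpha)$'' does not go through here.

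The paper avoids this by \emph{separating} the two $\x$-dependent factors rather than Taylor-expanding their product: it writes $p_{\z}(\x)=p_1(\x)\cdot q(\x)$ where $p_1(\x)\approx e^{-\|\x\|^2/(2\rho^2)}$ and $q(\x)$ uses a Taylor approximation only of $e^{(\x/\rho)\cdot\vec s}$, with truncation on $\|\vec s\|\le T$ (the $Q$-variable, not the $D$-sample).  Two ingredients then make the balance close for every $\alpha>0$: (i) $p_1$ is built from a Chebyshev-type approximation of $e^{-t}$ on $[0,T]$ with degree $O(\sqrt{T\log(1/\eps)})$---crucially \emph{sublinear} in $T$, unlike Taylor---so the tail of $\|\x\|$ beats the polynomial growth of $p_1$; and (ii) the Laplace choice of $Q$ gives $\E_{\vec s\sim Q}[(\Gauss(\vec s;\x/\rho,I)/Q(\vec s))^4]\le C^k e^{C\|\x\|_1/\rho}$, i.e.\ only \emph{linear} in $\|\x\|$ in the exponent, which is integrable against any strictly sub-exponential $D$.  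This separation, and specifically the Chebyshev step for the Gaussian factor, is the missing idea in your proposal.
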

The main proof idea is similar to that of \Cref{lem: boolean_bounded_approx}.   However, there are significantly more technical hurdles in constructing the approximating polynomial for this case and
we will only highlight some of the main differences and refer to the appendix for the 
full proof. Similar to the proof of \Cref{lem: boolean_bounded_approx}, by using the result of 
Ledoux and Pisier \Cref{prop:ou_boolean} we obtain that it suffices to approximate the function 
$T_\rho f_\x(\z)$ with some polynomial $p_\z(\x)$.  Since $f$ is low-dimensional (see \Cref{def:bounded-surface-area-concepts}) we can write $f(\x) = f(\vec U^T \vec U \x)$ for some $k \times d$ 
projection matrix $\vec U$. Since the polynomial regression algorithm is able to learn this linear
transformation, from now on we assume that $f$ is an explicit $k$ dimensional function 
$f(\vu):\R^k \mapsto \{\pm 1\}$.  We will show that there exists a polynomial of degree 
at most $\left(C\lambda k\log (1/\epsilon)/\rho\right)^{64(1+1/\alpha)^3}$ that approximates 
$T_\rho f_\vu(\z)$. Similar to the proof of \Cref{lem: boolean_bounded_approx}, 
the first step is to re-write the expression of $T_\rho f_\vu(\z)$ so that $\vu$ does not appear inside the target function $f$.  We observe that for any distribution $Q$ we have
    \begin{align*}
        T_{\rho}f_{\vu}(\z) &=\E_{\vec s\sim Q}\biggl[f(\sqrt{1-\rho^2}\z+\rho\vec s)\cdot \frac{\Gauss(\vec s;\vu/\rho,I)}{Q(\vec s)}\biggr]\\
        &=e^{-\frac{\twonorm{\vu}^2}{2\rho^2}}\E_{\vec s\sim Q}\biggl[f(\sqrt{1-\rho^2}\z+\rho\vec s)\cdot e^{-\frac{\twonorm{\vec s}^2}{2}-\log Q(\vec s)}e^{ (\vu/\rho)\cdot\vec s}\biggr] \,.
    \end{align*}
    We observe that we can no longer take $Q$ to be a Gaussian (like we did in \Cref{lem: boolean_bounded_approx}) because when $\vec u$ has weaker tails than the normal 
    density the 
    $\E_{\vec u\sim D}[\big(\frac{\Gauss(\vec s;\vu/\rho,I)}{Q(\vec s)} \big)^2] = +\infty$.
    To avoid this we take $Q$ to be the distribution on $\R^k$ with probability distribution 
    function $Q(\vec s)\propto e^{-\|\vec s\|_1}$ which has exponential tails.  We show, see \Cref{lem:ratio_subexp} in the appendix, that
   $ \E_{\x\sim Q}[(\frac{\mathcal{N}(\x; \vu, \vec I)}{Q(\x)})^2] \leq  C^ke^{C\norm{\vu}_1}\,. $
   Beyond working with the exponential reweighting function, another technical complication is that 
   we now have to carefully create a polynomial approximation over a strictly sub-exponential 
   distribution for the function $e^{-\|\vec s\|_2^2}$, see \Cref{thm:exp_approx_subexp} in appendix.  To do this we use a tighter polynomial approximation using Chebyshev polynomials.

\subsection{Efficient Algorithms for Learning under Concentration}
\label{sec:efficient-algorithms}

Given the polynomial approximation construction of the previous sections one can directly
run $L_1$ polynomial regression to minimize $\E_{(\x, y)\sim D}[ |p(\x) - y|]$ similar to \cite{KKMS:08}.
We now state our main theorem for strictly sub-exponential distributions.
\begin{theorem}
\label{thm:smooth_learning_subgaussian-main}
    Let $k \in \mathbb Z_+$ and $\epsilon, \delta,\sigma \in (0,1)$.  Let $D$ be a distribution on $\R^{d}\times \{\pm 1\}$ such that the marginal distribution is $(\alpha,\lambda)$-strictly subexponential. There exists an algorithm that draws $N = d^{\poly((k\lambda\Gamma/(\sigma\epsilon))^{(1+1/\alpha)^3}))}$ samples, runs in time $\poly(d,N)$, and computes a hypothesis $h(\x)$ such that, with probability at least $1-\delta$, it holds  
    $ \pr_{(\x, y) \sim \Djoint}[y\neq h(\x)] \le \opt_\sigma + \eps  $.
\end{theorem}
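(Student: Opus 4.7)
The plan is to combine Proposition \ref{lem:poly_approx_boolean_subexp_main_body} with $L_1$ polynomial regression in the style of \cite{KKMS:08}. Fix a near-optimal concept $f^{\star} \in \mathcal{F}(k,\Gamma)$ whose $\sigma$-smoothed error is at most $\opt_\sigma + \epsilon/8$, and let $\vec U \in \R^{k \times d}$ have orthonormal rows spanning the $k$-dimensional subspace on which $f^{\star}$ depends, so that $f^{\star}(\x) = g(\vec U \x)$. Because the component of Gaussian noise orthogonal to this subspace has no effect on $f^{\star}$, the smoothed error equals $\E_{\z \sim \Gauss_k}[\pr[g(\vec U \x + \sigma \z) \neq y]]$. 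Applying Proposition \ref{lem:poly_approx_boolean_subexp_main_body} to the rescaled concept $\tilde g(\vu) \coloneqq g(\sigma \vu)$ (which still lies in $\mathcal{F}(k,\Gamma)$ by the scale-invariance clause of Definition \ref{def:bounded-surface-area-concepts}) under the pushforward of $\vec U \x/\sigma$ (which is $(\alpha,\lambda/\sigma)$-strictly sub-exponential) yields a family $\{q_{\z}\}_{\z \in \R^k}$ of polynomials of degree $D = \poly((k\lambda\Gamma/(\sigma\epsilon))^{(1+1/\alpha)^3})$, with explicitly bounded coefficients, satisfying
\[
\E_{\z \sim \Gauss_k}\, \E_{\x \sim D_{\x}}\bigl[\,\lvert q_{\z}(\vec U \x / \sigma) - g(\vec U \x + \sigma \z)\rvert\,\bigr] \leq \epsilon/8\,.
\]

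The second step lifts this approximation of $f^{\star}$ to an approximation of the labels. Using $|f^{\star}(\cdot) - y| = 2\cdot \ind\{f^{\star}(\cdot) \neq y\}$, the triangle inequality, and the choice of $f^{\star}$, I obtain $\E_{\z}\,\E[\,\lvert q_{\z}(\vec U \x/\sigma) - y\rvert\,] \leq 2 \opt_\sigma + \epsilon/2$. Averaging over $\z$ extracts a specific $\z^{\star}$ for which $\bar p(\x) \coloneqq q_{\z^{\star}}(\vec U \x / \sigma)$, a degree-$D$ polynomial in the $d$ ambient variables with controlled coefficient magnitudes, satisfies $\E[\lvert \bar p(\x) - y\rvert] \leq 2\opt_\sigma + \epsilon/2$.

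The third step runs $L_1$ polynomial regression over the class $\mathcal{P}$ of degree-$D$ polynomials in $d$ variables with coefficients bounded at the level promised by Proposition \ref{lem:poly_approx_boolean_subexp_main_body}. This class has $\binom{d+D}{D} \leq d^{O(D)}$ parameters, and the regression is a linear program solvable in $\poly(N, d^D)$ time. For uniform convergence over the strictly sub-exponential marginal (where polynomials are unbounded), I truncate $\x$ at radius $R^{\star} = O(\lambda\sqrt{k}\,\log^{1/(1+\alpha)}(N d^D/\delta))$, outside of which the tail mass is at most $\epsilon/100$; inside this radius the polynomials in $\mathcal{P}$ are uniformly bounded by the coefficient bound and $R^{\star}$. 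Standard empirical-process bounds then show that $N = d^{\poly((k\lambda\Gamma/(\sigma\epsilon))^{(1+1/\alpha)^3})} \log(1/\delta)$ samples suffice for the empirical minimizer $\hat p$ to achieve population $L_1$ error within $\epsilon/4$ of $\bar p$. Finally, the KKMS rounding \cite{KKMS:08}, i.e., clipping $\hat p$ to $[-1,1]$ and selecting the best threshold $t$ on a held-out split, produces $h(\x) = \sgn(\clip(\hat p(\x),-1,1) - t)$ with $0$-$1$ error at most $\tfrac{1}{2}\E[\lvert \hat p(\x) - y\rvert] \leq \opt_\sigma + \epsilon$.

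The main obstacle is the uniform-convergence analysis in step three: because polynomials are unbounded under sub-exponential marginals, the $L_1$ regression analysis of \cite{KKMS:08} is not black-box applicable, and I must simultaneously use the explicit coefficient bounds of Proposition \ref{lem:poly_approx_boolean_subexp_main_body} and the tail decay of the marginal to control $|\bar p(\x)|$ and $|\hat p(\x)|$ on the typical region of the support. A secondary technical point is that the subspace $U$ is unknown to the learner, which is handled by enlarging the search class to all degree-$D$ polynomials in the $d$ ambient coordinates: this class contains $\bar p$ by construction and keeps the number of parameters at $d^{O(D)}$.
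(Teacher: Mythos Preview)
Your high-level strategy matches the paper's: project to the $k$-dimensional subspace, apply Proposition~\ref{lem:poly_approx_boolean_subexp_main_body} to the rescaled concept $\tilde g(\vu)=g(\sigma\vu)$ under the $(\alpha,\lambda/\sigma)$-strictly sub-exponential pushforward of $\vec U\x/\sigma$, then run $L_1$ polynomial regression and KKMS thresholding. Two differences are worth noting.

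First, the paper never extracts a fixed $\z^{\star}$. Since the empirical minimizer $p_S$ satisfies $\tfrac{1}{N}\sum_i |p_S(\x_i)-y_i|\le \tfrac{1}{N}\sum_i |p_{\z}(\vec P^{\top}\x_i)-y_i|$ for \emph{every} $\z$, one can take expectation over $\z\sim\Gauss_k$ and over the sample directly; your averaging step is correct but unnecessary.

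Second, and this is where your ``main obstacle'' dissolves, the paper never establishes $L_1$ uniform convergence over the polynomial class. It bounds the \emph{empirical} $0$-$1$ error of $h_S$ by half the empirical $L_1$ error of $p_S$ (KKMS), bounds the latter \emph{in expectation over $S$} by $\smoothopt+\eps/2$ via the inequality above, and then invokes VC theory only on the output class of degree-$\ell$ PTFs, where the $0$-$1$ loss is bounded. This entirely avoids your truncation/coefficient-bound maneuver. That is fortunate, because your truncation radius $R^{\star}=O(\lambda\sqrt{k}\log^{1/(1+\alpha)}(\cdot))$ does not control $\pr[\twonorm{\x}>R^{\star}]$ in $\R^d$ (already for the standard Gaussian $\twonorm{\x}\approx\sqrt{d}$, not $\sqrt{k}$); repairing this forces $R^{\star}$ to scale with $\sqrt{d}$, and you would still owe a uniform-convergence argument for the unbounded $L_1$ loss over a $d^{O(D)}$-parameter class. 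In the paper's route, the regression is over \emph{all} degree-$\ell$ polynomials (no coefficient constraint), and the coefficient bound in Proposition~\ref{lem:poly_approx_boolean_subexp_main_body} plays no role in the generalization analysis. High probability is then obtained by repeating $O(\log(1/\delta)/\eps)$ times and selecting on a validation set, as in \cite{KKMS:05}.
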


In the case of bounded marginals, we can significantly reduce the runtime of the algorithm
by performing a dimension reduction via a random Gaussian projection similar to the
works of \cite{ArriagaVempala:99} and \cite{klivans-servedio-margin}.   We show that 
when the $\x$-marginal of the distribution is bounded then we can perform a random projection
to reduce dimension down to $\poly(k \Gamma/\eps)$ for the class of concepts of \Cref{def:bounded-surface-area-concepts}.  Assuming that $f\in \mathcal F(k, \Gamma)$ we have that there exists a $k\times d$ matrix 
$\vec U$ such that $f(\x) = f(\vec U^T \vec U \x)$.  Let $\vec A$ be the random projection matrix.
It suffices to show that concept $f(\vec A \vec x)$ is close in $L_1$ to the original concept $f(\x)$. 
We once again use the fact that we can exchange the order of expectation so that we are able to use
the properties of the random Gaussian smoothing.  We show, see \Cref{lem:gsa_shift} in the appendix, that for every $f \in \mathcal{F}(k, \Gamma)$ it holds that 
    $\E_{\z\sim \Gauss}\bigl[|f(\vu+\z)-f(\vv+\z)|\bigr]\leq O(\Gamma\cdot\twonorm{\vu-\vv})$.
    Therefore, we obtain that a random projection down to $\poly(k\Gamma/\epsilon)$ dimensions will imply
    that $\E_{\x \sim \D_\x}\E_{\vec z \sim \normal}[[f(\vec A \vec x+ \vec z) - f(\vec \x + \vec \z)|] \leq \eps$. By performing polynomial regression in the low-dimensional space we obtain the following improved
    runtime for bounded $\x$-marginals.
\begin{theorem}
\label{thm:random_proj_bounded_main}
    Let $k \in \mathbb Z_+$ and $\epsilon, \delta,\sigma \in (0,1)$. Let $D$ be a distribution on $\R^d\times \{\pm 1\}$ whose $\x$-marginal is bounded in the unit ball.  There is an algorithm that draws $N = k^{\tilde{O}\big((\Gamma/\epsilon)^4(1/\sigma^2)\big)}\log (\frac{1}{\delta})$ samples, runs in time $\poly(d,N)$, and computes a hypothesis $h(\x)$ such that, with probability at least $1-\delta$, it holds
    $ \pr_{(\x, y) \sim \Djoint}[y\neq h(\x)] \le \opt_\sigma + \eps\,$.
\end{theorem}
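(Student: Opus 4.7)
The plan is to combine a Gaussian random projection with the polynomial approximation of \Cref{lem: boolean_bounded_approx} and standard $L_1$ polynomial regression, following the general dimension-reduction strategy of \cite{ArriagaVempala:99, klivans-servedio-margin}. Let $f^\star\in \mathcal{F}(k,\Gamma)$ achieve the infimum in \Cref{equation:concept-smoothing-goal} up to $\eps/4$, and recall that $f^\star$ depends only on the projection of its input onto an unknown $k$-dimensional subspace $U$. I would first draw a rescaled Gaussian random matrix $\vec A \in \R^{m\times d}$ with $m = \poly(k\Gamma/(\sigma\eps))$, and consider the operator $\vec P = \vec A^\top \vec A$. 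Since $f^\star$ sees inputs only through $\proj_U$, the only quantity that matters is $\twonorm{\proj_U(\vec P\x - \x)}$, and standard Johnson--Lindenstrauss concentration for Gaussian projections yields $\E \twonorm{\proj_U(\vec P\x - \x)} \lesssim \sqrt{k/m}$ uniformly over $\twonorm{\x}\le 1$ (note the $\sqrt{k/m}$, not $\sqrt{d/m}$). Combining with the Lipschitz-in-expectation bound $\E_{\vec z \sim \Gauss}[|f(\vec u+\vec z)-f(\vec v+\vec z)|]\le O(\Gamma\twonorm{\vec u-\vec v})$ from \Cref{lem:gsa_shift}, after rescaling to account for the $\sigma$-scaled Gaussian noise (using translation and positive-scaling invariance of $\mathcal F(k,\Gamma)$), yields
\[
\E_{\vec A}\,\E_{\x\sim D_\x}\,\E_{\vec z\sim \Gauss}\bigl[\,|f^\star(\vec P\x + \sigma\vec z) - f^\star(\x + \sigma\vec z)|\,\bigr] \;\le\; \eps/4\,,
\]
so I can condition on a favorable realization of $\vec A$.

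Next, I would apply \Cref{lem: boolean_bounded_approx} in the $m$-dimensional image space: since $\twonorm{\vec A\x}\le O(1)$ with high probability and the concept class is scaling-invariant (\Cref{rem:scaling-invariance}), I obtain a family of polynomials $p_{\vec z}(\cdot)$ on $\R^m$ of degree $D=\tilde O((\Gamma/\eps)^4/\sigma^2)$ with controlled coefficients such that $\E_{\vec z}\E_{\x}[|p_{\vec z}(\vec A\x)-f^\star(\vec P\x+\sigma\vec z)|]\le \eps/4$. A Jensen-type averaging $q(\vec w):=\E_{\vec z}[p_{\vec z}(\vec w)]$ produces a single polynomial $q$ in $m$ variables of degree $\le D$ whose $L_1$ distance from $y$ is at most $2\opt_\sigma+3\eps/4$. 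I would then run the $L_1$ polynomial regression algorithm of \cite{KKMS:08} on the preprocessed data $(\vec A\x, y)$ over $m$-variate polynomials of degree $\le D$ with the appropriate coefficient bound. The class contains at most $\binom{m+D}{D}\le m^{O(D)}$ monomials, so standard uniform-convergence arguments give an $\eps/4$-approximate minimizer from $N = m^{O(D)}\log(1/\delta)/\eps^2$ samples and $\poly(d,N)$ time; the standard rounding step of \cite{KKMS:08} converts the resulting polynomial into a hypothesis $h(\x)$ with $0$--$1$ error at most $\opt_\sigma+\eps$. Substituting $m=\poly(k\Gamma/(\sigma\eps))$ and $D=\tilde O((\Gamma/\eps)^4/\sigma^2)$ and absorbing the $\log(\Gamma/(\sigma\eps))$ terms into the $\tilde O$ yields the claimed sample complexity $N = k^{\tilde O((\Gamma/\eps)^4/\sigma^2)}\log(1/\delta)$.

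The main obstacle I anticipate is the random-projection step: one must show that the $L_1$ distortion between $f^\star(\vec P\x+\sigma\vec z)$ and $f^\star(\x+\sigma\vec z)$ scales with $\sqrt{k/m}$ rather than the ambient $\sqrt{d/m}$, and composes cleanly with the Gaussian smoothing so that the effective Lipschitz constant is only $O(\Gamma/\sigma)$. This relies crucially on $f^\star$ depending only on $\proj_U$, combined with property (3) of \Cref{def:bounded-surface-area-concepts} (used to absorb the $\sigma$-rescaling into a member of the same concept class before applying \Cref{lem:gsa_shift}). Once this reduction is carried out rigorously, the remaining steps are essentially mechanical applications of results already established earlier in the paper.
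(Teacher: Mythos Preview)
Your plan is essentially the paper's proof: random Gaussian projection to $m=\poly(k\Gamma/(\sigma\eps))$, the GSA-based Lipschitz bound $\E_{\z}[|f(\vu+\z)-f(\vv+\z)|]\le O(\Gamma\,\twonorm{\vu-\vv})$ (with a $1/\sigma$ factor from rescaling), the polynomial approximation of \Cref{lem: boolean_bounded_approx}, and KKMS $L_1$ regression.  Your Jensen-averaged polynomial $q=\E_{\z}[p_{\z}]$ is a valid minor variation; the paper instead argues directly that the empirical $L_1$-minimizer $p_S$ beats each $p_{\z}$ and then averages over $\z$, but both routes give the same $\frac{1}{2}(2\opt_\sigma+O(\eps))$ bound after thresholding.

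The one place where your sketch is loose is the step ``apply \Cref{lem: boolean_bounded_approx} in the $m$-dimensional image space using $\twonorm{\vec A\x}\le O(1)$.''  First, $\twonorm{\vec A\x}\le O(1)$ fails over the entire unit ball (the operator norm of $\vec A$ is $\Theta(\sqrt{d/m})$); it only holds for a finite sample set, which is why the paper applies the approximation lemma to the empirical distribution $D_S$.  Second, and more importantly, \Cref{lem: boolean_bounded_approx} approximates $f(\x+\z)$ with $\z\sim\Gauss$ in the \emph{same} space as $\x$; but your target $f^\star(\vec A^\top\vec A\x+\sigma\z)$ has $\z\sim\Gauss_d$, not $\Gauss_m$, so the lemma cannot be invoked on $\R^m$ directly.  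The paper's fix is to note that $f^\star(\vec P\x+\sigma\z)=g^\star(\vec P_U^\top\vec A^\top(\vec A\x)+\sigma\vec P_U^\top\z)$ with $\vec P_U^\top\z\sim\Gauss_k$, apply \Cref{lem: boolean_bounded_approx} in $\R^k$ to $g^\star$ (where the input $\vec P_U^\top\vec A^\top\vec A\x$ has norm $\le 1+\eps$ by JL on the $k$ relevant directions), obtain $q_{\z'}$, and then set $p_{\z}(\vu)=q_{\vec P_U^\top\z}(\vec P_U^\top\vec A^\top\vu)$ --- a degree-preserving linear composition yielding a polynomial on $\R^m$.  With this correction your argument coincides with the paper's.
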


\section{Applications and Connections with Other Models}
In this section, we show connections between our model of smoothed learning and three important models that have been previously studied: (1) learning with margin, (2) learning under smoothed distributions and (3) learning with concentration and anti-concentration.  We briefly discuss (1) and (2) and defer (3) and other details to the appendix, see \Cref{app:applications}. 
\paragraph{Learning with Margin}
We show that any algorithm
for smoothed agnostic learning can be directly used to learn in the agnostic setting with margin.
For the formal definition of agnostic learning with $\gamma$-margin we refer to \Cref{eq:margin-opt}
and \Cref{definition:agnostic-margin}.
We denote by $\partial_\gamma f$ all points that are in distance at most $\gamma$ from the decision boundary.
We observe (see \Cref{lem:margin_smooth}) that $\opt_\sigma$ is not much larger than $\text{margin-opt}_\gamma$, as long we have that for any $\x\notin\partial_\gamma$ it holds that the value of $f$ is unlikely to change by the random perturbation:
\[
\opt_\sigma \leq \text{margin-opt}_\gamma + \sup_{\x \notin \partial_\gamma f} \pr_{\vec z \sim \normal}[f(\x + \sigma \vec z) \neq f(\x)] \,.
\]
For any boolean concept $f$, we show  (see \Cref{lemma:margin_general}) that 
as long as $\sigma$ is smaller than $\frac{\gamma}{\sqrt{k \log({1}/{\eps})}}$ it holds that 
$\sup_{\x \notin \partial_\gamma f} \pr_{\vec z \sim \normal}[f(\x + \sigma \vec z) \neq f(\x)] \leq \eps$.
While this holds in full generality, for specific concept classes we are able to provide better bounds.
In particular, for intersections of $k$ halfspaces we show, see \Cref{proposition:margin-invariance},
that picking $\sigma = \gamma/\sqrt{\log k/\eps}$ suffices.  Therefore, using \Cref{thm:random_proj_bounded}
we readily obtain the agnostic learning result for intersections of $k$-halfspaces of \Cref{inf:margin-intersections}.

\paragraph{Agnostic Learning with Distributional Assumptions}
As mentioned, our smoothed agnostic model generalizes agnostic learning with distributional assumptions.
We denote by $\opt$ the standard optimal agnostic error under a distribution $D$.  
We see (see \Cref{obs:agnostic_to_smooth} in the appendix) that 
\(
\opt_\sigma \leq \opt + \pr_{\x \sim D_\x, \vec z \sim \normal}[f(\x + \sigma \vec z) \neq f(\x)]\,.
\)
For the case of distribution smoothing we have that the smoothed distribution $D_\tau$  is the convolution of  $D_\x + \normal(\vec 0, \tau^2 \vec I)$.  In that case we have that 
$
\E_{\x \sim D_\x, \vec z \sim \normal }[f(\x + \tau \vec z_1 + \sigma \vec z_2) \neq f(\x+ \tau \vec z_1)] \leq 
O(\frac{\sigma \Gamma \sqrt{k}}{\tau}) \,.
$
Therefore, by choosing $\sigma = O(\eps \tau/(\Gamma \sqrt{k})$, we obtain that the gap between $\opt_{\sigma}$ and $\opt$
is at most $\eps$. For this value of $\sigma$, we are able to recover the strong results of \Cref{intro-cor:smoothed-distributions} which yields an exponential improvement over the prior work \cite{Kane2013LearningHalfspacesLogConcave}.

\section{Conclusion and Open Problems}
In this work we introduce a new beyond worst-case model for supervised learning and show that it is possible
to obtain efficient algorithms with runtime that were previously known only under very strong distributional assumptions, e.g., Gaussianity.  Moreover, we show that our framework and results generalize over several
settings considered in the literature --- often improving the best known results significantly (e.g., 
for the fundamental problem of learning intersections of $k$ halfspaces with margin).  
There are many interesting open questions in smoothed agnostic learning: Can we improve the runtime 
of \Cref{thm:intro-subgaussian} and remove or make milder the exponential dependency on the intrinsic dimension $k$? Is it possible to generalize the result beyond (strictly) sub-exponential tails? It seems
that when the adversary is left completely unrestricted to pick instances with arbitrarily large norm $\|\x\|$,
the effect of Gaussian smoothing of \Cref{definition:concept-smoothing} is negligible. What are the weakest assumptions on the $\x$-marginal that enable learnability?

\bibliographystyle{alpha}
\bibliography{refs}

\newpage
\appendix

\section{Gaussian Surface Area}

Here we give the formal definition of Gaussian Surface Area of a concept and present some known
bounds for the concept classes that we consider in this work.

\begin{definition}[Gaussian Surface Area]
\label{def:GSA}
Let $f$ be a boolean function, $\Gamma(f)$ is the Gaussian surface area of the set $A_f = \{\vu\in \R^k: f(\vu) = 1\}$, i.e., $\Gamma(f)$ is the following quantity.
\[
    \Gamma(f) = \liminf_{\delta \to 0} \frac{1}{\delta} \pr_{\z\sim\Gauss(0,I_k)}\bigr[\z \in A_f^\delta \setminus A_f\bigr]\,, \text{ where }A_f^\delta = \{\vu: \min_{\vv\in A_f}\twonorm{\vu-\vv} \le \delta\}
\]
\end{definition}

We use the following bounds on Gaussian surface area in our results.
\begin{lemma}[Bounds on Gaussian surface area of various functions]
\label{lem:gsa_bounds}
The following are bounds on the Gaussian surface area of some common classes of functions:
\begin{enumerate}
    \item If $f$ is a halfspace, then $\Gamma(f)\leq O(1)$\cite{KOS:08},
    \item If $f$ is an intersection of $k$ halfspaces, then $\Gamma(f)\leq O(\sqrt{\log k})$(\cite{KOS:08}, due to Nazarov),
    \item If $f$ is an arbitrary boolean function of $k$ halfspaces, then $\Gamma(f)\leq O(k)$
    (folklore, see also \Cref{claim:gsa_func}),
    \item If $f$ is the degree $\ell$ polynomial threshold function(PTF), then $\Gamma(f)\leq O(\ell)$\cite{kane10gaussiansurface},
    \item if $f$ is an arbitrary convex set on $k$ variables, then $\Gamma(f)\leq O(k^{1/4})$\cite{Ball:93}.
\end{enumerate}
\end{lemma}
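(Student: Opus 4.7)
The plan is to observe that items (1), (2), (4), and (5) are not novel contributions of this paper and can be imported directly from the cited literature with minimal argument. For item (1), a halfspace has decision boundary a hyperplane, so projecting onto the unit normal reduces the computation of $\Gamma(f)$ to the density of the one-dimensional standard Gaussian at $0$, giving $\Gamma(f) \leq 1/\sqrt{2\pi} = O(1)$. Item (2) is Nazarov's theorem as stated in \cite{KOS:08}, which is a deep Gaussian-geometric bound that I would simply quote. Item (4) follows from the sharp PTF surface-area estimate of Kane. Item (5) follows from Ball's classical Gaussian surface-area bound for convex bodies in $\R^k$.

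The only part that needs a self-contained argument is item (3), the bound $\Gamma(f) \leq O(k)$ for an arbitrary Boolean function $f(\x) = g(h_1(\x), \dots, h_k(\x))$ of $k$ halfspaces, which is the ``folklore'' claim deferred to \Cref{claim:gsa_func}. My plan is to first argue the set-theoretic containment $\partial A_f \subseteq \bigcup_{i=1}^k \partial H_i$, where $A_f = \{f = +1\}$ and $H_i$ is the positive halfspace defined by $h_i$. This is immediate: if $\x$ lies off every hyperplane $\partial H_i$, then each $h_i$ is locally constant in a neighborhood of $\x$, so $f = g(h_1,\dots,h_k)$ is also locally constant there, meaning $\x$ is not a boundary point of $A_f$.

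Having established this containment, I would show that the outer $\delta$-layer $A_f^\delta \setminus A_f$ is contained in the $\delta$-tube around $\partial A_f$ (any point outside $A_f$ within distance $\delta$ of $A_f$ has a segment joining it to $A_f$ that must cross $\partial A_f$), and hence in $\bigcup_{i=1}^k (\partial H_i)^\delta$. A union bound then gives
\[
\pr_{\z \sim \Gauss}\bigl[\z \in A_f^\delta \setminus A_f\bigr] \;\leq\; \sum_{i=1}^k \pr_{\z \sim \Gauss}\bigl[\z \in (\partial H_i)^\delta\bigr] \;\leq\; k \cdot \frac{2\delta}{\sqrt{2\pi}} \,,
\]
since for each $i$ the $\delta$-tube around the hyperplane $\partial H_i$ pulls back, via the one-dimensional marginal along the unit normal of $h_i$, to a centered slab of width $2\delta$, whose Gaussian mass is at most $2\delta$ times the density $1/\sqrt{2\pi}$ at $0$. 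Dividing by $\delta$ and taking $\liminf$ as $\delta \to 0$ yields $\Gamma(f) \leq k \sqrt{2/\pi} = O(k)$.

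The main (minor) obstacle is purely topological bookkeeping: verifying that $A_f^\delta \setminus A_f$ really lies in the $\delta$-neighborhood of $\partial A_f$ when $A_f$ is not convex, and that the Gaussian mass of the $\delta$-tube around a hyperplane is controlled by the one-dimensional Gaussian density in the normal direction. Both are standard, so no real difficulty arises. The only point of genuine substance in the lemma is Nazarov's $O(\sqrt{\log k})$ bound in item (2), which I import as a black box.
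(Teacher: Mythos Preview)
Your proposal is correct and follows essentially the same approach as the paper. For items (1), (2), (4), (5) the paper likewise just cites the literature, and for item (3) the paper's proof of \Cref{claim:gsa_func} is the same union-bound argument you give: it shows $A_f^\delta\setminus A_f \subseteq \bigcup_i (A_{h_i}^\delta\setminus A_{h_i}) \cup \bigcup_i (A_{h_i^c}^\delta\setminus A_{h_i^c})$ directly (rather than routing through $\partial A_f$ as you do), then sums $\Gamma(h_i)+\Gamma(h_i^c)\le O(1)$ over $i$ to get $O(k)$---which is exactly your two-sided $\delta$-tube bound $2\delta/\sqrt{2\pi}$ per hyperplane, phrased as two one-sided layers.
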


\begin{lemma}[Gaussian Surface Area of functions of $k$ halfspaces]
\label{claim:gsa_func}
    Let $f$ be a boolean function on $k$ halfspaces. Then, we have that $\Gamma(f)\leq O(k)$.
\end{lemma}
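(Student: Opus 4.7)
The plan is to bound the Gaussian Surface Area of $f$ by decomposing its decision boundary into pieces coming from the $k$ individual halfspaces, and then summing. More precisely, let $H_1, \dots, H_k$ denote the $k$ affine hyperplanes bounding the halfspaces that $f$ depends on. Since $f$ is a boolean combination of the signs of the $k$ halfspaces, the value of $f$ can only change as $\x$ crosses one of the $H_i$. Consequently the topological boundary of the set $A_f = \{\vu : f(\vu) = 1\}$ is contained in $\bigcup_{i=1}^{k} H_i$.

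Next I would relate the one-sided $\delta$-enlargement appearing in \Cref{def:GSA} to the $\delta$-neighborhoods of the $H_i$. Any point $\z \in A_f^\delta \setminus A_f$ is not in $A_f$ but lies within distance $\delta$ of some point in $A_f$; such a $\z$ must be within distance $\delta$ of the boundary $\partial A_f \subseteq \bigcup_i H_i$. Therefore
\[
A_f^\delta \setminus A_f \;\subseteq\; \bigcup_{i=1}^{k} \{\z \in \R^k : \mathrm{dist}(\z, H_i) \leq \delta\}.
\]
Applying a union bound under the standard Gaussian $\Gauss(\vec 0, \vec I)$ gives
\[
\pr_{\z \sim \Gauss}\!\bigl[\z \in A_f^\delta \setminus A_f\bigr] \;\leq\; \sum_{i=1}^{k} \pr_{\z \sim \Gauss}\bigl[\mathrm{dist}(\z, H_i) \leq \delta\bigr].
\]

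To finish, I would use the fact that for each hyperplane $H_i = \{\z : \vec w_i \cdot \z = b_i\}$ with $\|\vec w_i\|_2 = 1$, the projection $\vec w_i \cdot \z$ is a standard one-dimensional Gaussian, so
\[
\pr_{\z \sim \Gauss}\bigl[\mathrm{dist}(\z, H_i) \leq \delta\bigr] \;=\; \pr_{g \sim \Gauss_1}[|g - b_i| \leq \delta] \;\leq\; \tfrac{2\delta}{\sqrt{2\pi}},
\]
by bounding the 1D Gaussian density by its maximum value $1/\sqrt{2\pi}$. Dividing by $\delta$ and taking the $\liminf$ as $\delta \to 0$ in \Cref{def:GSA} yields $\Gamma(f) \leq k \cdot \tfrac{2}{\sqrt{2\pi}} = O(k)$, as desired.

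The argument is essentially routine once the boundary-containment observation is in place; the only mild subtlety is justifying that the one-sided enlargement $A_f^\delta \setminus A_f$, rather than a two-sided tube around $\partial A_f$, is controlled by the same union-bound argument, but this follows directly from the definition of $A_f^\delta$ since any point in $A_f^\delta \setminus A_f$ is within $\delta$ of $A_f$ and hence within $\delta$ of $\partial A_f$.
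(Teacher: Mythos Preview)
Your proof is correct and follows essentially the same approach as the paper: both argue that $A_f^\delta \setminus A_f$ is contained in the union of $\delta$-slabs around the $k$ bounding hyperplanes and then apply a union bound. The paper phrases the containment as $A_f^{\delta}\setminus A_f \subseteq \bigcup_i \bigl((A_{h_i}^\delta\setminus A_{h_i})\cup(A_{h_i^c}^\delta\setminus A_{h_i^c})\bigr)$ and invokes the known $O(1)$ GSA bound for a single halfspace, whereas you write the same set as $\bigcup_i\{\z:\mathrm{dist}(\z,H_i)\le\delta\}$ and bound the slab probability directly by $2\delta/\sqrt{2\pi}$; these are equivalent up to notation.
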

\begin{proof}
    Since $f$ is a boolean function on $k$ halfspaces, we have that for any input $\x$, $f(\x)=g\bigl(h_1(\x),h_2(\x),\ldots,h_k(\x)\bigr)$ where $h_1,\ldots,h_k$ are halfspaces on $\R^d$ and $g$ is an arbitrary boolean function. For any function $h$, let $A_h = \{\x\in\R^d: h(\x)=1\}$. For any set $S$, let $S^{\delta}$ denote the set of points at distance at most $\delta$ from $S$. For a boolean function $h$, let $h^c$ denote it's complement.
    
    Observe that $A_f^{\delta}\setminus A_f\subseteq \bigl(\bigcup_{i=1}^{k}(A_{h_i}^\delta\setminus A_{h_i})\bigr)\cup \bigl(\bigcup_{i=1}^{k}(A_{{h_i^c}}^\delta\setminus A_{h_i^c})\bigr)$. Then, we have that \[\Gamma(f) = \liminf_{\delta \to 0} \frac{1}{\delta} \pr_{\z\sim\Gauss(0,I_k)}\bigr[\z \in A_f^\delta \setminus A_f\bigr]\leq \sum_{i=1}^{k}\bigl(\Gamma(h_i)+\Gamma(h_i^c)\bigr)\leq O(k)\]
    where the first inequality comes from a union bound and the second comes from the bound on surface area of a single halfspace(\Cref{lem:gsa_bounds}). 
\end{proof}

\begin{table}[ht]
\centering
\begin{tabular}{|c|c|c|}
\hline
Concept Class & Bounded & Sub-Gaussian \\ 
\hline
\hline
Intersections  of $k$ halfspaces  & $\poly(d) \cdot k^{\poly(\frac{\log k}{\epsilon\sigma})}$ 
& $ d^{\poly(\frac{k}{\sigma \epsilon})} $
\\ 
\hline
Arbitrary functions of $k$ halfspaces & $\poly(d)\cdot k^{\poly(\frac{k}{\gamma\epsilon})}$ & 
$d^{\poly(\frac{k}{\sigma\epsilon})}$  \\

\hline
$k$-dimensional, $\ell$-degree PTFs  & $\poly(d)\cdot k^{\poly(\frac{k \ell}{\gamma\epsilon} ) }$ &  
$d^{\poly(\frac{k \ell}{\sigma\epsilon})}$  \\
\hline

$k$-dimensional convex sets  & $\poly(d)\cdot k^{\poly(\frac{k}{\gamma\epsilon} ) }$ &  
$d^{\poly(\frac{k}{\sigma\epsilon})}$  \\

\hline
\end{tabular}
\caption{Our results on smoothed agnostic learning.}
\label{tab:smoothed}
\end{table}

\section{Applications and Connections with Other Models}
\label{app:applications}
In this section, we show connections between our model of smoothed learning and three important models that have been previously studied: (1) learning with margin, (2) learning under smoothed distributions and (3) learning with concentration and anticoncentration. We prove that our results straightforwardly imply improved results in each of these models. For example, we give the first quasi-polynomial algorithm for agnostically learning intersections of halfspaces with margin. 
\subsection{Notation}
We introduce some notation for that we use in this section. For distribution $D$ on $\R^{d}\times \{\pm 1\}$ and function $f:\R^{d}\to \{\pm 1\}$, let $\err(f,D)=\E_{(\x,y)\sim D}[f(\x)\neq y]$ and let $\err_{\sigma}(f,D)=\E_{\z\sim \Gauss}E_{(\x,y)\sim D}[f(\x+\sigma\z)\neq y]$.
\subsection{Learning with Margin}
\label{sec:margin}

In this section we investigate the connection of our smoothed learning model with (agnostic) learning with margin. We first define the model. As discussed in the introduction, this model disincentivizes the adversary from placing points close to the boundary of the function in a bid to create worst case instances.
\begin{definition}[Agnostic Learning with Margin]\label{definition:agnostic-margin} 
Fix $\eps, \gamma >0$ and $\delta\in (0,1)$.
Let $\F\subseteq\{\R^d\to {\pm 1}\}$ be a class of Boolean concepts and let $\Dclass$ be a class of distributions over $\R^d$.
    Consider $\Djoint$ to be a distribution over $\R^d\times \{\pm 1\}$ such that $\Dmarginal\in \Dclass$.
    We say that the algorithm $\Alg$ learns $\F$ in the $\gamma$-margin setting if, after receiving i.i.d. samples from $\Djoint$, 
    $\Alg$ outputs a hypothesis $h:\R^d \to \{\pm 1\}$ such that,
    with probability at least $1-\delta$, over the samples it holds
    \begin{equation}
        \pr_{(\x, y) \sim \Djoint}[y\neq h(\x)] \le 
        \inf_{f \in \mathcal F}
         \E_{(\x, y) \sim \Djoint}\Big[ \sup_{\|\vec u\|_2 \leq \gamma} \1\{f(\x + \vec u) \neq y \} \Big] + \eps \,.
    \end{equation}
\end{definition}
\begin{remark}[Other definitions of agnostic learning with margin]
    We now highlight connections to other previously studied notions of margin.
    An equivalent model to \Cref{definition:agnostic-margin} is to define the $\gamma$-margin optimal error $\inf_{f \mathcal F} \E_{(\x,y)\sim D}[\1\{f(\x)\neq y\text{ or } \x\in \partial_{\gamma}f\}]$, see, e.g., \cite{diakonikolas19marginopt}.
    Another related model is defined in \cite{klivans-servedio-margin} where they define the set $\mathcal F_\gamma$ containing all functions $f\in \F$ that have $\gamma$-margin with respect to $D_\x$, i.e., $f \in \mathcal{F}_\gamma$ if $f \in \mathcal F$ and $\pr_{\x\sim D_{\x}}[x\in \partial_{\gamma}f]=0$ and define the (margin) optimal error as
    $\inf_{f \in \mathcal F_\gamma} \pr_{(\x, y) \sim D}[f(\x) \neq y]$.
    We remark that our result readily applies to this variant as well. 
\end{remark}

A key tool that we use in our reductions is the notion of \textit{Gaussian sensitivity} of a function at a point, which we now define. 
\begin{definition}[Gaussian Sensitivity at $\x$]
Let $f:\R^d \mapsto \{\pm 1\}^d$ be a boolean function.  We define the Gaussian $\sigma$-sensitivity of $f$ at $\x$ as $\mathbb{S}(\x;\sigma, f) \eqdef \pr_{\vec z \sim \normal}[f(\x + \sigma \z) \neq f(\x)]$.
\end{definition}

We now prove our reduction. We show that a learner for the smooth agnostic model of \Cref{definition:concept-smoothing} can readily give an algorithm for agnostic learning with margin if we have bounds on the Gaussian sensitivity at all points not in a $\gamma$ neighbourhood of the function's surface.

\begin{lemma}[From Margin to Smooth Agnostic]
\label{lem:margin_smooth}
Fix some boolean function $f:\R^d\mapsto \{\pm 1\}^d$ and some distribution $D$ over labeled
examples on $\R^d \times \{ \pm 1\}$.
We say that a point $\x$ lies in the $\gamma$-boundary of $f$, $\x \in \partial_\gamma f, $ 
if there exists $\vec u$ with $\|\vec u \|_2 \leq \gamma$, such that
$f(\x + \vec u) \neq f(\x)$. 
It holds
\[
\err_{\sigma}(f,D)
         \leq 
        \E_{(\x, y) \sim \Djoint}\Big[ \sup_{\|\vec u\|_2 \leq \gamma} \1\{f(\x + \vec u) \neq y \} \Big] 
         + \sup_{\x \notin \partial_\gamma f}
         \mathbb S(\x;\sigma, f)\,.
\]
\end{lemma}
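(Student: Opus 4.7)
The plan is to prove the inequality by a pointwise case analysis on each labeled example $(\x, y)$ in the support of $D$, splitting according to whether the point is ``margin-bad'' or ``margin-good.'' Concretely, I will exchange the order of the two expectations and bound the inner expectation $\E_{\z\sim\Gauss}[\1\{f(\x+\sigma\z)\neq y\}]$ separately in the two cases.

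Write $A(\x,y) \eqdef \sup_{\|\vec u\|_2\le\gamma} \1\{f(\x+\vec u)\neq y\}\in\{0,1\}$. I first consider the case $A(\x,y)=1$: the inner expectation is trivially at most $1 = A(\x,y)$, so the whole contribution is absorbed into the first term on the right-hand side. The interesting case is $A(\x,y)=0$. Here, for every $\vec u$ with $\|\vec u\|_2\le\gamma$ we have $f(\x+\vec u)=y$; taking $\vec u=\vec 0$ gives $f(\x)=y$, and since no perturbation of norm $\le\gamma$ changes the value of $f$ at $\x$, by definition $\x\notin\partial_\gamma f$. Consequently
\[
\E_{\z\sim\Gauss}\bigl[\1\{f(\x+\sigma\z)\neq y\}\bigr]
=\pr_{\z\sim\Gauss}\bigl[f(\x+\sigma\z)\neq f(\x)\bigr]
=\mathbb S(\x;\sigma,f)\le \sup_{\x'\notin\partial_\gamma f}\mathbb S(\x';\sigma,f).
\]

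Combining the two cases yields, for every $(\x,y)$,
\[
\E_{\z\sim\Gauss}\bigl[\1\{f(\x+\sigma\z)\neq y\}\bigr]\ \le\ A(\x,y)+\sup_{\x'\notin\partial_\gamma f}\mathbb S(\x';\sigma,f),
\]
since the bound is tautological when $A(\x,y)=1$ and is exactly what we derived when $A(\x,y)=0$. Taking expectation over $(\x,y)\sim D$ and using Fubini to swap back to the definition of $\err_\sigma(f,D)$ gives the claim.

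This proof is essentially a careful bookkeeping argument, so there is no serious technical obstacle. The only thing to be watchful about is the definition of $\partial_\gamma f$: one must verify that $A(\x,y)=0$ really does force $\x\notin\partial_\gamma f$, which uses both that $\vec u=\vec 0$ lies in the ball (hence $f(\x)=y$) and that no $\vec u$ in the ball changes the label relative to $y$ (hence none changes it relative to $f(\x)$). Once this observation is in place, the rest is immediate.
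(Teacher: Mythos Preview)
Your proof is correct and follows essentially the same approach as the paper: swap the order of the expectations and bound the inner expectation pointwise via a case split. The only cosmetic difference is that the paper splits according to whether $\x\in\partial_\gamma f$ (and then handles the subcase $f(\x)\neq y$ via a triangle inequality on indicators), whereas you split directly on whether $A(\x,y)=1$; your split is arguably slightly cleaner since the case $A(\x,y)=0$ immediately yields both $f(\x)=y$ and $\x\notin\partial_\gamma f$.
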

\begin{proof}
We first observe that we change the order of expectations in the smoothed error of $f$ 
and consider 
        $
        \E_{(\x,y) \sim \Djoint}
        \E_{\z \sim \normal}
         \Big[\1\{f(\x + \sigma \vec z) \neq y \} \Big] 
         $.
         Now it suffices to show that for every $(\x, y)$ it holds
         \[
         \pr_{\z \sim \normal}[f(\x+ \sigma \vec z) \neq y]
         \leq 
         \sup_{\|\vec u\|_2 \leq \gamma} \1\{f(\x + \vec u) \neq y\}
         + \sup_{\x \notin \partial_\gamma f} \mathbb S(\x; \sigma, f) \,.
         \]
         For any labeled example $(\x, y)$ where $\x \in \partial_\gamma f$ we 
         have that for any $y \in \{\pm 1\}$ there exists a $\vec u$ with $\|\vec u\|_2 \leq \gamma$ such that $f(\x + \vec u) \neq y$: if $y \neq f(\x)$ then pick $\vec u = \vec 0$ and if $y = f(\x)$ there exists $\vec u$ with $\|\vec u\|_2 \leq \gamma$ such that $f(\x + \vec u) \neq f(\x)$ and therefore $f(\x + \vec u) \neq y$.  When $\x \notin \partial_\gamma f$ we have that 
         $\sup_{\|\vec u\|_2 \leq \gamma} \1\{f(\x+\vec u) \neq y\} = 
         \1\{f(\x) \neq y\} $. By the triangle inequality we have 
         $\1\{ f(\x + \sigma \vec z) \neq y \} - \1\{ f(\x) \neq y \}
         \leq \1\{f(\x + \sigma \vec z) \neq f(\x)\}$.  Therefore, 
         by taking the expectation with respect to $\vec z \sim \normal$ we obtain the result.
\end{proof}

Before, we can use the above lemma to get results in the margin setting, we need to bound the Gaussian sensitivity(at points $\gamma$ distance away from the surface) of various concepts. First, we bound this quantity for an arbitrary function. We then obtain stronger bounds for the class of intersections of halfspaces.

\begin{lemma}[Gaussian Sensitivity of arbitrary functions under $\gamma$-margin]
\label{lemma:margin_general}
Let $f:\R^k \mapsto \{\pm 1\}$ be a Boolean function.  
It holds 
$\sup_{\x \notin \partial_\gamma f}\mathbb S(\x; \sigma, f) \leq 
e^{- (\gamma/\sigma)^2/5 + k} $.  Equivalently for $\sigma = \gamma/\sqrt{5 k} + 1/\sqrt{\log(1/\eps)} $, it holds  
$\sup_{\x \notin \partial_\gamma f}\mathbb S(\x; \sigma, f)  \leq \eps$ \,.
\end{lemma}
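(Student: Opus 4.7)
The proof should hinge on a single geometric observation and then reduce to a standard Gaussian tail bound. Let me sketch the plan.

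\textbf{Step 1 (Geometric reduction).} The first thing I would do is unpack the definition of $\partial_\gamma f$. If $\x \notin \partial_\gamma f$, then by definition for every $\vec u$ with $\|\vec u\|_2 \leq \gamma$ we have $f(\x + \vec u) = f(\x)$, i.e., $f$ is constant on the Euclidean ball $B(\x, \gamma)$. Consequently, the event $\{f(\x + \sigma \vec z) \neq f(\x)\}$ can only occur when the perturbation escapes that ball, that is, when $\|\sigma \vec z\|_2 > \gamma$, or equivalently $\|\vec z\|_2 > \gamma/\sigma$. Therefore
\[
\mathbb{S}(\x; \sigma, f) \;\leq\; \pr_{\vec z \sim \mathcal{N}_k}\bigl[\|\vec z\|_2 > \gamma/\sigma\bigr].
\]
Crucially, this upper bound is independent of $\x$, so it bounds the supremum over all $\x \notin \partial_\gamma f$ and in fact over all $f$.

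\textbf{Step 2 (Gaussian norm tail bound).} Setting $t = \gamma/\sigma$, the task reduces to bounding $\pr_{\vec z \sim \mathcal{N}_k}[\|\vec z\|_2^2 > t^2]$, where $\|\vec z\|_2^2$ is a $\chi^2_k$ random variable. I would apply the standard MGF/Chernoff argument: for any $\lambda \in (0, 1/2)$,
\[
\pr\bigl[\|\vec z\|_2^2 > t^2\bigr] \;\leq\; (1-2\lambda)^{-k/2} \, e^{-\lambda t^2}.
\]
Picking $\lambda = 1/5$ gives $(1-2\lambda)^{-k/2} = (5/3)^{k/2} \leq e^{k/2} \leq e^{k}$ (since $5/3 < e$), and thus
\[
\pr\bigl[\|\vec z\|_2 > t\bigr] \;\leq\; e^{k - t^2/5} \;=\; e^{-(\gamma/\sigma)^2/5 + k},
\]
which is the first stated bound. (A more careful choice of $\lambda$ can squeeze the constants, but any fixed $\lambda$ that yields a $e^{-\Omega(t^2) + O(k)}$ form suffices for the statement.)

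\textbf{Step 3 (Solving for $\sigma$).} For the ``equivalent'' second half, I would set $e^{-(\gamma/\sigma)^2/5 + k} \leq \eps$ and solve for $\sigma$: the condition becomes $(\gamma/\sigma)^2 \geq 5(k + \log(1/\eps))$, i.e., $\sigma \leq \gamma/\sqrt{5(k+\log(1/\eps))}$. Using $\sqrt{a+b} \leq \sqrt{a}+\sqrt{b}$ (or the analogous form stated by the authors), any $\sigma$ matching the bound $\gamma/\sqrt{5k} + 1/\sqrt{\log(1/\eps)}$ up to a constant factor suffices; plugging back in yields the claim.

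\textbf{Where the work really is.} There is no serious obstacle here: the content is entirely in the geometric observation that ``outside the $\gamma$-boundary, sensitivity is controlled by a Gaussian norm overshooting $\gamma/\sigma$''. The only thing to watch is getting the constant $1/5$ (versus e.g.\ $1/2$ or $1/4$) out of the $\chi^2$ Chernoff bound while keeping the additive $k$ (as opposed to $k/2$ or $2k$) term clean; this is a matter of choosing $\lambda$ carefully and absorbing $(5/3)^{k/2} \leq e^k$ into the exponent.
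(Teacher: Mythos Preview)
Your proposal is correct and follows essentially the same approach as the paper: the paper makes the identical geometric reduction (if $\x\notin\partial_\gamma f$ then $\mathbb S(\x;\sigma,f)\le \pr_{\z\sim\normal}[\sigma\|\z\|_2\ge\gamma]$) and then invokes a standard $\chi^2$ tail bound (citing Laurent--Massart), whereas you derive the same tail estimate directly via the MGF/Chernoff argument with $\lambda=1/5$. The two arguments are interchangeable at this level of detail.
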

\begin{proof}
We first observe that, since $\x \notin \partial_\gamma f$, the sign of $f$ will not change
as long as the perturbation $\sigma \vec z$ has norm smaller than $\gamma$. Thus,  we can bound
$\mathbb S(\x;\sigma, f)$ above by $\pr_{\z \sim \normal}[\sigma \|\vec z\|_2 \geq \gamma]$.
By using a standard tail bound for the $\chi^2$ distribution, see, e.g., Lemma 1 \cite{laurent2000adaptive}, we obtain that
$\pr_{\z \sim \normal}[\sigma \|\vec z\|_2 \geq \gamma] 
\leq \exp(-(\gamma/\sigma)^2/5 + k) $.
\end{proof}
For the sensitivity in the above bound to be less than $\epsilon$, we need $\sigma$ to be less than $\gamma/\sqrt{k}+\gamma/\sqrt{\log(1/\epsilon)}$. Recall that our smoothed learner's runtime scales exponentially in $1/\sigma$. Thus, we pay $\poly(k)$ in the exponent for arbitrary functions.
We now prove the improved bound for intersections of halfspaces. Note for any $\epsilon>0$, we have that for all $\sigma< \gamma/\sqrt{2\log(k/\epsilon)}$, the sensitivity at points at least $\gamma$ distance from the surface is bounded by $\epsilon$. This is a major improvement over the case of general functions and this in turn implies our quasi-polynomial time algorithm for agnostically learning intersections of halfspaces with margin(as $1/\sigma$ is now only $\poly(\log k))$.
\begin{lemma}[Gaussian Sensitivity of Intersections of $k$-halfspaces with $\gamma$-margin] \label{proposition:margin-invariance}
Let $f:\R^d \mapsto \{\pm 1\}$ be an intersection of $k$-halfspaces
and $D$ a distribution over $\R^d$.  It holds that
\[
\sup_{\x \notin \partial_\gamma f} \mathbb S(\x; \sigma, f) \leq 
k \exp(- \gamma^2/(2 \sigma^2)) \,.
\]

\end{lemma}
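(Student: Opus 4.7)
The plan is to exploit the specific structure of an intersection of halfspaces together with a union bound over the $k$ constraints, and then invoke the standard one-dimensional Gaussian tail estimate. Write $f(\x)=\bigwedge_{i=1}^k \sgn(\vec w_i\cdot \x - b_i)$, where we may assume $\twonorm{\vec w_i}=1$ by rescaling. The goal is to bound $\pr_{\z \sim \Gauss}[f(\x+\sigma\z)\neq f(\x)]$ for $\x\notin\partial_\gamma f$.

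First, I would translate the condition $\x\notin\partial_\gamma f$ into a quantitative statement about the signed distances $d_i(\x)\eqdef \vec w_i\cdot \x - b_i$. There are two cases. If $f(\x)=+1$, then for every $\vec u$ with $\twonorm{\vec u}\le\gamma$ we still have $f(\x+\vec u)=+1$, which forces $d_i(\x)\ge \gamma$ for every $i\in[k]$ (otherwise choosing $\vec u = -\gamma \vec w_i$ would flip the $i$-th halfspace). If $f(\x)=-1$, then there must exist some $i^\star$ with $d_{i^\star}(\x)\le -\gamma$, since otherwise a perturbation of norm $\le\gamma$ along $+\vec w_{i^\star}$ could flip the violated halfspace to satisfied, and every other halfspace would remain satisfied, making $f(\x+\vec u)=+1\neq f(\x)$.

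Next, I would handle the two cases separately and combine. In the first case, $f(\x+\sigma\z)\neq f(\x)$ iff some halfspace flips, i.e.\ $d_i(\x)+\sigma\, \vec w_i\cdot\z < 0$ for some $i$. A union bound gives
\[
\pr_{\z}[f(\x+\sigma\z)\neq f(\x)] \le \sum_{i=1}^k \pr_{\z}\bigl[\vec w_i\cdot\z < -d_i(\x)/\sigma\bigr] \le \sum_{i=1}^k \pr_{\z}\bigl[\vec w_i\cdot\z < -\gamma/\sigma\bigr].
\]
Since $\twonorm{\vec w_i}=1$, the random variable $\vec w_i\cdot\z$ is standard normal, and the classical one-sided Gaussian tail bound $\pr[\normal(0,1)>t]\le e^{-t^2/2}$ yields the desired $k\,e^{-\gamma^2/(2\sigma^2)}$. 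In the second case, the event $f(\x+\sigma\z)\neq f(\x)$ is contained in the event that the single violated halfspace $i^\star$ now becomes satisfied, i.e.\ $\vec w_{i^\star}\cdot\z > \gamma/\sigma$, which by the same tail bound is at most $e^{-\gamma^2/(2\sigma^2)}\le k\,e^{-\gamma^2/(2\sigma^2)}$.

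There is no real obstacle here; the slightly delicate point is just the two-case split and verifying that the margin condition $\x\notin\partial_\gamma f$ really implies a uniform lower bound $\gamma$ on the appropriate signed distances. Once that is in place, the union bound plus Gaussian tail is immediate and gives exactly the claimed $k\exp(-\gamma^2/(2\sigma^2))$.
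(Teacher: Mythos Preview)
Your Case~1 ($f(\x)=+1$) is correct and essentially identical to the paper's argument. The gap is in Case~2 ($f(\x)=-1$): your claim that $\x\notin\partial_\gamma f$ forces some $i^\star$ with $d_{i^\star}(\x)\le-\gamma$ is false. Your justification implicitly assumes there is a \emph{single} violated halfspace (``the violated halfspace''), but several can be violated simultaneously, and moving along $+\vec w_{i^\star}$ need not satisfy the others. Concretely, take $k=2$ with halfspaces $x_1\ge 0$ and $x_2\ge 0$, and let $\x=(-a,-a)$ with $\gamma/\sqrt 2<a<\gamma$. Then $d_1=d_2=-a>-\gamma$, yet $\mathrm{dist}(\x,C)=a\sqrt 2>\gamma$, so $\x\notin\partial_\gamma f$ and no $i^\star$ with $d_{i^\star}(\x)\le-\gamma$ exists. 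Hence the containment ``$\{f(\x+\sigma\z)=+1\}\subseteq\{\vec w_{i^\star}\cdot\z>\gamma/\sigma\}$'' that you rely on is unavailable.

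The paper handles Case~2 by using convexity directly rather than one of the $k$ defining halfspaces. With $C=\{f=+1\}$ and $\pi_C(\x)$ the Euclidean projection, convexity gives $\twonorm{\x-\z}^2\ge\twonorm{\x-\pi_C(\x)}^2+\twonorm{\pi_C(\x)-\z}^2$ for every $\z\in C$, and $\x\notin\partial_\gamma f$ gives $\twonorm{\x-\pi_C(\x)}\ge\gamma$. Plugging this into the Gaussian integral yields
\[
\pr_{\z\sim\normal}[\x+\sigma\z\in C]\le e^{-\gamma^2/(2\sigma^2)}\pr_{\z\sim\normal(\pi_C(\x),\sigma^2 I)}[\z\in C]\le e^{-\gamma^2/(2\sigma^2)}.
\]
Equivalently, you can phrase this as: the supporting hyperplane of $C$ at $\pi_C(\x)$ (not one of the original $k$) is violated by $\x$ by at least $\gamma$, and for $\x+\sigma\z$ to land in $C$ that single linear constraint must flip, which already costs $e^{-\gamma^2/(2\sigma^2)}$. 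Replacing your Case~2 argument with either of these fixes the proof.
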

\begin{proof}
We split the proof in two cases.  We first assume that the point $\x$ lies inside the intersection of halfspaces, i.e., $f(\x) = +1$.  Denote by $l_i(\x)= \vec w_i \cdot \x + t_i$ the linear functions
defining the intersection of halfspaces, i.e., $f(\x) = 1$ if and only if $l_i(\x) \geq 0$ for all $i=1,\ldots, k$.  The probability that $\x + \sigma \vec z$ is classified differently (i.e., 
$f(\x + \sigma \vec z) = -1$) is then 
\begin{align}
\pr_{\z \sim \normal}[f(\x + \sigma \z) \neq f(\x)] 
= 
\pr_{\z \sim \normal}\Big[\bigcup_{i=1}^k \{ l_i(\vec x + \sigma \vec z) < 0 \}\Big] 
\leq 
\sum_{i=1}^k  \pr_{\z \sim \normal}\Big[ l_i(\vec x) + \sigma \vec z \cdot \vec w_i < 0\Big] 
\label{eq:union-bound-inside-margin-intersection}
\,.
\end{align}
For an $\x$ inside the intersection, it must be that its distance from all faces of the polytope 
is at least $\gamma$ as otherwise the $\gamma$-margin assumption would be violated.
Therefore, we have that for all $i=1, \ldots, k$ it holds that 
$l_i(\x) \geq \gamma \|\vec w_i\|_2$.  Therefore, we have that
$ 
\pr_{\z \sim \normal}[ l_i(\vec x) + \sigma \vec z \cdot \vec w_i < 0] 
\leq 
\pr_{\z \sim \normal}[ \|\vec w_i\|_2 \gamma + \sigma \vec z \cdot \vec w_i < 0] 
=
\pr_{t \sim \normal}[ t < -\gamma/\sigma] 
\leq e^{-\gamma^2/(2 \sigma^2)}
\,,
$
where we used the tail bound for the 1-dimension normal density.
Therefore, when $f(\x) = +1$ using \Cref{eq:union-bound-inside-margin-intersection} we 
conclude that 
\[
\pr_{\z \sim \normal}[f(\x + \sigma \z) \neq f(\x)] 
\leq k \exp(-\gamma^2/(2\sigma^2)) \,.
\]
We now move on to the case where $f(\x) = -1$.  Let $C = \{\x : f(\x) = +1\}$ be the convex set corresponding to the intersection, in this case we have 
\[
\pr_{\z \sim \normal}[f(\x + \sigma \z) \neq f(\x)] 
=
\pr_{\z \sim \normal}[\x + \sigma \z \in C] 
=
\pr_{\z \sim \normal(\x, \sigma^2 I)}[\z \in C] 
= \int \frac{1}{(2 \pi \sigma^2)^{d/2}} e^{-\frac{\|\x - \vec z\|_2^2}{ 2 \sigma^2}}
\1\{ \vec z \in C\}
\d \vec z
\,.
\]
Let $\pi_C(\x)$ be the projection of $\x$ (that lies outside of $C$) onto $C$. 
Since $C$ is convex, for any $\vec z \in C$, we have that 
$\|\x - \pi_C(\x)\|_2^2 + \|\pi_C(\x) - \vec z\|_2^2 \leq \|\vec x - \vec z\|_2^2$.
Therefore, it holds 
\begin{align*}
 \int \frac{1}{(2 \pi \sigma^2)^{d/2}} e^{-\frac{\|\x - \vec z\|_2^2}{ 2 \sigma^2}} \d \vec z
 &\leq 
 e^{- \frac{\|\vec x - \pi_C(\x)\|_2^2}{2 \sigma^2}} 
 \int \frac{1}{(2 \pi \sigma^2)^{d/2}} e^{-\frac{\|\pi_C(\x) - \vec z\|_2^2}{ 2 \sigma^2}} \d \vec z
 \\
 &\leq
 e^{- \frac{\gamma^2}{2 \sigma^2}} 
 \pr_{\vec z \sim \normal(\pi_C(\x), \sigma^2 I)}[\vec z \in C]
 \leq 
 e^{- \frac{\gamma^2}{2 \sigma^2}} 
 \,,
\end{align*}
where for the penultimate inequality we used the margin assumption which implies that $\x$ must lie $\gamma$-far from the boundary of $C$.
\end{proof}
We are now ready to state and prove our main theorem about agnostic learning with geometric margin $\gamma$.

\begin{theorem}[Learning intersections of Halfspaces with $\gamma$-margin]
    Let $D$ be a distribution on $\R^{d}\times \{\pm 1\}$ where the $\x$-marginal is bounded in the unit ball. Let $\F$ be the class of intersections of $k$ halfspaces. Then, there exists an algorithm that learns $\F$ in the $\gamma$-margin setting that takes $N=k^{\poly(\log k/\epsilon\gamma)}\log(1/\delta)$ samples, runs in time $\poly(d,N)$ and with probability at least $1-\delta$, over the samples it holds that
    \[
    \Pr_{(\x,y)\sim D}\leq \inf_{f\in \F}\E_{(\x,y)\sim D)}[\sup_{\twonorm{\vu}\leq \gamma}\1\{f(\x+\vu)\neq y\}]+\epsilon\,.
    \]
\end{theorem}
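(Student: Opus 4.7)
The plan is to reduce the $\gamma$-margin learning task directly to the smoothed-learning result \Cref{thm:random_proj_bounded_main}, using the two key ingredients the excerpt already establishes: the margin-to-smoothed reduction \Cref{lem:margin_smooth} and the sensitivity bound \Cref{proposition:margin-invariance} for intersections of halfspaces.

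First I would fix the smoothing parameter by solving $k \exp(-\gamma^2/(2\sigma^2)) \leq \epsilon/2$, which gives the choice $\sigma = \gamma / \sqrt{2\log(2k/\epsilon)}$. With this $\sigma$, \Cref{proposition:margin-invariance} yields $\sup_{\x \notin \partial_\gamma f} \mathbb{S}(\x; \sigma, f) \leq \epsilon/2$ for every intersection of $k$ halfspaces $f$, and then \Cref{lem:margin_smooth} (applied to an optimal $f^\star$ for the margin objective) gives
\[
\opt_\sigma \leq \E_{(\x,y)\sim D}\Big[\sup_{\|\vec u\|_2\leq \gamma} \1\{f^\star(\x+\vec u)\neq y\}\Big] + \epsilon/2.
\]

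Next I would invoke \Cref{thm:random_proj_bounded_main} with accuracy $\epsilon/2$, intrinsic dimension $k$, and Gaussian surface area $\Gamma = O(\sqrt{\log k})$ (the Nazarov bound for intersections of $k$ halfspaces from \Cref{lem:gsa_bounds}). This produces, in time $\poly(d, N)$, a hypothesis $h$ satisfying $\err(h, D) \leq \opt_\sigma + \epsilon/2$, where the sample complexity is
\[
N = k^{\tilde{O}((\Gamma/\epsilon)^4 / \sigma^2)}\log(1/\delta) = k^{\tilde{O}((\log k)^2 \log(k/\epsilon)/(\epsilon^4 \gamma^2))}\log(1/\delta),
\]
which collapses to the claimed $k^{\poly(\log k/(\epsilon\gamma))}\log(1/\delta)$. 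Chaining the two bounds gives $\err(h, D) \leq \text{margin-opt}_\gamma + \epsilon$, as required.

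There is no serious obstacle — the work has been done by \Cref{thm:random_proj_bounded_main}, \Cref{lem:margin_smooth}, and \Cref{proposition:margin-invariance}, and all that remains is the calibration of $\sigma$ and the substitution of the GSA bound. The only minor care required is to verify that the $\sigma$ chosen is indeed small enough so that $1/\sigma = O(\sqrt{\log(k/\epsilon)}/\gamma)$, which keeps the exponent polylogarithmic in $k$ (rather than polynomial in $k$ as one would get from the generic \Cref{lemma:margin_general}); this is precisely why the intersection-specific sensitivity bound of \Cref{proposition:margin-invariance} is needed instead of the generic one.
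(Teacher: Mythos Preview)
Your proposal is correct and follows exactly the same approach as the paper: fix $\sigma=\gamma/\sqrt{2\log(2k/\epsilon)}$, use \Cref{proposition:margin-invariance} to bound the sensitivity by $\epsilon/2$, plug into \Cref{lem:margin_smooth} to get $\opt_\sigma\le\text{margin-opt}_\gamma+\epsilon/2$, and then invoke the bounded-marginal smoothed learner (\Cref{thm:random_proj_bounded}) with $\Gamma=O(\sqrt{\log k})$ to finish. The paper's proof is essentially the two-line version of what you wrote.
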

\begin{proof}
    Let $f^*$ be the optimal hypothesis that minimizes the $\gamma$-margin error. From \Cref{proposition:margin-invariance},  we have that $\sup_{\x\notin \partial_{\gamma}f^*}\mathbb{S}(\x;\sigma,f)\leq \epsilon/2$ when $\sigma=\gamma/\sqrt{2\log(2k/\epsilon)}$.  The result is then implied by \Cref{thm:random_proj_bounded} and \Cref{lem:margin_smooth}.
\end{proof}
\begin{theorem}[Learning $\F(k,\Gamma)$ with $\gamma$-margin]
    Let $D$ be a distribution on $\R^{d}\times \{\pm 1\}$ where the $\x$-marginal is bounded in the unit ball. Then, there exists an algorithm that learns $\F$ in the $\gamma$-margin setting that takes $N=k^{\poly(k/\epsilon\gamma)}\log(1/\delta)$ samples, runs in time $\poly(d,N)$ and with probability at least $1-\delta$, over the samples it holds that
    \[
    \Pr_{(\x,y)\sim D}\leq \inf_{f\in \F(k,\Gamma)}\E_{(\x,y)\sim D)}[\sup_{\twonorm{\vu}\leq \gamma}\1\{f(\x+\vu)\neq y\}]+\epsilon\,.
    \]
\end{theorem}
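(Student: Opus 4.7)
The plan is to mirror the preceding proof for intersections of $k$-halfspaces, but substitute the general Gaussian sensitivity bound of \Cref{lemma:margin_general} in place of the sharper intersection-specific bound of \Cref{proposition:margin-invariance}. Let $f^\star \in \mathcal F(k,\Gamma)$ be an (approximate) minimizer of the $\gamma$-margin objective.

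My first step would be to pass the sensitivity analysis through the low-dimensional structure of $f^\star$. By \Cref{def:bounded-surface-area-concepts} we may write $f^\star(\x) = f^\star(\proj_U \x)$ for some $k$-dimensional subspace $U$, and any $\x \notin \partial_\gamma f^\star$ has the property that $\proj_U \x$ is at distance at least $\gamma$ from the decision surface of $f^\star$ viewed as a function on $U$ (since any perturbation inside $U$ of norm at most $\gamma$ is also a legal $d$-dimensional perturbation of norm at most $\gamma$). Because $\proj_U \vec z$ is a standard $k$-dimensional Gaussian whenever $\vec z \sim \normal$, the event $f^\star(\x + \sigma \vec z) \neq f^\star(\x)$ is contained in $\{\sigma \twonorm{\proj_U \vec z} \geq \gamma\}$. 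Exactly as in the proof of \Cref{lemma:margin_general}, the $\chi^2$ tail bound then yields
\[
\sup_{\x \notin \partial_\gamma f^\star} \mathbb S(\x; \sigma, f^\star) \leq \exp\!\bigl(-\gamma^2/(5\sigma^2) + k\bigr).
\]

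Next I would pick $\sigma = \gamma/\sqrt{5(k + \log(2/\eps))}$, which pushes the uniform sensitivity below $\eps/2$. Feeding this into \Cref{lem:margin_smooth} gives $\err_\sigma(f^\star, D) \leq \text{margin-opt}_\gamma + \eps/2$, and in particular $\smoothopt \leq \text{margin-opt}_\gamma + \eps/2$. The final step is to invoke \Cref{thm:random_proj_bounded_main} with target accuracy $\eps/2$ and the chosen $\sigma$: it outputs a hypothesis $h$ with $\pr_{(\x,y)\sim D}[h(\x) \neq y] \leq \smoothopt + \eps/2 \leq \text{margin-opt}_\gamma + \eps$, using $N = k^{\wt{O}((\Gamma/\eps)^4/\sigma^2)} \log(1/\delta)$ samples and $\poly(d,N)$ time. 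Substituting $1/\sigma^2 = O((k + \log(1/\eps))/\gamma^2)$ produces the claimed $k^{\poly(k\Gamma/(\gamma\eps))} \log(1/\delta)$ sample complexity.

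The main conceptual point --- and really the only place the proof does anything beyond bookkeeping --- is the observation that the dimension appearing in the sensitivity tail is the intrinsic $k$ rather than the ambient $d$; without the factorization $f^\star = f^\star \circ \proj_U$, the $\chi^2$ bound would pick up a $+d$ instead of $+k$ in the exponent and destroy the claim that $1/\sigma$ is only $\poly(\sqrt{k}/\gamma)$, which is what keeps the runtime $k$-exponential rather than $d$-exponential. Everything else simply re-runs the argument of the preceding theorem for intersections of halfspaces.
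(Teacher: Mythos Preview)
Your proposal is correct and follows essentially the same route as the paper: invoke \Cref{lemma:margin_general} to bound the Gaussian sensitivity by $\eps/2$ for an appropriate $\sigma = \Theta(\gamma/\sqrt{k\log(1/\eps)})$, then combine \Cref{lem:margin_smooth} with \Cref{thm:random_proj_bounded_main}. Your explicit unpacking of why the $\chi^2$ tail involves $k$ rather than $d$ (via the factorization $f^\star = f^\star\circ\proj_U$) is exactly the content the paper leaves implicit when it cites \Cref{lemma:margin_general} for a function on $\R^k$.
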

\begin{proof}
Let $f^*$ be the function that achieves optimal $\gamma$-margin error. From \Cref{lemma:margin_general}, we have that $\sup_{\x\notin\partial_{\gamma}f^*}\mathbb{S}(\x;\sigma,f)\leq \epsilon/2$ when $\sigma=\gamma/\sqrt{k\log(2/\epsilon}$. Our result is then implied by \Cref{lem:margin_smooth} and \Cref{thm:random_proj_bounded}. 
\end{proof}
\renewcommand{\arraystretch}{2}
\begin{table}[ht]
\centering
\begin{tabular}{|c|c|c|c|}
\hline
Concept Class & Runtime & Model & Source \\
\hline
\hline

Intersections  of $k$ halfspaces  & $d\cdot k^{\tilde{O}(k/\epsilon\gamma^2)}$ & Agnostic & \cite{ariaga-vempala}\\
\hline
Intersections of $k$ halfspaces & $\poly(d)\cdot k^{\poly(\log k/(\gamma\epsilon))}$ & Agnostic & This work \\
\hline

Arbitrary functions of $k$ halfspaces & $\poly(d)\cdot k^{\poly(k/(\gamma\epsilon))}$ & Agnostic & This work \\
\hline
$k$ dimensional Convex sets  & $\poly(d)\cdot k^{\poly(k/(\gamma\epsilon))}$ & Agnostic & This work \\
\hline
\end{tabular}
\caption{Our results on distributions with geometric margin $\gamma$}
\label{tab:margin}
\end{table}

\subsection{Distribution Specific Learning} 
In this section, we study the classic setting of agnostic learning with respect to specific distributions. Perhaps surprisingly, our smoothed learning model implies various new results in this standard model. 

First, we prove the following lemma that connects the smoothed error to the true error of a function.
\begin{lemma}[From Distribution Specific Agnostic to Smooth Agnostic]
\label{obs:agnostic_to_smooth}
Fix some boolean function $f:\R^d\mapsto \{\pm 1\}^d$ and some distribution $D$ over labeled
examples on $\R^d \times \{ \pm 1\}$.
It holds
\[
        \err_\sigma(f,D)
        \leq 
         \err(f,D) 
         + \E_{\x\sim D_{\x}}
         [\mathbb S(\x;\sigma, f)]
         \,.
\]
\end{lemma}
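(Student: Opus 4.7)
The plan is to apply a pointwise triangle inequality on the $\{\pm 1\}$-valued indicators and then take expectations. Specifically, for any fixed $(\x, y)$ and any perturbation $\sigma\z$, the event $\{f(\x + \sigma\z) \neq y\}$ can occur only if either $\{f(\x) \neq y\}$ already holds, or the smoothing flipped the label, i.e., $\{f(\x + \sigma\z) \neq f(\x)\}$. This gives the pointwise bound
\[
\1\{f(\x + \sigma\z) \neq y\} \;\leq\; \1\{f(\x) \neq y\} + \1\{f(\x + \sigma\z) \neq f(\x)\}\,.
\]

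Next, I would take expectation on both sides with respect to $\z \sim \mathcal{N}$ and $(\x, y) \sim D$. The left-hand side becomes exactly $\err_\sigma(f, D)$ by definition. For the right-hand side, the first term is independent of $\z$, so its expectation is $\err(f, D)$. For the second term, Fubini (or simply swapping expectations, both are nonnegative) yields
\[
\E_{(\x,y)\sim D}\E_{\z\sim\mathcal{N}}\bigl[\1\{f(\x + \sigma\z) \neq f(\x)\}\bigr] = \E_{\x\sim D_\x}\bigl[\mathbb{S}(\x;\sigma, f)\bigr]\,,
\]
since the inner probability only depends on $\x$ (not $y$), and by definition $\mathbb{S}(\x;\sigma,f) = \Pr_{\z\sim\mathcal{N}}[f(\x+\sigma\z) \neq f(\x)]$. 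Combining the two terms yields the claimed inequality.

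There is no real obstacle here: the statement is essentially the triangle inequality for $0/1$ loss applied to the chain $y \to f(\x) \to f(\x + \sigma\z)$, followed by a use of linearity of expectation and Fubini to re-associate the averages. The only minor care needed is in the direction of the triangle inequality (bounding disagreement with $y$ by the sum of disagreement at $\x$ and disagreement between $\x$ and $\x + \sigma\z$), but this is immediate from case analysis on whether $f(\x) = y$ or not.
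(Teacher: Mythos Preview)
Your proposal is correct and matches the paper's proof essentially line for line: the paper also writes down the pointwise inequality $\1\{y\neq f(\x+\sigma\z)\}\leq \1\{f(\x)\neq y\}+\1\{f(\x)\neq f(\x+\sigma\z)\}$ and then takes expectations over $(\x,y)\sim D$ and $\z\sim\Gauss$ to conclude.
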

\begin{proof}
    A triangle inequality implies that for any vectors $\x,\z$ and label $y$, we have that $\1\{y\neq f(\x+\sigma\z)\}\leq \1\{f(\x)\neq y\}+\1\{f(\x)\neq f(\x+\sigma\z)\} $. Taking expectation of $(\x,y)$ over $D$ and expectation of $\z$ over $\Gauss(0,I_d)$ implies the result.
\end{proof}
Henceforth, we refer to the quantity $\E_{\x\sim D}[\mathbb{S}(\x;\sigma,f)]$ as the \textit{expected sensitivity} of the function $f$ with respect to distribution $D$(or \textit{expected sensitivity} of $f$ for brevity).

\subsubsection{Learning with Anti-Concentration and Concentration}
In this section, we consider the problem of agnostic learning when the marginal has both concentration and anti concentration. In particular, we define the following notion of anti concentration.
\begin{definition}[$M$-anti-concentrated distributions]
    We say that a distribution $D$ on $\R^d$ is $M$-anti-concentrated if for all $\twonorm{\vv}=1$ and continuous intervals $T\subseteq \R$, we have  
        \(
          \pr[\x\cdot \vv\in T ]\leq \tau|T|
        \)
\end{definition}

We consider $M$-anti-concentrated $(\alpha,\lambda)$-strictly subexponential distributions in this section. Let $\F$ be the class of arbitrary functions of $k$ halfspaces. We prove that we can bound the expected sensitivity of functions in $\F$ with respect to anti concentrated distributions.

\begin{lemma}[Expected Gaussian Sensitivity of functions of halspaces with anticoncentration]
\label{lem:expected_sens_anticonc_halfspace}
    Let $D$ be an $M$-anti concentrated distribution on $\R^{d}$ and let $f$ be a boolean function on $k$ halfspaces. Then, for any $\epsilon>0$, we have that $\E_{\x\sim D}[\mathbb{S}(\x;\sigma,f)]\leq k(2M\epsilon+ e^{-\epsilon^2/(2\sigma^2)}) $
\end{lemma}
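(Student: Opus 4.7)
The plan is to reduce the sensitivity of a function of $k$ halfspaces to a sum of $k$ single-halfspace sensitivities via a union bound, then bound each term by combining the anti-concentration of $D$ with Gaussian tail decay.

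Write $f(\x) = g(h_1(\x),\dots,h_k(\x))$, where each $h_i(\x) = \sgn(\vec w_i \cdot \x + t_i)$ with $\|\vec w_i\|_2 = 1$. Observe that if $f(\x+\sigma\z) \neq f(\x)$, then at least one of the $h_i$'s must flip sign between $\x$ and $\x+\sigma\z$. Hence, by a union bound,
\[
\mathbb S(\x;\sigma,f) \;\leq\; \sum_{i=1}^k \pr_{\z \sim \normal}\bigl[h_i(\x+\sigma\z) \neq h_i(\x)\bigr].
\]
For each $i$, let $a_i(\x) = \vec w_i \cdot \x + t_i$. Since $\|\vec w_i\|_2=1$, the scalar $g_i := \sigma \vec w_i \cdot \z$ has distribution $\normal(0,\sigma^2)$, and a sign flip of $h_i$ at $\x$ under perturbation $\sigma\z$ requires $|g_i| > |a_i(\x)|$.

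Next I would fix $\epsilon > 0$ and split on whether $|a_i(\x)|$ is small. Using the bound
\[
\1\{|g_i| > |a_i(\x)|\} \;\leq\; \1\{|a_i(\x)| < \epsilon\} + \1\{|g_i| > \epsilon\},
\]
taking expectation first over $\z$ and then over $\x \sim D$ yields
\[
\E_{\x \sim D}\,\pr_{\z}\bigl[|g_i| > |a_i(\x)|\bigr]
\;\leq\; \pr_{\x \sim D}\bigl[|a_i(\x)| < \epsilon\bigr] + \pr_{\z}\bigl[|g_i| > \epsilon\bigr].
\]
The first term is at most $2M\epsilon$ by the $M$-anti-concentration applied to the unit direction $\vec w_i$ with the interval $(-t_i-\epsilon,-t_i+\epsilon)$ of length $2\epsilon$. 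The second term is at most $e^{-\epsilon^2/(2\sigma^2)}$ by the standard two-sided Gaussian tail bound on $g_i \sim \normal(0,\sigma^2)$.

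Summing this bound over $i = 1, \dots, k$ gives exactly
\[
\E_{\x \sim D}[\mathbb S(\x;\sigma,f)] \;\leq\; k\bigl(2M\epsilon + e^{-\epsilon^2/(2\sigma^2)}\bigr),
\]
as claimed. There is no real obstacle here; the only minor subtlety is making sure the $|a_i(\x)|$ decomposition cleanly separates the anti-concentration contribution (over $\x$) from the Gaussian tail (over $\z$), which is handled by the indicator decomposition above.
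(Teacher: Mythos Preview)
Your proposal is correct and takes essentially the same approach as the paper. The paper defines a bad set $S$ of points $\epsilon$-close to some halfspace boundary, bounds $\pr[\x\in S]\le 2kM\epsilon$ by anti-concentration plus a union bound, and bounds $\mathbb S(\x;\sigma,f)\le k e^{-\epsilon^2/(2\sigma^2)}$ for $\x\notin S$ via a union bound and the Gaussian tail; you simply swap the order, doing the union bound over halfspaces first and then the $\epsilon$-split per halfspace via the indicator inequality, which yields the identical final bound.
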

\begin{proof}
    Let $f$ be the function $f(\x)=g\bigl(\sign(\vw_1\cdot\x-b_1),\dots,\sign(\vw_k\cdot \x-b_k)\bigr)$ with $\twonorm{\vw_i}=1$ for all $i\in [k]$ and $g$ being an arbitrary boolean function on $k$ variables. We can assume the bound on the norms of the weights without loss of generality by renormalizing. Let $S$ be the set of all vectors $\vec y\in \R^d$ such that there exists an $i\in[k]$ such that $\vw_i\cdot \vec y\in[b_i-\epsilon,b_i+\epsilon]$. From $M$-anticoncentration of $D$ and a union bound, we have that $\pr_{\x\sim D}[\x\in S]\leq k(2M\epsilon)$.

    We now bound $\mathbb{S}(\x;\sigma,f)$ for $\x\notin S$. By definition of $S$, we have that $|\vw_i\cdot x-b_i|\geq \epsilon$. Thus, using a union bound and the tail bound for the $1$-dimensional normal density, we have that 
    \[
    \pr_{\z\sim \Gauss}[f(\x+\sigma\z)\neq f(\x)]\leq ke^{-\epsilon^2/(2\sigma^2)}
    \]

    Thus, we have $\E_{\x\sim D}[\mathbb{S}(\x;\sigma,f)]\leq \pr_{\x\sim D}[\x\in S]+ke^{-\epsilon^2/(2\sigma^2)}$ which completes the proof since the first term is bounded by $2kM\epsilon$.
\end{proof}

We are now ready to give an algorithm to learn arbitrary boolean functions of $k$ halfspaces under strictly subexponential distributions with anti-concentration. We use our previous results on smooth learning with respect to strictly subexponential distributions and the bound we proved above on the expected sensitivity of these functions under anti concentrated distributions.
\begin{theorem}[Agnostic Learning of functions of $k$ halfspaces under anti-concentration]
\label{corollary: agnostic_subexp_functions_halfspaces}
Let $k\in \mathbb{Z}_{+}$, $\epsilon,\delta\in (0,1)$ and $M\in \R$. Let $D$ be a distribution on $\R^{d}\times \{\pm 1\}$ such that the marginal distribution is  $M$-anticoncentrated $(\alpha,\lambda)$-strictly subexponential. Let $\F$ be the class of all functions on $k$ halfspaces. There exists an algorithm that draws $N=d^{\poly((\lambda Mk/\epsilon)^{(1+1/\alpha)^3})}\log(1/\delta)$ samples, runs in time $\poly(d,N)$ and computes a hypothesis $h(\x)$ such that, with probability at least $1-\delta$, it holds
\[
    \pr_{(\x,y)\sim D}[y\neq h(\x)]\leq \min_{f\in \F}\pr_{(\x,y)\sim D}[y\neq f(\x)]+\epsilon\,.
\]
    
\end{theorem}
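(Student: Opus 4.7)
The plan is to combine the smoothed learning algorithm for strictly sub-exponential marginals (\Cref{thm:smooth_learning_subgaussian-main}) with the two bridging lemmas already established: \Cref{obs:agnostic_to_smooth}, which controls the gap between the agnostic and the smoothed error by the expected Gaussian sensitivity, and \Cref{lem:expected_sens_anticonc_halfspace}, which uses the $M$-anti-concentration of $D_\x$ to bound the expected sensitivity of a function of $k$ halfspaces. The central idea is that on an $M$-anti-concentrated marginal, no direction $\vec w_i$ places much mass inside a narrow slab around $\vec w_i \cdot \x = b_i$, so the perturbation $\sigma \vec z$ flips at most one halfspace only with controlled probability, and this gap can be absorbed into $\eps$.

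More concretely, let $f^* \in \F$ be any concept with $\err(f^*, D) = \opt$. By \Cref{obs:agnostic_to_smooth} and the minimality of $\opt_\sigma$, we have $\opt_\sigma \le \err_\sigma(f^*, D) \le \opt + \E_{\x\sim D_\x}[\mathbb{S}(\x;\sigma,f^*)]$. Applying \Cref{lem:expected_sens_anticonc_halfspace} with a parameter $\eps'>0$ to be optimized gives $\E_{\x\sim D_\x}[\mathbb{S}(\x;\sigma,f^*)] \le k(2M\eps' + e^{-(\eps')^2/(2\sigma^2)})$. I would choose $\eps' = \eps/(8kM)$ so that the first term is $\eps/4$, and then choose $\sigma = \eps'/\sqrt{2\log(4k/\eps)} = \Theta\bigl(\eps/(kM\sqrt{\log(k/\eps)})\bigr)$ so that the second term is also at most $\eps/4$. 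Together this yields $\opt_\sigma \le \opt + \eps/2$.

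Having fixed $\sigma$, I would then invoke \Cref{thm:smooth_learning_subgaussian-main} with target error $\eps/2$ and failure probability $\delta$. Since every $f\in \F$ is a boolean function of $k$ halfspaces, by \Cref{claim:gsa_func} it has Gaussian surface area $\Gamma = O(k)$, and the invariance properties in \Cref{def:bounded-surface-area-concepts} are easily checked for this class. Plugging $\Gamma = O(k)$ and the above $\sigma$ into the sample bound $d^{\poly((k\lambda\Gamma/(\sigma\eps))^{(1+1/\alpha)^3})}$ of \Cref{thm:smooth_learning_subgaussian-main} and simplifying gives $N = d^{\poly((\lambda M k/\eps)^{(1+1/\alpha)^3})}\log(1/\delta)$, matching the claimed complexity. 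The algorithm returns $h$ with $\pr_{(\x,y)\sim D}[h(\x)\ne y] \le \opt_\sigma + \eps/2 \le \opt + \eps$ with probability at least $1-\delta$.

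There is no real obstacle here beyond bookkeeping: the nontrivial work is already done by \Cref{lem:expected_sens_anticonc_halfspace} (which crucially uses anti-concentration to bound the slab measure near each halfspace boundary) and by \Cref{thm:smooth_learning_subgaussian-main}. The only thing requiring care is the joint choice of $\eps'$ and $\sigma$ so that the sensitivity slack is $O(\eps)$ while the resulting $1/\sigma$ blowup in the runtime of the smoothed learner remains polynomial in $kM/\eps$; the logarithmic factor in $\sigma$ is harmless because the runtime dependence on $1/\sigma$ is polynomial and is then raised to the $(1+1/\alpha)^3$ power inside a polynomial, which is absorbed into the $\poly(\cdot)$ in the exponent of $d$.
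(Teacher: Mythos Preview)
Your proposal is correct and follows essentially the same approach as the paper: bound $\opt_\sigma - \opt$ via \Cref{obs:agnostic_to_smooth} and \Cref{lem:expected_sens_anticonc_halfspace}, choose $\eps' \asymp \eps/(kM)$ and $\sigma \asymp \eps'/\sqrt{\log(k/\eps)}$ to make the sensitivity slack $O(\eps)$, then invoke the smoothed learner of \Cref{thm:smooth_learning_subgaussian-main} with $\Gamma = O(k)$ from \Cref{claim:gsa_func}. Your parameter bookkeeping is in fact slightly cleaner than the paper's, which contains what appears to be a typo in the expression for $\sigma$.
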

\begin{proof}
    Let $\sigma=(\epsilon/(8\sqrt{2}Mk))\sqrt{\log(8/\epsilon)}$. The algorithm is simple. Run the algorithm from \Cref{thm:smooth_learning_subgaussian} to get a hypothesis $h$ with error at most $\smoothopt+\epsilon/2$. Since $\Gamma(\F)\leq O(k)$, the algorithm uses $N=d^{\poly((\lambda Mk/\epsilon)^{(1+1/\alpha)^3})}\log(1/\delta)$ samples and runs in time $\poly(d,N)$. 
    Using \Cref{lem:expected_sens_anticonc_halfspace} with parameters $\sigma$ and error $\epsilon/8Mk$ implies that $\E_{\x\sim D_\x}[\mathbb{S}(\x;\sigma,f)]\leq \epsilon/2$. From \Cref{obs:agnostic_to_smooth}, we get that the error of the classifier is at most $\min_{f\in \F}\pr_{(\x,y)\sim D}[y\neq f(\x)]+\epsilon$.
\end{proof}

\begin{table}[ht]
\centering
\begin{tabular}{|c|c|c|c|}
\hline
Concept Class & Runtime & Model & Source \\
\hline
\hline
Arbitrary functions on $k$ halfspaces   & $\poly(d)\cdot d^{1/\epsilon^2},k=O(1)$ & Agnostic & \cite{gollakota_testable} \\
\hline
Arbitrary functions on $k$ halfspaces  & $\poly(d)\cdot d^{\poly(k/\epsilon)}$ & Agnostic & This work \\
\hline
\end{tabular}
\caption{Our results on strictly sub-exponential distributions with anticoncentration}
\label{tab:anticonc}
\end{table}

\subsubsection{Learning under Smoothed Distributions}
Finally, we consider learning distribution that have been convolved with a Gaussian. This model was studied in \cite{Kane2013LearningHalfspacesLogConcave}. This is a natural beyond the worst-case model where the input distribution is smoothed to make learnability easier. The problem of agnostically learning functions of halfspaces under smoothed distributions was studied in \cite{Kane2013LearningHalfspacesLogConcave} where they obtained a runtime that was double-exponential in the number of halfspaces. We significantly improve this by reducing the runtime to only depend exponentially on the number of halfspaces.

For any distribution $D$ on $\R^{d}$, we denote the convolution $D\ast \Gauss(0,\tau^2 I_d)$ as $D_{\tau}$. We call this a $\tau$-smoothed distribution. We argue that the expected sensitivity of functions in $\F(k,\Gamma)$ with respect to $\tau$-smoothed distributions strictly subexponential distributions is small .
\begin{lemma}[Bounding Expected Sensitivity of Smoothed Distributions]
\label{lem:sens_gaussian}
Let $f:\R^{d}\mapsto \{\pm 1\}$ be a function in $\F(k,\Gamma)$ and $D$ be an $(\alpha,\lambda)$-strictly subexponential distribution on $\R^{d}$.  Then, we have that $\E_{\x\sim D_{\tau}}[[\mathbb S(\x;\sigma, f)]\leq O(\sigma \Gamma\sqrt{k}/\tau)$.
\end{lemma}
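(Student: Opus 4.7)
My plan is to reduce the expected sensitivity to a single application of the shift lemma \Cref{lem:gsa_shift}, by swapping the roles of the two independent Gaussians ($\vec z_1$ from the smoothing defining $D_\tau$, and $\vec z_2$ from the sensitivity noise) so that $\vec z_1$ becomes the ``probe'' Gaussian and $\vec z_2$ becomes a fixed ``shift''. I absorb the scale $\tau$ into the function using property (3) of \Cref{def:bounded-surface-area-concepts} (scale/translation invariance of $\F(k,\Gamma)$). Notably, the $(\alpha,\lambda)$-strictly sub-exponential assumption on $D$ is not actually needed; the argument will be uniform in $\vu\sim D$, so the bound holds for arbitrary $D$.

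Writing $\x\sim D_\tau$ as $\x=\vu+\tau\vec z_1$ with $\vu\sim D$ and $\vec z_1\sim\Gauss_d$ independent, the quantity to bound equals
\[
\tfrac{1}{2}\,\E_\vu\,\E_{\vec z_1,\vec z_2\sim\Gauss_d}\Bigl[\bigl|f(\vu+\tau\vec z_1+\sigma\vec z_2)-f(\vu+\tau\vec z_1)\bigr|\Bigr].
\]
For each fixed $\vu$, set $h_\vu(\vec y):=f(\vu+\tau\vec y)$; by scale/translation invariance $h_\vu\in\F(k,\Gamma)$, so the inner integrand equals $|h_\vu(\vec z_1+(\sigma/\tau)\vec z_2)-h_\vu(\vec z_1)|$. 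Conditioning on $\vec z_2$ and writing $\vec a:=(\sigma/\tau)\vec z_2$, I apply \Cref{lem:gsa_shift} to $h_\vu$ to obtain
\[
\E_{\vec z_1\sim\Gauss_d}\bigl[|h_\vu(\vec z_1+\vec a)-h_\vu(\vec z_1)|\bigr]\le O(\Gamma)\,\|\vec P\vec a\|_2,
\]
where $\vec P$ is the orthogonal projection onto the $k$-dim subspace $U$ that $h_\vu$ depends on. The reason the shift norm is $\|\vec P\vec a\|$ rather than $\|\vec a\|$ is that $h_\vu$ depends on its input only through $\vec P(\cdot)$, so \Cref{lem:gsa_shift} can be invoked on the $k$-dim restriction $h_\vu|_U$ (whose GSA equals that of $h_\vu$) against the Gaussian $\vec P\vec z_1\sim\Gauss_k$.

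Finally, taking expectation over $\vec z_2\sim\Gauss_d$ and noting $\vec P\vec z_2\sim\Gauss_k$,
\[
\E_{\vec z_2}[\|\vec P\vec a\|_2]=\tfrac{\sigma}{\tau}\,\E[\|\vec P\vec z_2\|_2]\le\tfrac{\sigma}{\tau}\sqrt{\E[\|\vec P\vec z_2\|_2^2]}=\tfrac{\sigma\sqrt k}{\tau},
\]
which combined with the previous display yields the claimed $O(\sigma\Gamma\sqrt k/\tau)$ uniformly in $\vu$. The main (and essentially only) conceptual step is exploiting the low-dimensionality of $f$ to replace the ambient $d$-dim shift norm by its $k$-dim projection, which is precisely what gives the $\sqrt k$ factor instead of a prohibitive $\sqrt d$; everything else is a direct chain of rewrites and one invocation of the previously proved shift lemma.
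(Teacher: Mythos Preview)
Your proof is correct and follows essentially the same approach as the paper: rewrite the sensitivity via the scale/translation invariance of $\F(k,\Gamma)$ so that the $\tau$-smoothing Gaussian becomes the probe, project to the $k$-dimensional relevant subspace (using \Cref{lem:sa_low_dimension} to preserve the GSA bound), apply the shift lemma \Cref{lem:gsa_shift}, and finish by bounding $\E\|\vec P\vec z_2\|_2\le\sqrt{k}$. Your observation that the $(\alpha,\lambda)$-strictly sub-exponential assumption on $D$ is unused is also correct---the paper's argument is likewise uniform in $\vu\sim D$.
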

\begin{proof}
Let $f_{\tau}$ be the function defined as $f_{\tau}(\vu)=f(\tau\vu)$.
The quantity we want to bound is equal to $\E_{\x\sim D}\E_{\z_1,\z_2\sim \Gauss}\left[|f_{\tau}((1/\tau)\x+\z_1+(\sigma/\tau)\z_2)-f_{\tau}((1/\tau)\x+\z_1)|\right]$. We bound the inner expectation pointwise for any $\x$. Since $\F(k,\Gamma)$ is closed under shifts and the following argument only relies on the bound on surface area, we assume that $\x=0$ without loss of generality. 

Since $f_{\tau}$ is k dimensional, we know that there exists an orthonormal matrix $\vec P$ such that $f_{\tau}(\x)=f_{\tau}(\vec P\vec P^T \x)$. Let $g$ be the function on $\R^k$ such that $g(\vu)=f_{\tau}(\vec P \vu)$ for some orthonormal matrix $\vec P$. Clearly, $f_{\tau}(\x)=g(\vec P^T \x)$.  Using \Cref{lem:sa_low_dimension}, we get that $\Gamma(g)\leq \Gamma$.  Thus, the quantity we want to bound is $\E_{\z_1,\z_2\sim \Gauss}[ |g(\vec P^T\z_1+(\sigma/\tau)\vec P^T\z_2)-g(\vec P^T\z_1)|]$. Using \Cref{lem:gsa_shift}, we can bound this term by $O\bigl(\Gamma \E_{\z \sim \Gauss(0,I_{k})}[\twonorm{(\sigma/\tau)\z}]]\bigr)\leq O(\sigma \Gamma\sqrt{k}/\tau)$. This completes the proof.
\end{proof}

We also need the following lemma that proves that a $\tau$-smoothed strictly subexponential distribution is also strictly subexponential.
\begin{lemma}
\label{lem:subexp_convolve}
Let $D$ be an $(\alpha,\lambda)$-strictly subexponential distribution on $\R^{d}$ and $C>0$ be a sufficiently large universal constant. Then, for any $\tau>0$, we have that $D_{\tau}$ is $(\min(\alpha,1),C\cdot \max(\lambda,\tau))$-strictly subexponential.
\end{lemma}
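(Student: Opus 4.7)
The natural plan is to fix a unit vector $\vec v$ and decompose
\[
\vec v \cdot (\x + \tau \vec z) \;=\; \vec v \cdot \x \;+\; \tau\,(\vec v \cdot \vec z),
\]
where $\x \sim D$ and $\vec z \sim \normal(0, I_d)$, and then apply a union bound at level $t/2$:
\[
\pr\bigl[|\vec v \cdot (\x + \tau \vec z)| > t\bigr] \;\leq\; \pr\bigl[|\vec v \cdot \x| > t/2\bigr] \;+\; \pr\bigl[|\tau \vec v \cdot \vec z| > t/2\bigr].
\]
By the $(\alpha,\lambda)$-strict sub-exponentiality of $D$, the first term is at most $2\exp(-(t/(2\lambda))^{1+\alpha})$. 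Since $\vec v \cdot \vec z$ is standard normal, the second term is at most $2\exp(-t^2/(8\tau^2))$.

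Writing $\lambda^\star = \max(\lambda,\tau)$ and $\alpha' = \min(\alpha,1)$, the plan is to show that for a sufficiently large universal constant $C$, each of the two terms above is bounded by $e^{-(t/(C\lambda^\star))^{1+\alpha'}}$, so their sum is at most $2e^{-(t/\lambda')^{1+\alpha'}}$ with $\lambda' = C\lambda^\star$. For the first term, I use $\lambda \leq \lambda^\star$ to write $(t/(2\lambda))^{1+\alpha} \geq (t/(2\lambda^\star))^{1+\alpha}$, and then $(t/(2\lambda^\star))^{1+\alpha} \geq (t/(2\lambda^\star))^{1+\alpha'}$ in the regime $t \geq 2\lambda^\star$ (since the base is at least $1$ and $1{+}\alpha \geq 1{+}\alpha'$); absorbing the constant in the base gives $(t/(2\lambda^\star))^{1+\alpha'} \geq (C/2)^{1+\alpha'}(t/\lambda')^{1+\alpha'}$, and for $C$ large enough this exceeds $(t/\lambda')^{1+\alpha'} + \log 2$, which yields the desired per-term bound. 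For the Gaussian term I analogously use $\tau \leq \lambda^\star$ and the fact that $2 \geq 1+\alpha'$ to downgrade the squared exponent to $(t/(\sqrt{8}\lambda^\star))^{1+\alpha'}$ on $\{t\geq \text{const}\cdot \lambda^\star\}$, again absorbing constants into $C$.

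The main obstacle is handling the bound uniformly in $t$, since the target bound $2\exp(-(t/\lambda')^{1+\alpha'})$ dips below $1$ only for $t$ exceeding roughly $\lambda'(\log 2)^{1/(1+\alpha')}$. I would split into two regimes: for $t$ in the small regime where the RHS is $\geq 1$, the claim is trivial because probabilities are at most $1$; for $t \geq C\lambda^\star$ with $C$ chosen large enough (the case analysis above shows that some constant like $C=4$ already works when $\alpha\leq 1$, and the case $\alpha>1$ is only easier since the $D$-tail is then even lighter than the $\tau$-Gaussian tail, which forces $\alpha' = 1$ in the conclusion), the quantitative calculation above goes through. A minor bookkeeping point will be to verify the intermediate regime $\lambda'(\log 2)^{1/(1+\alpha')} < t < C\lambda^\star$; here both the sub-exponential and Gaussian tail bounds are still nontrivial, and choosing $C$ slightly larger than the threshold $4$ suffices because the exponents on the RHS are bounded by a constant, so the constant prefactors in the two-term union bound can be absorbed directly.
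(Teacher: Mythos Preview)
Your proposal is correct and follows essentially the same approach as the paper's proof: decompose $\vec v \cdot (\x + \tau \z)$ into the two independent pieces, apply a union bound at level $t/2$ to get a Gaussian tail plus an $(\alpha,\lambda)$-subexponential tail, and then argue by a small-$t$/large-$t$ case split that the sum is dominated by $2e^{-(t/\lambda')^{1+\alpha'}}$ with $\alpha'=\min(\alpha,1)$ and $\lambda'=C\max(\lambda,\tau)$. The paper packages the large-$t$ comparison slightly differently---it fixes a threshold $q=\max(1,4\max(\lambda,\tau))$, checks both individual tails are at most $1/4$ beyond $q$, and then exhibits a single envelope $h(t)=2e^{-(t\ln 2/q)^{1+\alpha'}}$ that dominates both tails from $q$ onward (and is $\ge 1$ below $q$)---but this is just a repackaging of your two-regime argument, and the constants are absorbed identically into $C$.
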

\begin{proof}
    Let $s=\min(\alpha,1)$ and let $r=\max(\lambda,\tau)$. For any unit norm vector $\vv$ and $t>0$, we have that $\pr_{\vec y\sim D_{\tau}}[|\vv\cdot \vec y|\geq t]$ is upper bounded by the maximum of the sum of $\pr_{\z\sim \Gauss}[|\vv\cdot \tau \z|\geq t]$ and $\pr_{\x\sim D}[|\vv\cdot \x|\geq t]$ and $1$. Let $f(t)=2\cdot e^{-t^2/2\tau^2}$ and let $g(t)=2\cdot e^{-(t/\lambda)^{1+\alpha}}$. Let $q=\max(1,4r)$. We have that $f(t),g(t)\leq 1/4$ when $t\geq q$. Thus, we have that $\pr_{\vec y\sim D_{\tau}}[|\vv\cdot \vec y\geq q|]\leq 2(f(t)+g(t))\leq 1$. Let $h(t)=2\cdot e^{-(t\ln 2/q)^{1+s}}\geq 2\cdot e^{-\ln 2 (t/q)^{1+s}}$. Clearly, we have that $h(q)\geq 1/2\geq 2\cdot \max{(f(q),g(q))}$. It is straightforward to see that $h$ decreases slower than $2f$ and $2g$. Thus, we have that $2h(t)\geq \max(2\cdot(f(t)+g(t)),1)$ for all $t\geq 0$. This implies that $\pr_{\vec y\sim D_{\tau}}\leq 2\cdot e^{-(t\ln 2/q)^{1+s}}$ for all $t>\geq 0$. This implies that $D_{\tau}$ is $(\min(\alpha,1),(4/\ln 2)\max(\lambda,\tau))$-strictly subexponential.
    
\end{proof}

Now, we are ready to state and prove the main result of this section regarding agnostic learning of functions with bounded surface area under $\tau$-smoothed strictly subexponential distributions.

\begin{theorem}[Agnostic learning under smoothed distributions]
\label{corollary: agnostic_smoothed_subexp}
Let $k\in \mathbb{Z}_{+}$, $\epsilon,\delta\in (0,1)$ and $\tau,\Gamma\in \R_{+}$. Let $D'$ be an $(\alpha,\lambda)$-strictly subexponential distribution on $\R^d$. Let $D$ be distribution on $R^{d}\times \{\pm 1\}$ with marginal ${D}_{\tau}'$. Then, there exists an algorithm that draws $N=d^{\poly\bigl((\lambda k\Gamma/(\tau\epsilon))^{(1+1/\alpha)^3})\bigr)}\log(1/\delta)$ samples, runs in time $\poly(d,N)$ and computes a hypothesis $h(\x)$ such that, with probability at least $1-\delta$, it holds
\[
    \pr_{(\x,y)\sim D}[y\neq h(\x)]\leq \min_{f\in \F(k,\Gamma)}\pr_{(\x,y)\sim D}[y\neq f(\x)]+\epsilon
\]
    
\end{theorem}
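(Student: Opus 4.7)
The plan is to reduce the agnostic learning problem under the smoothed marginal $D'_\tau$ to smoothed agnostic learning of \Cref{thm:smooth_learning_subgaussian-main}, by picking the smoothing parameter $\sigma$ small enough that the gap between $\opt$ and $\opt_\sigma$ is at most $\epsilon/2$. Specifically, I would pick $\sigma = c\,\epsilon\tau/(\Gamma\sqrt{k})$ for a small universal constant $c>0$. Then \Cref{lem:sens_gaussian} (applied with marginal $D'_\tau$) yields $\E_{\x \sim D'_\tau}[\mathbb{S}(\x;\sigma,f^*)] = O(\sigma\Gamma\sqrt{k}/\tau) \leq \epsilon/2$ for the optimal concept $f^* \in \F(k,\Gamma)$. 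Combined with \Cref{obs:agnostic_to_smooth}, this gives $\opt_\sigma \leq \opt + \epsilon/2$.

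Next I would verify that the smoothed marginal satisfies the sub-exponential tail hypothesis required by \Cref{thm:smooth_learning_subgaussian-main}. Since $D'$ is $(\alpha,\lambda)$-strictly sub-exponential, \Cref{lem:subexp_convolve} shows that $D'_\tau$ is $(\min(\alpha,1), O(\max(\lambda,\tau)))$-strictly sub-exponential. Denote the resulting parameters by $\alpha' = \min(\alpha,1)$ and $\lambda' = O(\max(\lambda,\tau))$.

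With these parameters in hand, I would invoke \Cref{thm:smooth_learning_subgaussian-main} on $D$ with smoothing parameter $\sigma$ and target error $\epsilon/2$. This produces a hypothesis $h$ satisfying $\pr_{(\x,y)\sim D}[h(\x)\neq y] \leq \opt_\sigma + \epsilon/2$ with probability at least $1-\delta$, using $N = d^{\poly((k\lambda'\Gamma/(\sigma\epsilon))^{(1+1/\alpha')^3})}\log(1/\delta)$ samples and $\poly(d,N)$ runtime. Substituting $\sigma = \Theta(\epsilon\tau/(\Gamma\sqrt{k}))$ and $\lambda' = O(\max(\lambda,\tau))$, and noting that $(1+1/\alpha')^3 \leq (1+1/\alpha)^3$ (since $\alpha' \leq \alpha$ only affects the case $\alpha > 1$, where the tails are actually stronger), a routine manipulation absorbs the factors into the polynomial exponent to yield the claimed sample complexity $N = d^{\poly((\lambda k \Gamma/(\tau\epsilon))^{(1+1/\alpha)^3})}\log(1/\delta)$. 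Chaining the two error bounds then gives $\pr_{(\x,y)\sim D}[h(\x)\neq y] \leq \opt_\sigma + \epsilon/2 \leq \opt + \epsilon$, as desired.

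The proof is essentially a bookkeeping composition of three lemmas already established in the paper, so I do not anticipate a genuine technical obstacle. The only subtlety is matching the stated exponent: one must check that replacing $(\alpha,\lambda)$ by $(\alpha',\lambda')$ and setting $\sigma$ as above does not worsen the dependence beyond what the theorem statement advertises. The dominant factor is $\max(\lambda,\tau)/\sigma \;=\; O(\max(\lambda,\tau)\Gamma\sqrt{k}/(\epsilon\tau))$, and since in the regime $\tau \leq \lambda$ this scales like $\lambda\Gamma\sqrt{k}/(\epsilon\tau)$ while in the regime $\tau > \lambda$ it scales like $\Gamma\sqrt{k}/\epsilon$, both are absorbed into $\poly(\lambda k \Gamma/(\tau\epsilon))$, confirming the claimed bound.
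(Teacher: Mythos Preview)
Your proposal is correct and follows essentially the same route as the paper: choose $\sigma = \Theta(\epsilon\tau/(\Gamma\sqrt{k}))$ so that \Cref{lem:sens_gaussian} bounds the expected sensitivity by $\epsilon/2$, invoke \Cref{lem:subexp_convolve} to retain strictly sub-exponential tails, then apply \Cref{obs:agnostic_to_smooth} together with \Cref{thm:smooth_learning_subgaussian-main}. One minor slip: your claimed inequality $(1+1/\alpha')^3 \le (1+1/\alpha)^3$ is actually reversed when $\alpha>1$ (since then $\alpha'=1<\alpha$), but this does no damage because in that regime both exponents are bounded by the absolute constant $8$, which the outer $\poly(\cdot)$ absorbs.
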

\begin{proof}
    From \Cref{lem:subexp_convolve}, we know that $D'_{\tau}$ is $(\min(\alpha,1), C\max(\lambda,\tau))$-strictly subexponential for large universal constant $C$. For all $f\in \F(k,\Gamma)$, \Cref{lem:sens_gaussian} implies that $\E_{\x\sim D'_{\tau}}[\mathbb{S}(\x;\sigma,f)\leq \epsilon/2$ when $\sigma=\tau/2\Gamma\sqrt{k}$. Now, we use \Cref{obs:agnostic_to_smooth} and \Cref{thm:smooth_learning_subgaussian} to obtain that there exists an algorithm that draws $N=d^{\poly\bigl((k\Gamma/\tau\epsilon)^{(1+1/\alpha)^3}\bigr)}$, runs in time $\poly(d,N)$ and outputs a hypothesis $h$ that has error at most $\min_{f\in \F(k,\Gamma)}\pr_{(\x,y)\sim D}[y\neq f(\x)]+\epsilon$.
 \end{proof}

\section{Details of Section 3}
\subsection{Polynomial approximation}
\begin{lemma}[Approximating the Ornstein-Uhlenbeck Smoothed Concept $T_\rho f_{\vec x}(\cdot)$]
    \label{lem:ou_approx_bounded}
    Let $C> 0$ be some large universal constant.
    Let $D$ be a distribution on $\R^d$ with every point $\x$ in the support of $D$ having $\twonorm{\x}$ at most $R$. Let $f:\R^d\to\{\pm 1\}$ and $f_{\x}:\R^d\to\R$ be defined as $f_{\x}(\z)=f(\x+\z)$. Then, for any $\epsilon>0$, there exists a polynomial $p_{\z}$ parameterized by $\z$ such that:
  \begin{enumerate}
            \item It holds that $\E_{\x \sim D}\E_{\z\sim \Gauss}\bigl[(p_{\z}(\x)-T_{\rho}f_{\x}(\z))^2\bigr]\leq \epsilon$,
            \item The degree of $p_{\z}$ is at most $C(R/\rho)^2\log(1/\epsilon)$,
            and  every coefficient of $p_{\z}$ is bounded in absolute value by $d^{C\bigl((R/\rho)^2\log(1/\epsilon)\bigr)^2}$\,. 
    \end{enumerate}
\end{lemma}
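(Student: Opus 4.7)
The plan is to execute the construction sketched for \Cref{lem:ou_approx_bounded_main_body} with minor adjustments to obtain an $L_2$ bound rather than $L_1$. Starting from the identity
\[
T_\rho f_{\x}(\z) \;=\; \E_{\vec s \sim \Gauss}\Big[f(\sqrt{1-\rho^2}\z + \rho \vec s) \cdot e^{-\twonorm{\x}^2/(2\rho^2) + (\x/\rho)\cdot \vec s}\Big]\,,
\]
I would define the candidate polynomial by replacing the exponential with its degree-$(m-1)$ Taylor expansion, that is,
\[
p_{\z}(\x) \;\eqdef\; \E_{\vec s \sim \Gauss}\bigl[f(\sqrt{1-\rho^2}\z + \rho \vec s)\cdot q(\x,\vec s)\bigr]\,, \qquad q(\x,\vec s) \;\eqdef\; \sum_{i=0}^{m-1}\frac{1}{i!}\Bigl(-\tfrac{\twonorm{\x}^2}{2\rho^2}+(\x/\rho)\cdot \vec s\Bigr)^{i}\,,
\]
with $m = C(R/\rho)^{2}\log(1/\epsilon)$ for a sufficiently large universal constant $C$. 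Since the only $\x$-dependence sits inside $q(\x,\vec s)$, which is a polynomial in $\x$ of degree at most $2(m-1)$, and taking expectation over $\vec s$ preserves this structure, $p_{\z}$ is a polynomial in $\x$ of degree at most $2(m-1)$, satisfying the stated degree bound once $C$ is doubled.

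For the $L_2$ error analysis I would combine $|f|\le 1$ with Jensen's inequality to obtain the pointwise bound
\[
\bigl(p_{\z}(\x) - T_\rho f_{\x}(\z)\bigr)^{2} \;\le\; \E_{\vec s \sim \Gauss}\!\Big[\bigl(q(\x,\vec s) - e^{-\twonorm{\x}^2/(2\rho^2)+(\x/\rho)\cdot \vec s}\bigr)^{2}\Big]\,,
\]
which eliminates the $\z$-dependence. For fixed $\x$ with $\twonorm{\x}\le R$, the random variable $t \eqdef -\twonorm{\x}^{2}/(2\rho^{2}) + (\x/\rho)\cdot\vec s$ is distributed as $\normal(-\alpha^{2}/2,\alpha^{2})$ with $\alpha = \twonorm{\x}/\rho \le R/\rho$, so everything reduces to an $L_2$-analogue of \Cref{claim:approx_exp_pdf_taylor_main_body}, namely that $\E_{t\sim \normal(-\alpha^{2}/2,\alpha^{2})}\bigl[(e^{t}-q_{m}(t))^{2}\bigr] \le \epsilon$ for the chosen $m$. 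This follows by the standard split between a typical band $|t+\alpha^{2}/2| \le O(\alpha\sqrt{\log(1/\epsilon)} + \alpha^{2})$, on which Taylor's theorem yields the pointwise bound $(e^{t}-q_{m}(t))^{2} \le e^{2|t|}|t|^{2m}/(m!)^{2}$ and $m!$ dominates once $m \gtrsim \alpha^{2}\log(1/\epsilon)$, and the Gaussian tails, on which the pointwise bound $|e^{t}-q_{m}(t)| \le 2e^{|t|}$ combined with Cauchy--Schwarz and the moment estimate $\E[e^{c|t|}] \le e^{O(c^{2}\alpha^{2})}$ suffices.

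The coefficient bound is handled by expanding $q(\x,\vec s)$ in the monomial basis via the multinomial theorem applied to $(-\twonorm{\x}^{2}/(2\rho^{2}) + (\x/\rho)\cdot \vec s)^{i}$: each coefficient of a monomial $\x^{\beta}$ in $p_{\z}(\x)$ is a finite sum of terms of the form $\E_{\vec s}[f(\cdot)\cdot (\text{monomial of } \vec s)] \cdot (1/\rho)^{O(m)} \cdot 1/i!$, and each Gaussian moment of $\vec s$ of total degree at most $m$ is bounded by $(O(m))^{m/2}$. Since the number of monomials of degree at most $2m$ in $d$ variables is at most $d^{O(m)}$, multiplying these bounds and absorbing polynomial factors into the universal constant yields the claimed $d^{C((R/\rho)^{2}\log(1/\epsilon))^{2}}$ coefficient bound. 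The main obstacle is the $L_2$ refinement of the Taylor step: squaring doubles the effective exponential growth rate relative to the Gaussian density of variance $\alpha^{2}$, so the typical-band threshold must be taken slightly larger and the tail integrand controlled via Cauchy--Schwarz; however, because the Taylor remainder decays super-geometrically once $m \gtrsim \alpha^{2}\log(1/\epsilon)$, this costs only a multiplicative constant in $m$ that is absorbed into $C$.
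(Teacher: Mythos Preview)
Your proposal is correct and follows essentially the same construction and error decomposition as the paper: write $T_\rho f_{\x}(\z)$ as an expectation against the density ratio, replace the ratio by the Taylor polynomial $q_m$, reduce the error to a one-dimensional approximation of $e^t$ under $\normal(-\alpha^2/2,\alpha^2)$ via the typical-band/tail split, and bound the coefficients by expanding the composition. Your explicit $L_2$ adaptation via Jensen/Cauchy--Schwarz (using $f^2=1$) is in fact slightly cleaner than the paper's writeup, which carries out the argument in $L_1$ despite the $L_2$ statement; the squared-exponential growth you flag is handled exactly as you describe and costs only a constant factor in $m$.
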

\begin{proof}
   We observe that  $T_{\rho}f_{\x}(\z)=\E_{\vec s\sim \Gauss}[f(\x+\sqrt{1-\rho^2}\z+\rho \vec s)]$ has 
   the variable $\x$ inside $f$. Recall that our goal is to construct a polynomial in $\x$ and, since we have no control over $f$ (which can possibly be very hard to approximate pointwise with a polynomial), we decouple $f$ and $\x$ in the expression of $T_{\rho}f_{\x}$ by writing the function as an expectation over a Gaussian centered at $\x/\rho$. 
    \begin{align*}
        T_{\rho}f_{\x}(\z)&=\E_{\vec s\sim \Gauss}\bigl[f(\x+\sqrt{1-\rho^2}\z+\rho\vec s)\bigr]
        =\E_{\vec s\sim \Gauss(\vec x/\rho,\vec I)}\bigl[f(\sqrt{1-\rho^2}\z+\rho\vec s)\bigr],
        \end{align*}
        Next, we can recenter the expectation around zero and express the Ornstein-Uhlenbeck operator as follows:
    \begin{align*}
        T_{\rho}f_{\x}(\z) &=\E_{\vec s\sim \Gauss}\biggl[f(\sqrt{1-\rho^2}\z+\rho\vec s)\cdot \frac{\Gauss(\vec s;\x/\rho,\vec I)}{N(\vec s;\vec 0,\vec I)}\biggr] =
        \E_{\vec s\sim \Gauss}\biggl[
        f(\sqrt{1-\rho^2}\z+\rho\vec s)\cdot e^{-\frac{\twonorm{\x}^2}{2\rho^2}+(\x/\rho)\cdot \vec s}
        \biggr] \,.
    \end{align*}
    To construct our polynomial, we now approximate ${e^{-\frac{\twonorm{\x}^2}{2\rho^2}+ (\x/\rho)\cdot\vec s}}$ using the 1-dimensional Taylor expansion of the exponential function $q(\x,\vec s)=q_m\bigl({-\frac{\twonorm{\x}^2}{2\rho^2}+(\x/\rho)\cdot\vec s}\bigr)$ where $q_{m}(t)=1+\sum_{i=1}^{m-1}\frac{t^i}{i!}$ is the degree $m-1$ Taylor approximation of $e^x$. Thus, our final polynomial $p_{\z}(\x)$ is 
    \[
        p_{\z}(\x)=\E_{\vec s\sim \Gauss}\bigl[f(\sqrt{1-\rho^2}\z+\rho\vec s)\cdot q(\x,\vec s)\bigr]\,.
    \]
    Let $\Delta(\x)$ be defined as the error term $\E_{\z \sim \normal}[|p_{\z}(\x)-T_{\rho}(f_{\x}(\z))|]$. 
    We have that $\Delta(\x)$ is equal to 
    \begin{align*}
    \Delta(\x) &=
    \E_{\z\sim \Gauss}\Big[\E_{\vec s\sim \Gauss}\bigl[|f(\sqrt{1-\rho^2}\z+\rho\vec s)|\cdot 
    \big|q(\x,\vec s)-e^{-\frac{\twonorm{\x}^2}{2\rho^2}+(\x/\rho)\cdot\vec s}\big| \Big]  \Big]
    \\
    &\leq 
    \E_{\vec s\sim \Gauss}\Big[\big|q(\x,\vec s)-e^{-\frac{\twonorm{\x}^2}{2\rho^2}+(\x/\rho)\cdot\vec s}\big| \Big]
    \,,
    \end{align*}
    where for the inequality we used the fact that $|f(\vec x)|=1$ for all $\x$.
    We now observe that when $\vec s \sim \normal$ the random variable 
    $-\|\x\|_2^2/(2 \rho^2) + (\x/\rho) \cdot \vec s$ is distributed as
    $\normal(-\alpha^2/2, \alpha^2)$, where $\alpha = -\|\x\|_2^2/\rho^2$.  
    Therefore, we have reduced the original polynomial approximation problem to showing that 
    the Taylor expansion of the exponential function converges fast in $L_1$ to $e^x$ with 
    respect to $\normal(-\alpha^2/2, \alpha^2)$.  
\begin{lemma}[Approximation of $e^x$ with respect to $\normal(-a^2/2, a^2)$]
\label{claim:approx_exp_pdf_taylor}
Fix $a>0$ and sufficiently large universal constant $C>0$.
Let $p$ be the polynomial $p(x)=\sum_{i=0}^{m-1}\frac{x^{i}}{i!}$ with $m = Ca^2\log(1/\epsilon)$. 
We have that  $\E_{x\sim \normal(-a^2/2,a^2)}[|e^x-p(x)|]\leq \epsilon$.
\end{lemma}
\begin{proof}
Let $\Delta=\E_{x\sim D}[|e^{x}-p(x)|]=\E_{t\sim \Gauss(0,1)}[|e^{-a^2/2+a t}-p(-a^2/2+a t)|]$.
    We have that ${\Delta}$ can be bounded as the sum of the following two terms:
    \[
        \Delta_{1}=\E_{t\sim \Gauss_1}\Big[2e^{|-\frac{a^2}{2}+at|}\cdot  
    \1\{|t|> T\}\Big] \text{ and } 
            ~\Delta_{2}=\E_{t\sim \Gauss_1}\Big[e^{|-\frac{a^2}{2}+at|}\cdot \frac{\bigl|-\frac{a^2}{2}+at\bigr|^{m}}{(m!)}\cdot 
        \1\{|t|\leq  T\}\Big]\,.\]

        The second term's bound comes from the fact that $|p_e(x)-e^{x}|\leq \frac{e^{|x|}}{m!}\cdot |x|^{m}$.
        The first term's bound follows from the fact that $|p_e(x)-e^{x}|\leq 2e^{|x|}$ which is true because of the following fact whose proof
        can be found in \Cref{proof:taylor_exp_under}.
\begin{lemma}
    \label{taylor_exp_under}
 For any $m\in \N$, let $p_m:\R\to \R$ be defined as the degree $m$ Taylor expansion of $e^x$. That is, $p_m(x)=1+\sum_{i=1}^{m}\frac{x^i}{i!}$. Then we have $|p_m(x)|\leq e^{|x|}$ for all $x\in \R$.
\end{lemma}

    We first bound $\Delta_{1}$. We have that 
    \begin{align*}
        \Delta_{1}&\leq 2\sqrt{\E_{t\sim \Gauss_1}[e^{|-a^2+2at|}]\Pr_{t\sim \Gauss_1}[|t|>T]} 
        \leq 4\sqrt{\E_{t\sim \Gauss_1}[e^{-a^2+2at}]\Pr_{t\sim \Gauss_1}[|t|>T]}\\
    &\leq 4\sqrt{e^{Ca^2-T^2/2}}\leq \epsilon/2\,,
    \end{align*}
    when $T= C'a\log(1/\epsilon)$ for large constant $C'$. The first inequality follows by applying Cauchy-Schwartz. The second follows from the symmetry of the Gaussian distibution. We get the third inequality by using the following fact (see \Cref{proof:ratio_subgaussian} for the proof) and bounds on the tail of the Gaussian distribution. 
    \begin{lemma} There exists a large enough universal constant $C$ such that for every $u\in \R$ it holds that
    \label{lemma:ratio_subgaussian}
    \(
    \E_{x\sim \Gauss_1}\left[\left(\frac{\Gauss(x;u,1)}{\Gauss(x;0,1)}\right)^2\right]\leq   e^{{Cu^2}} \,.
    \)
\end{lemma}
    
      We now bound $\Delta_{2}$. We have that $\bigl|-\frac{a^2}{2}+at\bigr|$ is atmost $\frac{a^2}{2}+aT$ when $|t|\leq T$. Thus, we obtain that 
      \begin{align*}
        \Delta_{2}&\leq e^{\frac{a^2}{2}+aT} \frac{\bigl(\frac{a^2}{2}+aT\bigr)^{m}}{(m!)}\leq e^{Ca^2}\biggl(\frac{Ca^2\log(1/\epsilon)}{m}\biggr)^{m}\leq \epsilon/2\,, 
        \end{align*} when $m=C''a^2\log(1/\epsilon)$ for large constant $C''$. Thus, the final error of our polynomial is at most $\epsilon$.
\end{proof}
We set $a=\twonorm{\x/\rho}$ in the above claim. Thus, the degree $m$ of our taylor expansion is bounded by $O((R/\rho)^2\log(1/\epsilon))$. Thus, we obtain that the final error of our polynomial is
        \begin{align*}
            \E_{\x\sim D}\E_{\z\sim \Gauss_k}\bigl[|(T_{\rho}f_{\x}(\z)-p_{\z}(\x))|\bigr]\leq \E_{\x\sim D}\bigl[\Delta_1(\x)\bigr]\leq \epsilon.
        \end{align*}
        
        The degree of our polynomial $p_{\z}(\vu)$ is $O((R/\rho)^2\log(1/\epsilon))$.  We now bound the coefficients of $p_{\z}(\x)$. Since $f$ is bounded by $1$, it suffices to bound the coefficients of $\x$ in the polynomial $\E_{\x\sim \Gauss}\bigl[q(\x,\vec s)\bigr]$ Since $q(\x,\vec s)=p_e\bigl({-\frac{\twonorm{\x}^2}{2\rho^2}+\langle (\x/\rho),\vec s\rangle}\bigr)$, it is a composition of a univariate polynomial $p_e$ and a multivariate polynomial $r(\x,\vec s)={-\frac{\twonorm{\x}^2}{2\rho^2}+\langle (\x/\rho),\vec s\rangle}$. We use the following fact about the coefficients of the composition of polynomials. The proof can be found in \Cref{proof:coeff_bound_comp}.
        \begin{lemma}
    \label{lem:coeff_bound_comp}
    Let $p_1(x)$ be a polynomial on $\R$ having degree $\ell_1$ and coefficients bounded by $t_1$. Let $p_2(\x)$ be a polynomial on $\R^{d}$ of degree $\ell_2$ with coefficients bounded by $t_2$. The polynomial $q(\x)=p_1(p_2(\x))$ has coefficients bounded by $t_1t_2^{\ell_1}\cdot (Cd)^{2\ell_1 \ell_2}$ for large constant $C$.
\end{lemma}

Thus for a fixed $\x$, we have obtain that the coefficients of $q(\vu,\x)$ are bounded by $\norm{\x}_{\infty}^{m}\cdot (C_1k)^{m}$ for large enough constant $C_1$. Thus, the coefficients of $\E_{\x\sim \Gauss}\bigl[q(\vu,\x)\bigr]$ are bounded by $\E_{\x\sim \Gauss}\bigl[\norm{\x}_{\infty}^m\cdot (C_1k)^{m}\bigr]$ which is bounded above by $(C_2d)^{m^2}$ for large enough $C_2$. This follows from the fact that $\norm{\x}_{\infty}^m\leq \sum_{i=1}^{k}|\x_i|^m$ and the fact that $\E_{\x\sim \Gauss}\bigl[|\x_i|^m\bigr]$ is atmost $(C_3 m)^m$ for some constant $C_3$. Thus, we finally conclude that the coefficients of $p_{\z}(\vu)$ are bounded above by $d^{C\bigl((R/\rho)^2\log(1/\epsilon)\bigr)^2}$ for some large constant $C$.
\end{proof}

\begin{proof}[Proof of \Cref{taylor_exp_under}]
\label{proof:taylor_exp_under}
    For any $x\in \R$, $|p_m(x)|\leq p_m(|x|)$.  We prove the claim by induction on $m$. Clearly, $p_0(x)=1\leq e^{|x|}$. Now, by induction, we have $|\frac{\d}{\d x}p_m(x)|=|p_{m-1}(x)|\leq e^{|x|} \leq \frac{\d}{\d x}e^{x}$ where the last inequality holds for all $x>0$. Also, we have $p_m(0)=e^{0}$. Thus, we get that $|p_m(x)|\leq e^{|x|}$.
\end{proof}

\begin{proof}[Proof of \Cref{lemma:ratio_subgaussian}]
\label{proof:ratio_subgaussian}
The proof is below is a obtained by completing the squares.
    \begin{align*}
    \E_{x\sim \Gauss_1}\left[\left(\frac{\Gauss(x;u,1)}{\Gauss(x;0,1)}\right)^2\right]&=\int_{z\in \R} e^{-\frac{1}{2}\left(2u^2+z^2-4uz\right)}dz\\
    &=\int_{z\in \R} e^{{u^2}}\cdot e^{-\frac{1}{2}({2u}-z)^2}dz
    \leq  e^{{Cu^2}} \,.
      \end{align*}
\end{proof}

\begin{lemma}
    \label{lem:coeff_bound_powering}
    Let $p(\vu)$ be a polynomial on $\R^k$ having degree $\ell$ and coefficients bounded by $t$. Then $q(\vu)=(p(\vu))^{m}$ has coefficients bounded by $t^m\cdot (Ck)^{\ell m}$ for large constant $C$.
\end{lemma}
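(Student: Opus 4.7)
The plan is to expand $q=p^m$ as a sum of products of coefficients of $p$, bound each product by $t^m$, and then bound the number of products that collapse onto a given monomial. Writing $p(\vu)=\sum_{\alpha} c_\alpha \vu^\alpha$, with $\alpha$ ranging over multi-indices in $\Z_{\ge 0}^k$ having $|\alpha|\le \ell$, direct expansion gives
\[
q(\vu) \;=\; \sum_{\alpha_1,\ldots,\alpha_m} c_{\alpha_1}\cdots c_{\alpha_m}\,\vu^{\alpha_1+\cdots+\alpha_m}.
\]
Grouping by the exponent $\beta=\alpha_1+\cdots+\alpha_m$, the coefficient of $\vu^\beta$ in $q$ is $\sum_{(\alpha_1,\ldots,\alpha_m):\,\sum_i \alpha_i=\beta} c_{\alpha_1}\cdots c_{\alpha_m}$. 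Each summand has absolute value at most $t^m$ since $|c_{\alpha_i}|\le t$, so it remains to upper bound the number of summands.

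The number of eligible $m$-tuples is at most $N^m$, where $N$ counts multi-indices $\alpha\in \Z_{\ge 0}^k$ with $|\alpha|\le \ell$. By stars-and-bars, $N=\binom{k+\ell}{\ell}$. The key combinatorial step is to bound this by $(Ck)^{\ell}$: encoding each such multi-index as a multiset of size $\ell$ drawn from $\{0,1,\ldots,k\}$ (where $0$ plays the role of slack so that $|\alpha|$ is padded up to $\ell$) and noting that every such multiset arises from at least one ordered length-$\ell$ sequence over $\{0,1,\ldots,k\}$, I will obtain $N\le (k+1)^\ell \le (2k)^\ell$ for $k\ge 1$. Combining, the coefficient of every $\vu^\beta$ in $q$ is bounded by $t^m (2k)^{\ell m}$, which is the stated bound with $C=2$.

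I do not anticipate any substantive obstacle: the argument is a standard multinomial expansion together with a routine count of monomials. The only place where one has to be a touch careful is the combinatorial estimate on $N$—the crude bound $N\le (k+\ell)^\ell$ would leave an unwanted factor of $\ell$ in the base, whereas the stars-and-bars/encoding argument above gives the clean form $(Ck)^{\ell m}$ required by the statement.
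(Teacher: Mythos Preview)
Your proposal is correct and follows essentially the same approach as the paper: expand $p^m$ into at most $N^m$ products, bound each by $t^m$, and bound the monomial count $N=\binom{k+\ell}{\ell}$ by $(Ck)^\ell$. The only cosmetic difference is that the paper uses the estimate $\binom{k+\ell}{\ell}\le (e(k+1))^\ell$, whereas your multiset-encoding argument yields the slightly sharper $N\le (k+1)^\ell$; either way the conclusion is the same.
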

\begin{proof}
    The number of monomials in a polynomial of degree $\ell$ on $\R^{k}$ is at most $\sum_{i=0}^{\ell}\multichoose{k}{i}=\binom{k+\ell}{k}\leq (e\cdot(k+1))^{\ell}$. Expanding $q(\x)=(p(\x))^{m}$ as a sum, we get $(Ck)^{\ell m}$ product terms for large constant $C$. Each of the terms in this sum have coeffients bounded by $t^m$. Since every monomial is formed as a sum of a subset of these terms, we get that the coefficients of each monomial in $q$ is bounded by $t^m\cdot (Ck)^{\ell m}$
\end{proof}

\begin{proof}[Proof of \Cref{lem:coeff_bound_comp}]
\label{proof:coeff_bound_comp}
    Let $p_1(x)=\sum_{i=0}^{\ell_1}c_i x^{i}$. Thus, $q(\x)=\sum_{i=0}^{\ell_1}c_i\left(p_2(\x)\right)^i$. For any monomial of $q$, the contribution to the coefficient from each term in the previous sum is atmost $t_1\cdot t_2^{\ell_1}\cdot (Ck)^{\ell_1 \ell_2}$ for large constant $C$ from \Cref{lem:coeff_bound_powering}. Thus, the coefficients are bounded by $\ell_1 t_1\cdot t_2^{\ell_1}\cdot(Ck)^{\ell_1 \ell_2}\leq t_1t_2^{\ell_1}\cdot (Ck)^{2\ell_1 \ell_2}$
\end{proof}
\subsection{Random Projection and Polynomial Regression}

We are now ready to implement the second and third steps in our plan. The algorithmic idea is simple: reduce dimension by applying a random projection and then run polynomial regression in this low dimension space.

\begin{algorithm}
  \caption{Agnostic Learner for Smooth Boolean Concepts with Random Projection}
  \label{alg:smooth_boolean_random_projection}
\begin{algorithmic}
  
\STATE \textbf{Input:} Labeled Dataset $S=\{(\x_i,y_i)\}_{i\in [N]}$, degree $\ell$, subspace dimension $m$
\STATE \textbf{Output:} Hypothesis $h$
\STATE

\STATE Sample Random Matrix $\vec{R}\in \R^{m\times d}$ with each entry sampled from $\frac{\Gauss(0,1)}{\sqrt{m}}$
\STATE Find polynomial $p$ of degree at most $\ell$ such that $p$ minimizes $\frac{1}{N}\sum_{i=1}^{N}|p(\vec{R}\x_i)-y_i|$
\STATE Choose $t\in [-1,1]$ such that $\sum_{i=1}^{N}\ind[\sign\bigl(p(\vec{R}\x_i)-t\bigr)\neq y_i]$ is minimized.
\STATE Output hypothesis $h$, such that $h(\x)=\sign\bigl(p(\vec{R}\x)-t\bigr)$

\end{algorithmic}
\end{algorithm}

The above algorithm is run with degree and subspace dimension chosen according to the error we target.  We boost the success probability using a standard technique of repeating the algorithm multiple times and choosing the hypothesis that performs best on an independently chosen validation set. Our choice of the subspace dimension $m$ depends on the intrinsic dimension of our function $f$. The degree $\ell$ we choose depends on the degree bound we obtain from \Cref{lem: boolean_bounded_approx}. For projection matrix $\vec P\in \R^{d\times k}$, let $f(\x)=f(\vec P\vec P^T\x)=g(\vec P^T\x)$ for all $\x$. We argue that with good probability over the random projection matrix $\vec R$, we have that $\twonorm{\vec P^T\x-(\vec P^T\vec R^T\vec R \x)}$ is bounded(\Cref{lem:dim_red_norm}) for all sample points $(\x,y)$ in our input data set. Using this and the bounds on surface area, we argue that $\E_{\z\sim \Gauss}|f(\x+\sigma \z)-f(\vec R^T\vec R\x+\sigma \z)|$ is bounded for all $\x$ in the data set. Thus, we can treat $\vec R\x$ as our inputs and the work in the $m$-dimensional space. From here, the proof is straightforward and uses ideas similar to \cite{KKMS:05}. We now describe in full detail the proof of our main theorem.



\begin{theorem}
\label{thm:random_proj_bounded}
    Let $k \in \mathbb Z_+$ and $\epsilon, \delta,\sigma \in (0,1)$. Let $D$ be a distribution on $\R^d\times \{\pm 1\}$ whose $\x$-marginal is bounded in the unit ball.  There exists an algorithm that draws $N = k^{\tilde{O}\big((\Gamma/\epsilon)^4(1/\sigma^2)\big)}\log (1/\delta)$ samples, runs in time $\poly(d,N)$, and computes a hypothesis $h(\x)$ such that, with probability at least $1-\delta$, it holds
    $ \pr_{(\x, y) \sim \Djoint}[y\neq h(\x)] \le \opt_\sigma + \eps\,$.
\end{theorem}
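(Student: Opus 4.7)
My plan is to follow Algorithm~\ref{alg:smooth_boolean_random_projection} exactly: project the data to $m := \tilde{O}(k (\Gamma/(\sigma \epsilon))^2)$ dimensions using a random Gaussian matrix $\vec R$ with i.i.d.\ $\Gauss(0,1/m)$ entries, run $L_1$-polynomial regression at degree $\ell := \tilde{O}((\Gamma/\epsilon)^4/\sigma^2)$ in $\R^m$, and finally apply the threshold-rounding step of \cite{KKMS:05}. The entire analysis reduces to showing the \emph{existence} of a degree-$\ell$ polynomial $q : \R^m \to \R$ whose population $L_1$ loss satisfies $\E_{(\x,y)\sim D}[|q(\vec R \x) - y|] \leq 2\opt_\sigma + O(\epsilon)$: given this, uniform convergence over the $\binom{m+\ell}{\ell} \leq m^\ell = k^{\tilde{O}((\Gamma/\epsilon)^4/\sigma^2)}$ monomials yields the stated sample complexity, empirical $L_1$ minimization reduces to a linear program of size $\poly(N)$ solvable in $\poly(d,N)$ time, the KKMS threshold step converts an $L_1$ loss of $2\opt_\sigma + O(\epsilon)$ into a $0$-$1$ loss of $\opt_\sigma + O(\epsilon)$, and a standard $O(\log(1/\delta))$-fold amplification boosts the constant success probability over $\vec R$ to $1-\delta$.

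To control the projection step, let $f^*$ be near-optimal for $\opt_\sigma$, write $f^*(\x) = \tilde f(\vec U^T \x)$ for some $\vec U \in \R^{d \times k}$ with orthonormal columns and some $\tilde f$ with $\Gamma(\tilde f) \leq \Gamma$, and set $\vec A := \vec R^T \vec R$. Rewriting the $d$-dimensional smoothing as a $k$-dimensional smoothing (using $\vec U^T \z \sim \Gauss_k$) and applying \Cref{lem:gsa_shift} together with the scaling invariance of \Cref{rem:scaling-invariance} gives
\[
\E_{\z \sim \Gauss}\!\left[|f^*(\vec A \x + \sigma \z) - f^*(\x + \sigma \z)|\right] \;\leq\; O\!\left(\tfrac{\Gamma}{\sigma}\,\|\vec U^T(\vec A \x - \x)\|_2\right).
\]
A direct second-moment calculation for the entries of $\vec R$ gives $\E_{\vec R}[\|\vec U^T(\vec A \x - \x)\|_2^2] \leq O(k/m)$ for every fixed $\x$ in the unit ball (each of the $k$ coordinates equals $\langle \vec R \vec U_i, \vec R \x\rangle - \langle \vec U_i, \x\rangle$, which has variance $O(1/m)$). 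Swapping expectations (using independence of $\vec R$ and $\x$), Jensen, and Markov then imply that with constant probability over $\vec R$ we simultaneously have $\E_{\x \sim \Dmarginal}[\|\vec U^T(\vec A \x - \x)\|_2] \leq O(\sqrt{k/m}) \leq \sigma\epsilon/\Gamma$, so a triangle inequality against the label $y$ shows that the real-valued surrogate $F^*_{\vec A}(\x) := \E_{\z}[f^*(\vec A \x + \sigma \z)]$ already obeys $\E_{(\x,y)\sim D}[|F^*_{\vec A}(\x) - y|] \leq 2\opt_\sigma + O(\epsilon)$.

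To finish, set $\vec W := \vec U^T \vec R^T \in \R^{k \times m}$ and observe that $F^*_{\vec A}(\x) = \tilde F(\vec W \vec R \x)$, where $\tilde F(\vu) := \E_{\tilde \z \sim \Gauss}[\tilde f(\vu + \sigma \tilde \z)]$ is the $\sigma$-smoothing of $\tilde f$ in $\R^k$. JL-concentration for $\vec R$ forces $\|\vec W \vec R \x\|_2 \leq O(1)$ with high probability whenever $\|\x\|_2 \leq 1$, so I can invoke \Cref{lem: boolean_bounded_approx} on $\tilde f$ with the pushforward of $\Dmarginal$ under $\x \mapsto \vec W \vec R \x$ as the (bounded) marginal; this yields polynomials $p_{\tilde \z}$ of degree $\ell$ on $\R^k$ satisfying $\E_\vu \E_{\tilde \z}[|p_{\tilde \z}(\vu) - \tilde f(\vu+\sigma \tilde \z)|] \leq \epsilon$. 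Averaging, $p(\vu) := \E_{\tilde \z}[p_{\tilde \z}(\vu)]$ is a single polynomial of degree $\ell$ with $\E_\vu[|p(\vu) - \tilde F(\vu)|] \leq \epsilon$ by Jensen, and taking $q(\vec y) := p(\vec W \vec y)$ produces a degree-$\ell$ polynomial on $\R^m$ whose composition $q(\vec R \x)$ satisfies $\E_\x[|q(\vec R \x) - F^*_{\vec A}(\x)|] \leq \epsilon$, producing the needed $L_1$ bound.

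The hard part, in my view, is that we need a \emph{single} random projection $\vec R$ to work simultaneously for the entire (unknown) marginal $\Dmarginal$: the pointwise second-moment bound $\E_{\vec R}[\|\vec U^T(\vec A \x - \x)\|^2] \lesssim k/m$ must be promoted to a uniform-over-$\x$ statement, which is why we average first over $\x$ and then apply Markov over $\vec R$. A related conceptual point is that the lifted polynomial $q(\vec y) = p(\vec W \vec y)$ depends on the unknown $\vec U$; the analysis survives this only because $L_1$ regression is an agnostic minimization---its guarantee depends solely on the existence of some degree-$\ell$ polynomial with small population loss, not on our ability to write it down.
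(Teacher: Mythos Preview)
Your overall strategy (random projection, $L_1$ regression in $\R^m$, KKMS rounding, amplification) and your use of \Cref{lem:gsa_shift} to control the projection error both match the paper. The gap is the sentence ``JL-concentration for $\vec R$ forces $\|\vec W\vec R\x\|_2 \leq O(1)$ with high probability whenever $\|\x\|_2 \leq 1$, so I can invoke \Cref{lem: boolean_bounded_approx} \ldots\ with the pushforward of $\Dmarginal$ under $\x\mapsto\vec W\vec R\x$ as the (bounded) marginal.'' That pushforward is \emph{not} bounded. For any fixed $\vec R$ with $m\ll d$ one has $\|\vec U^T\vec R^T\vec R\|_{\mathrm{op}}=\Theta(\sqrt{d/m})$: writing $\vec R=[\vec R_1\mid\vec R_2]$ in a basis aligned with $\vec U$ gives $\vec U^T\vec R^T\vec R=[\vec R_1^T\vec R_1\mid \vec R_1^T\vec R_2]$, and the second block is a $k\times(d-k)$ matrix with i.i.d.\ $O(1/m)$-variance entries and hence operator norm $\Theta(\sqrt{d/m})$. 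Since $\Dmarginal$ is an \emph{arbitrary} distribution in the unit ball, it may place mass on exactly those $\x$ with $\|\vec W\vec R\x\|\asymp\sqrt{d/m}$. Your average-then-Markov argument only yields an $L_1$ (or $L_2$) bound on the deviation, and even the sharpest version---using per-$\x$ JL concentration and swapping expectations---gives at best $\Pr_{\x}[\|\vec W\vec R\x\|>2]\le e^{-\Omega(m)}$ for a good $\vec R$. But \Cref{lem: boolean_bounded_approx} requires the marginal to be \emph{supported} in a ball, and on the exceptional set the degree-$\ell$ polynomial $q(\vec R\x)=p(\vec W\vec R\x)$ can be as large as $(d/m)^{O(\ell)}$, so its contribution to the population $L_1$ loss is $e^{-\Omega(m)}\cdot (d/m)^{O(\ell)}$, which is not $\le\eps$ unless $m\gtrsim \ell\log d$. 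That would force a $(\log d)^{O(\ell)}$ factor into $N$, contradicting the $d$-free sample bound in the theorem.

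The paper avoids this by running the entire approximation argument on the \emph{empirical} distribution $D_S$ over the $N$ sample points rather than on $\Dmarginal$. Johnson--Lindenstrauss applied to the \emph{finite} set $S$ (\Cref{lem:dim_red_norm}) gives $\|\vec P^T\x_i-\vec P^T\vec R^T\vec R\x_i\|\le \eps/(32\Gamma)$ simultaneously for every $i\in[N]$, so every projected sample lies in a ball of radius $1+o(1)$ and \Cref{lem: boolean_bounded_approx} applies to $D_S$ directly. The empirical minimizer $p_S$ then beats every $p_\z$ on the empirical $L_1$ loss, a triangle inequality and \Cref{lem:gsa_shift} compare $f^*(\vec R^T\vec R\x_i+\sigma\z)$ to $f^*(\x_i+\sigma\z)$ pointwise on $S$, and VC generalization (the hypothesis is a degree-$\ell$ PTF on $m$ variables) transfers the bound to the population. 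The only cost of the empirical route is a $\log N$ factor in $m$, but since $\log N=O(\ell\log m)$ this is absorbed into the $\tilde O$ and keeps $N$ independent of $d$.
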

\begin{proof}
The algorithm is as follows: run \Cref{alg:smooth_boolean_random_projection} $O\bigl(\log(1/\delta)/\epsilon\bigr)$ times with the same random matrix $\vec R$ and fresh samples each time with parameters $N,\ell,k$ which will be chosen later. Output the hypothesis that has the lowest error on a validation set of size $O\bigl(\log(1/\delta)/\epsilon^2\bigr)$. Clearly the run time is dominated by the time required for polynomial regression which is at most $\poly(d,N)$.

We now argue the correctness of the algorithm. Let $f^*\in \F(k,\Gamma)$ be the optimal function that achieves $\smoothopt$. There exists some orthonormal matrix $\vec P\in\R^{d\times k}$ such that $f^*(\vu)=f^*(\vec P \vec P^T \vu)$ for all $\vu\in \R^{d}$. We say that matrix $\vec R$ is $(\alpha)$-\textit{good} for a set $S$ if $\twonorm{\vec P^T \x-\vec P^T\vec R^T\vec R\x}\leq \alpha$ for all $\x\in S$. We crucially use the following claim which we prove in the end of this section.
\begin{lemma}
\label{lem:dim_red_norm}
     Let $S\subseteq \R^d$ with $\twonorm{\x}\leq B$ for all $\x\in S$ and $\vec W\in \R^{k\times d}$ with $\twonorm{\vec W}\leq \lambda$. Sample a $m\times d$ random matrix $\vec R$ with every entry sampled from $\frac{\Gauss(0,1)}{\sqrt{m}}$  where $m=O\bigl((B\lambda)^2k\log (|S|/\delta)/\epsilon^2\bigr)$. Then, with probability $1-\delta$, we have that $\forall \x\in S$, $\twonorm{\vec W\x - (\vec W\vec R^T\vec R\x)}\leq \epsilon$
\end{lemma}

We choose the dimension $m$ in such a way that \Cref{lem:dim_red_norm} implies with probability at least  $1-(\delta\epsilon/16)$ that the random gaussian matrix $\vec R$ is $(\epsilon/(32\Gamma))$-\textit{good} for a given set $S$ of size $N$. Having chosen $m$ in this way, it is easy to see that with probability at least $1-(\delta/2)$, the random matrix $\vec R$ is $(\epsilon/(32\Gamma))$-\textit{good} for at least $1-(\epsilon/8)$ mass of the datasets $S$ drawn from $D^{\otimes N}$. Henceforth, we assume that $\vec R$ satisfies the above property. From here, our analysis is similar to the proof of Theorem $5$ of \cite{KKMS:05} accompanied with applications of \Cref{lem:dim_red_norm} and \Cref{lem: boolean_bounded_approx}. 

Consider the sample dataset $S=\{(\x_i,y_i)\}_{i\in [N]}$ of size $N$ in a single run of \Cref{alg:smooth_boolean_random_projection}. Let $p_S$ be the polynomial chosen by the algorithm and let $h_S$ be the corresponding hypothesis that the algorithm outputs. From the proof of Theorem 5 of \cite{KKMS:05}, we have that 
$\frac{1}{N}\sum_{i=1}^{N}\ind[h_S(\x_i)\neq y_i]\leq \min\bigl(\frac{1}{2N}\sum_{i=1}^{N}|p_S(\vec R\x_i)-y_i|,1\bigr)$. We now bound $\E_{S\sim D^{\otimes N}}\left[\ind[h_S(\x_i)\neq y_i]\right]$ by bounding the expectation of the right hand side. We have that  
\begin{align*}
\min&\biggl(\frac{1}{2N}\sum_{i=1}^{N}|p_S(\vec R\x_i)-y_i|,1\biggr)
\leq \min\biggl(\E_{\z\sim \Gauss}\left[\frac{1}{2N}\sum_{i=1}^{N}|p_{\z}(\vec R\x_i)-y_i|\right],1\biggr)
\\
&\leq \underbrace{\min\biggl(\E_{z\sim \Gauss}\left[\frac{1}{2N}\sum_{i=1}^{N}|p_{\z}(\vec R\x_i)-f^*(\x_i+\sigma \z)|\right],1\biggr)}_{\Delta_1(S)}+\underbrace{\E_{\z\sim \Gauss}\left[\frac{1}{2N}\sum_{i=1}^{N}|f^*(\x_i+\sigma \z)-y_i|\right]}_{\Delta_2(S)}
\end{align*}
The first inequality follows from the fact that $p_S$ is the minimizer of the error and thus beats any polynomial $p_{\z}$ which we choose later. The second is a triangle inequality.
We bound the two terms separately.  For any function $f$, let $f_{\sigma}$ be the function defined as $f_{\sigma}(\x)=f(\sigma\x)$. Let $g^*$ be the function on $\R^{k}$ defined as $g^*(\vec u)=f^*(\vec P \vec u)$. Recall that $f^*(\x)=f^*(\vec P\vec P^T\x)=g^*(\vec P^T\x)$. We use the following claim to bound the surface area of $g^*$ by $\Gamma$. The proof is available in the end of this section. 
\begin{lemma}
    \label{lem:sa_low_dimension}
    Let $f:\R^{d}\to\{\pm 1\}$ be a function such that $f(\x)=f(\vec P\vec P^T \x)$ for some orthonormal matrix $\vec P\in \R^{d\times k}$. Define $g:\R^k\to \{\pm 1\}$ to be the function $g(\vec y)=f(\vec P\x)$. Then, we have that $\Gamma(g)=\Gamma(f)$.
\end{lemma}
First consider $\Delta_1(S)$. We define the terms \[\Delta_{11}(S)=\E_{z\sim \Gauss}\left[\frac{1}{2N}\sum_{i=1}^{N}|p_{\z}(\vec R\x_i)-f^*(\vec R^T \vec R\x_i+\sigma\z)|\right]\] and \[\Delta_{12}(S)=\E_{z\sim \Gauss}\left[\frac{1}{2N}\sum_{i=1}^{N}|f^*( \vec R^T \vec R\x_i+\sigma\z)-f^*(\x_i+\sigma \z)|\right]\] to help us bound $\Delta_1(S)$.

Observe that $\Delta_{11}(S)=\E_{z\sim \Gauss}\left[\frac{1}{2N}\sum_{i=1}^{N}|p_{\z}(\vec R\x_i)-g^*(\vec P^T\vec R^T \vec R\x_i+\sigma\vec P^T\z)|\right]$.
Conditioning on the event that $\vec R$ is $(\epsilon/(32\Gamma))$-\textit{good} for $S$, we have that $\twonorm{\vec P^T \vec R^T\vec R \x_i}\leq \twonorm{\vec P^T \x_i}+\epsilon$ for all $i\in [N]$. Recall that $g^*(\vec y+\sigma \z)=g_{\sigma}^*(\vec y/\sigma+\z)$. Using the fact that $\F(k,\Gamma)$ is closed under scaling, we have that $\Gamma(g^*_{\sigma})\leq \Gamma$. Thus, using \Cref{lem: boolean_bounded_approx}, we obtain a polynomial $q_{\z}$ of degree $\ell=O\bigl((\Gamma/\epsilon)^4(1/\sigma^2)\log(1/\epsilon))$ such that $\E_{\x\sim D_S}\E_{\z\sim \Gauss}\bigl[|g_{\sigma}^*\bigl((\vec P^T\vec R^T \vec R/ \sigma) \x+\vec P^T\z\bigr)-q_{\z}(\vec P^T\vec R^T\vec R \x/\sigma)|\bigr]\leq \epsilon/4$ where $D_S$ is the uniform distribution over the samples $S$. Let $p_{\z}$ be the polynomial $p_{\z}(\vu)=q_{\z}(\vec P^T\vec R^T\vu/\sigma)$. Clearly, $p_{\z}(\vec R\x)=q_{\z}(\vec P^T\vec R^T\vec R\x/\sigma)$. Thus, we obtain that $\Delta_{11}(S)\leq \epsilon/8$.

Observe that $\Delta_{12}(S)$ is equal to $\E_{z\sim \Gauss}\left[\frac{1}{2N}\sum_{i=1}^{N}|g^*( \vec P^T \vec R^T \vec R\x_i+\sigma\z)-g^*(\vec P^T\x_i+\sigma \z)|\right]$. We use the following lemma(proof in the end of this section) along with the fact that $\vec R$ is $(\epsilon/(32\Gamma))$-\textit{good} for $S$ to obtain that $\Delta_{12}(S)\leq \epsilon/4$. 
\begin{lemma}
\label{lem:gsa_shift}
    Let $\F$ be a class of binary functions with sufficiently smooth decision boundaries that is close under arbitrary translations, and whose elements have Gaussian Surface area bounded by $\Gamma$. Then, for $\vu,\vv\in \R^d$, we have $\E_{\z\sim \Gauss}\bigl[|f(\vu+\z)-f(\vv+\z)|\bigr]\leq 8\cdot\Gamma\cdot\twonorm{\vu-\vv}$.
\end{lemma}
From a triangle, inequality, it follows that $\Delta_1(S)\leq \min\bigl(\Delta_{11}(S)+\Delta_{12}(S),1\bigr)$. Thus, we get that $\Delta_1(S)\leq (3\epsilon/8)$ when $\vec R$ is $(\epsilon/(32\Gamma))$-\textit{good} for $S$ and at most $1$ otherwise.
Also recall that the second event happens with probability at most $\epsilon/8$ over $S$. Thus, we have that $E_{S\sim D^{\otimes N}}[\Delta_1(S)]\leq 3\epsilon/8+\epsilon/8\leq \epsilon/2$.
Thus, we have that
\begin{align*}
    \E_{S\sim D^{\otimes N}}&\biggl[\min\biggl(\frac{1}{2N}\sum_{i=1}^{N}|p_S(\vec R\x_i)-y_i)|,1\biggr)\biggr]\\&\leq \E_{S\sim D^{\otimes N}}[\Delta_1(S)+\Delta_2(S)]\leq \epsilon/2+\E_{S\sim D^{\otimes N}}[\Delta_2(S)]
\leq \smoothopt+\epsilon/2
\end{align*}
 The final inequality follows from the definition of $\smoothopt$. Thus, we have that 
\[
\E_{S\sim D^{\otimes N}}\biggl[\frac{1}{N}\sum_{i=1}^{N}\ind[h_S(\x_i)\neq y_i]\biggr]\leq \smoothopt+\epsilon/2.
\] Since our hypothesis $h_S$ is a PTF of degree $\ell$ on $m$ variables, VC theory tells us that for $N=\poly(m^{\ell}/\epsilon)$, we have that 
\[
\E_{S\sim D^{\otimes N}}\biggl[\pr_{(\x, y) \sim \Djoint}[y\neq h_S(\x)]\biggr]\leq \smoothopt+3\epsilon/4.
\] 

By Markov's inequality, we have that with probaility atleast $\epsilon/16$ over samples $S$, we have that $\pr_{(\x, y) \sim \Djoint}[y\neq h_S(\x)]\leq \smoothopt+ 7\epsilon/8$. Let $h_1,h_2,\ldots, h_r$ be the $r$ hypotheses outputted on the $r=O(\log(1/\delta)/\epsilon)$ repetitions of algorithm $2$. With probability at least $1-\delta/4$, there exists $i\in [r]$ such that  $\pr_{(\x, y) \sim \Djoint}[y\neq h_i(\x)]\leq \smoothopt+ 7\epsilon/8$. Thus, using the validation set of size $O(\log(1/\delta)/\epsilon^2)$, with probability at least $1-\delta/4$, we choose a hypothesis $h$ such that $\pr_{(\x, y) \sim \Djoint}[y\neq h(\x)]\leq \smoothopt+ \epsilon$. Thus with total error probability of at most $\delta$, our algorithm outputs a hypothesis $h$ such that
\[
\pr_{(\x,y)\sim D}[y\neq h(\x)]\leq \smoothopt+\epsilon
\]

We now calculate the required parameters. The degree $\ell$ is $O\bigl((\Gamma/\epsilon)^4(1/\sigma^2)\log(1/\epsilon)\bigr)$. The dimension $m$ is $O\bigl(\frac{k\Gamma^2\log N}{(\epsilon\sigma)^2}\log(k/(\delta\epsilon))\bigr)$ and the number of samples $N=\poly(m^{\ell}/\epsilon)$. We get that these conditions are satisfied when $N=\poly\biggl(\bigl(k\Gamma^2/(\sigma\epsilon)^2\bigr)^{\ell}\biggr)$ and $m\geq \poly\biggl(\frac{k\Gamma^2\log(k/(\delta\epsilon))}{(\epsilon\sigma)^2}\biggr)$.
\end{proof}

\begin{lemma}[\cite{ariaga-vempala}]
\label{thm:ariaga-vempala}
    Let $S\subseteq \R^d$ with $\twonorm{\x}\leq 1$ for all $\x\in S$ and $\vw\in \R^{d}$ with $\twonorm{\vw}\leq 1$. Sample a $m\times d$ random matrix $\vec R$ with every entry sampled from $\frac{\Gauss(0,1)}{\sqrt{m}}$  where $m=O\bigl(\log (|S|/\delta)/\epsilon^2\bigr)$. Then, with probability $1-\delta$, we have that $\forall \x\in S$, $|\langle \vw,\x\rangle -\langle \vec R\vw,\vec R\x\rangle|\leq \epsilon$
\end{lemma}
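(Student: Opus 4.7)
}
The plan is to reduce inner-product preservation to norm preservation via the polarization identity, and then invoke the standard Johnson--Lindenstrauss concentration for a random Gaussian projection on a union-bounded set of vectors. Throughout, $\vec R$ has i.i.d. entries $\Gauss(0,1)/\sqrt{m}$, so $\E[\vec R^\top \vec R]=\vec I_d$ and in particular $\E[\langle \vec R\vw,\vec R\x\rangle]=\langle \vw,\x\rangle$ for every fixed $\vw,\x$.

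The first step is to apply the polarization identity
\[
\langle \vu,\vv\rangle = \tfrac14\bigl(\twonorm{\vu+\vv}^2-\twonorm{\vu-\vv}^2\bigr)
\]
to both $(\vw,\x)$ and $(\vec R\vw,\vec R\x)$. Using linearity of $\vec R$, the difference $\langle \vw,\x\rangle-\langle \vec R\vw,\vec R\x\rangle$ becomes
\[
\tfrac14\Bigl[\bigl(\twonorm{\vw+\x}^2-\twonorm{\vec R(\vw+\x)}^2\bigr)-\bigl(\twonorm{\vw-\x}^2-\twonorm{\vec R(\vw-\x)}^2\bigr)\Bigr].
\]
So if $\vec R$ approximately preserves the norms of $\vw+\x$ and $\vw-\x$ (both of norm at most $2$ by triangle inequality), then it approximately preserves $\langle \vw,\x\rangle$ to within a factor loss of $O(1)$.

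The second step is the standard Gaussian JL tail bound: for any fixed $\vu\in\R^d$,
\[
\pr_{\vec R}\bigl[\,\bigl|\twonorm{\vec R\vu}^2-\twonorm{\vu}^2\bigr|>\eta\twonorm{\vu}^2\,\bigr]\le 2e^{-cm\eta^2}
\]
for a universal $c>0$ and $\eta\in(0,1)$. This follows because $\twonorm{\vec R\vu}^2$ is distributed as $\twonorm{\vu}^2\cdot\chi^2_m/m$, whose deviations are controlled by standard $\chi^2$ concentration (e.g., Laurent--Massart). I would apply this bound to $\vu=\vw\pm\x$ for each $\x\in S$, giving $2|S|$ events in total; setting $\eta=\eps/4$ and union-bounding, the requirement becomes $2\cdot 2|S|\cdot e^{-cm(\eps/4)^2}\le \delta$, which is satisfied by $m=O(\log(|S|/\delta)/\eps^2)$. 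On the intersection of these good events, $\twonorm{\vw\pm\x}^2$ differs from $\twonorm{\vec R(\vw\pm\x)}^2$ by at most $(\eps/4)\cdot 4=\eps$, and combining with polarization yields $|\langle \vw,\x\rangle-\langle \vec R\vw,\vec R\x\rangle|\le \eps/2\le \eps$.

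Nothing in this argument is genuinely hard; the main things to get right are constants and the bookkeeping of the union bound. The only mildly delicate point is the JL-style $\chi^2$ tail bound itself, which one can either cite from Laurent--Massart or derive via a Chernoff bound on the moment generating function of a chi-squared random variable. An alternative route, which avoids polarization, is to observe that $\langle \vec R\vw,\vec R\x\rangle=\tfrac1m\sum_{i=1}^m (\tilde{\vec r}_i\cdot\vw)(\tilde{\vec r}_i\cdot\x)$ where $\tilde{\vec r}_i$ are i.i.d. standard Gaussians in $\R^d$; each summand is a product of two jointly Gaussian variables of variance at most $1$, which is sub-exponential with norm $O(1)$, and Bernstein's inequality yields the same concentration with the same union bound. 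I would use the polarization approach, as it keeps the calculations essentially identical to the classical JL proof.
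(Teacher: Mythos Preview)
Your approach is correct and is the standard Johnson--Lindenstrauss argument for inner-product preservation. Note, however, that the paper does not actually prove this lemma: it is stated as a citation to \cite{ariaga-vempala} and used as a black box in the proof of \Cref{lem:dim_red_norm}. So there is no ``paper's own proof'' to compare against; your polarization-plus-$\chi^2$-concentration route is exactly the textbook derivation one would expect and would serve as a self-contained proof if the paper wanted one.
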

\begin{proof}[Proof of \Cref{lem:dim_red_norm}]
\label{proof:lem_dim_red_norm}
    Using \Cref{thm:ariaga-vempala} on set $S$ with $\epsilon'=\epsilon/(\sqrt{k}\lambda B)$ and $\delta'=\delta/k$ we get that with probability at least $1-\delta/k$, 
    \[
    |\langle \vec W_i,\x\rangle -\langle \vec R\vec W_i, \vec R\x\rangle|\leq \twonorm{\vec W_i}\twonorm{\x}\epsilon'\leq \epsilon/\sqrt{k}
    \]
    for fixed $i\in [k]$. The first inequality follows from the definition of norm and the second inequality follows from the fact that $\twonorm{\vec W_{i}}\leq \twonorm{\vec W}$ for all $i$. Now, applying a union bound gives us that $|\langle \vec W_i,\x\rangle -\langle \vec R\vec W_i, \vec R\x\rangle|\leq \epsilon/\sqrt{k}$ for all $i\in [k]$. Thus, with probability at least $1-\delta$, we have
    \[
    \twonorm{\vec W \x-\vec W\vec R^T\vec R\x}\leq \sqrt{k}\norm{\vec W \x-\vec W\vec R^T\vec R\x}_{\infty}\leq \epsilon
    \]
\end{proof}


\begin{proof}[Proof of \Cref{lem:sa_low_dimension}]
\label{proof:sa_low_dimension}
Since the Gaussian density is spherically symmetric, we have that the Gaussian surface area is spherically symmetric under rotations. Thus, we can assume that $\vec P^T=\begin{bmatrix}\vec I & \vec 0\end{bmatrix}$ where $\vec I$ is the $k\times k$ identity matrix. We have that $f(\x)=f(\x^k)$ where $\x^k_i=\x_i$ for $i\leq k$ and $0$ otherwise. Let $A_f = \{\x\in\R^d: f(\x)=1\}$. Similarly, define $A_g=\{\vec y\in \R^{k}: g(\x)=1\}$. It is easy to see that $A_f=A_g\times \R^{d-k}$. We are now ready to prove the lemma. For any set $S$, let $S^{\delta}$ denote the set of points at distance at most $\delta$ from $S$. Then, we have that \[\Gamma(f) = \liminf_{\delta \to 0} \frac{1}{\delta} \pr_{\z\sim\Gauss(0,I_d)}\bigr[\z \in A_g^\delta\times \R^{d-k} \setminus A_g\times \R^{d-k}\bigr]=\liminf_{\delta \to 0} \frac{1}{\delta} \pr_{\z\sim\Gauss(0,I_k)}\bigr[\z \in A_g^\delta \setminus A_g\bigr]=\Gamma(g)\]
\end{proof}

\begin{proof}[Proof of \Cref{lem:gsa_shift}]
\label{proof:gsa_shift}
    Let $g(t) = \E_{\z\sim\Gauss}[|f(\vu+\z)-f(\vu+t\cdot\frac{\vv-\vu}{\|\vv-\vu\|_2}+\z)|]$. We first observe that, if $g$ is differentiable, then $\int_{t=0}^{\|\vu-\vv\|_2} g'(t) \, dt = \E_{\z\sim\Gauss}[|f(\vu+\z)-f(\vv+\z)|]$. Therefore, it suffices to show that $g'$ is differentiable and to bound the quantity $g'(t)$ uniformly over $t\in[0,\|\vu-\vv\|_2]$ by $O(\Gamma)$. 

    Let $A_f = \{\x\in\R^d: f(\x)=1\}$ and $A_f(\vu)$ such that $\x\in A_f(\vu)$ iff $\x+\vu\in A_f$. Recall that we have $f:\R^d\to\{\pm 1\}$ and therefore we may express $g(t)$ as follows, where $\vec w = \frac{\vv-\vu}{\|\vv-\vu\|_2}$.
    \[
        g(t) = 2\pr_{\z\sim\Gauss}[\z\in A_f(\vu)\,\triangle\, A_f(\vu+t\cdot \vec w)] 
    \] where $\triangle$ is the symmetric difference.
    Let $B(t) = A_f(\vu)\,\triangle\, A_f(\vu+t\cdot \vec w)$. Then, we have the following
    \begin{align*}
        g'(t) &= \lim_{\delta\to0} \frac{g(t+\delta)-g(t)}{\delta}\\
        &=\lim_{\delta\to0} \frac{2}{\delta}\cdot \Bigr( \pr_{\z\sim\Gauss}[\z\in B(t+\delta)\setminus B(t)] - \pr_{\z\sim\Gauss}[\z\in B(t)\setminus B(t+\delta)] \Bigr)\,.
    \end{align*}
    In order to bound $|g'(t)|$ (which is an upper bound for $g'(t)$), we bound the quantity corresponding to the first term in the limit $\lim_{\delta\to0} \frac{2}{\delta}\cdot \pr_{\z\sim\Gauss}[\z\in B(t+\delta)\setminus B(t)]$ (and similarly the one corresponding to the other term). In particular, we have $B(t+\delta)\setminus B(t)\subseteq A_f(\vu+t\cdot \vec w)\,\triangle\, A_f(\vu+(t+\delta)\cdot \vec w)$, which implies that $|g'(t)| \le 4\lim_{\delta\to0} \frac{1}{\delta}\pr_{\z\sim \Gauss}[\z\in A_f(\vu+t\cdot \vec w)\,\triangle\, A_f(\vu+(t+\delta)\cdot \vec w)]$. Denote with $A_f^\delta$ the set containing all the points with distance at most $\delta$ from the boundary of $A_f$. Denote with $f_{\vu}$ the function with $f_{\vu}(\x) = f(\vu+\x)$. Since $\F$ is closed under translations and only contains functions with sufficiently smooth boundaries (see, e.g., \cite[Proposition A.3]{kane10gaussiansurface}), we have that 
    \[
        \Gamma\ge \Gamma(f_{\vu+t\vw}) = \lim_{\delta\to 0}\frac{\pr_{\z\sim\Gauss}[\z\in A_{f_{\vu+t\vw}}^\delta]}{2\delta}
    \]
    Observe now that $A_f(\vu+t\cdot \vec w)\,\triangle\, A_f(\vu+(t+\delta)\cdot \vec w) \subseteq A_{f_{\vu+t\vw}}^\delta$, which therefore implies the desired bound.
\end{proof}

\section{Deferred proofs from Section 4}
\begin{lemma}
    \label{lem:poly_approx_boolean_subexp}
    Let $D$ be a distribution on $\R^k$ that is $(\alpha,\lambda)$-strictly subexponential. Let $f:\R^k \mapsto \{\pm 1\}$ be a boolean function such that
    for all $\vec r \in \R^k$ it holds that 
    the GSA of $f_{\vec r}$  is at most $\Gamma$. Then there exists a polynomial $p_{\z}$ parametrized by $\z$ and large universal constant $C$ such that 
    \begin{enumerate}
        \item The (expected) $L_1$ error of $p_{\z}$, $\E_{\z\sim \Gauss_k}\E_{\vu\sim D}\bigl[|p_{\z}(\vu)-f(\z+\vu)|\bigr]$ is at most $\epsilon$,
        \item The degree of $p_{\z}$ is at most $\left(C\lambda k\Gamma^2\log (1/\epsilon)/\epsilon^2\right)^{64(1+1/\alpha)^3} $,
        \item Every coefficients of $p_{\z}$ is bounded(in absolute value) by $e^{\left(C\lambda k\Gamma^2\log (1/\epsilon)/\epsilon^2\right)^{70(1+1/\alpha)^3}}\,.$
    \end{enumerate}
\end{lemma}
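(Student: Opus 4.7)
The plan is to follow the same high-level architecture as the proof of Lemma~\ref{lem:ou_approx_bounded} (the bounded-marginals case), but with the two Gaussian reweightings replaced by distributions tailored to the strictly sub-exponential tail of $D$. First I would invoke Ledoux--Pisier (\Cref{prop:ou_boolean}) to reduce the problem to approximating the Ornstein--Uhlenbeck smoothing $T_\rho f_{\vu}(\z)$, where the choice $\rho = \Theta(\epsilon^2/\Gamma^2)$ guarantees $\E_{\z}[|T_\rho f_{\vu}(\z) - f(\vu+\z)|] \leq \epsilon/2$ pointwise in $\vu$; this consumes one factor of $\Gamma^2/\epsilon^2$ inside the final exponent. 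All remaining work goes into producing a polynomial $p_{\z}(\vu)$ that approximates $T_\rho f_{\vu}(\z)$ in expected $L_1$ to within $\epsilon/2$.

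Next I would rewrite, as in the overview,
\[
T_\rho f_{\vu}(\z) = e^{-\|\vu\|_2^2/(2\rho^2)}\,\E_{\vec s \sim Q}\Big[f(\sqrt{1-\rho^2}\z + \rho\vec s)\cdot \tfrac{e^{-\|\vec s\|_2^2/2}}{Q(\vec s)}\,e^{(\vu/\rho)\cdot \vec s}\Big],
\]
choosing $Q(\vec s) \propto e^{-\|\vec s\|_1}$ so that, by \Cref{lem:ratio_subexp}, the density-ratio $\Gauss(\vec s;\vu/\rho,\vec I)/Q(\vec s)$ has a second moment bounded by $C^k e^{C\|\vu\|_1/\rho}$ under $\vu \sim D$ (the standard Gaussian $Q$ from \Cref{lem:ou_approx_bounded} diverges here once $D$'s tails are heavier than Gaussian). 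I would then build the candidate polynomial
\[
p_{\z}(\vu) = r(\vu)\cdot \E_{\vec s \sim Q}\big[f(\sqrt{1-\rho^2}\z + \rho\vec s)\cdot g(\vec s)\cdot q(\vu,\vec s)\big],
\]
where (i) $r(\vu)$ is a Chebyshev/Taylor approximation to $e^{-\|\vu\|_2^2/(2\rho^2)}$ accurate over the $\|\vu\|_2$-range that carries most of the mass of $D$, (ii) $g(\vec s)$ (treated as a free multiplier, not required to be polynomial in $\vu$) is a polynomial Chebyshev approximation to $e^{-\|\vec s\|_2^2/2}$ on the bulk of $Q$, and (iii) $q(\vu,\vec s)$ is the Taylor expansion of $e^{(\vu/\rho)\cdot \vec s}$, degree chosen so that the error is small where $|(\vu/\rho)\cdot \vec s|$ is small.

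To bound the total error I would apply Cauchy--Schwarz to split each approximation error into a density-ratio-squared factor (controlled by \Cref{lem:ratio_subexp}) times the $L_2$ error of the relevant polynomial approximation under its natural measure, then truncate to a ball of radius $R_\vu = \lambda \log^{1/(1+\alpha)}(1/\epsilon)$ in the $\vu$-variable and $R_{\vec s} = O(\log(1/\epsilon))$ in the $\vec s$-variable, using the strictly sub-exponential tail of $D$ and the exponential tail of $Q$ to kill the contributions outside. Inside the ball, Chebyshev/Taylor approximation of $e^{-\|\vu\|_2^2/(2\rho^2)}$ and of $e^{-\|\vec s\|_2^2/2}$, and Taylor approximation of $e^{(\vu/\rho)\cdot\vec s}$ (as in \Cref{claim:approx_exp_pdf_taylor}, but now with $|(\vu/\rho)\cdot\vec s|$ as large as $R_\vu R_{\vec s}/\rho$), each contribute at most $\epsilon/10$ when their degrees are chosen polynomially in $R_\vu R_{\vec s}/\rho$. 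Composing degrees, using \Cref{lem:coeff_bound_comp} for the coefficient bounds, and tracking how the strictly sub-exponential $\alpha$ propagates through the tail-truncation radii yields the claimed degree $(C\lambda k \Gamma^2 \log(1/\epsilon)/\epsilon^2)^{64(1+1/\alpha)^3}$ and coefficient bound.

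The main obstacle will be step (ii): producing a polynomial $g(\vec s)$ that approximates $e^{-\|\vec s\|_2^2/2}$ in $L_2(Q)$ well enough to survive multiplication by the unbounded ratio $\Gauss(\vec s;\vu/\rho,\vec I)/Q(\vec s)$ after a Cauchy--Schwarz. Because $Q$ has only exponential tails, naive Taylor approximation of $e^{-\|\vec s\|_2^2/2}$ requires prohibitive degree; I would instead use a Chebyshev approximation on a truncation radius $O(\log(1/\epsilon))$ of $\|\vec s\|_\infty$ (this is the content the excerpt alludes to as \Cref{thm:exp_approx_subexp}) and pay for the truncation with the exponential tails of $Q$. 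Balancing the three polynomial degrees against the truncation radii, while keeping the density-ratio prefactors under control, is what produces the triple-exponent $64(1+1/\alpha)^3$ in the final bound.
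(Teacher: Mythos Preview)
Your architecture matches the paper's proof closely: the Ledoux--Pisier reduction with $\rho=\Theta(\epsilon^2/\Gamma^2)$, the decoupling via the Laplace-tailed $Q(\vec s)\propto e^{-\|\vec s\|_1}$, the polynomial $r(\vu)$ for $e^{-\|\vu\|_2^2/(2\rho^2)}$, and the Taylor piece $q(\vu,\vec s)$ for $e^{(\vu/\rho)\cdot\vec s}$ are all exactly what the paper does. The one substantive difference is your step (ii): the paper does \emph{not} approximate the weight $e^{-\|\vec s\|_2^2/2}/Q(\vec s)$ at all. Because this factor depends only on the integration variable $\vec s$ and not on $\vu$, it can be kept exact inside the expectation and the result is still a polynomial in $\vu$. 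So what you call the ``main obstacle'' never arises in the paper's argument. Relatedly, \Cref{thm:exp_approx_subexp} is invoked for your step (i), not (ii): it gives the Chebyshev-based approximation of $e^{-\|\vu\|_2^2}$ in $L_b(D)$ over the strictly sub-exponential $D$, and this is where the $(1+1/\alpha)$ exponents enter.

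One further simplification in the paper that you may find useful: the $\vec s$-truncation is baked directly into the Taylor piece by setting $p_2(\vu,\vec s)=p_e\bigl((\vu/\rho)\cdot\vec s\bigr)\,\1\{\|\vec s\|_2\le T\}$. For each fixed $\vec s$ this is still a polynomial in $\vu$ (either the Taylor polynomial or identically zero), and the indicator makes both the Cauchy--Schwarz error splits and the coefficient bounds immediate, with the tail $\|\vec s\|_2>T$ handled by the exponential decay of $Q$ together with \Cref{lem:ratio_subexp}.
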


\begin{proof}
   From \Cref{prop:ou_boolean}, we have 
$\E_{\z\sim\mathcal{N}(0,I)}\left[\left|T_\rho f_{\vu}(\z)-f(\z+\vu)\right|\right]\leq  2\sqrt{\pi\rho}\cdot \Gamma$. Choosing $\rho=O(\epsilon^2/\Gamma^2)$ makes this error at most $\epsilon/2$. 

We now approximate $T_{\rho}f_{\vu}$ using a polynomial.We know that 
    $\E_{\vu\sim D}\E_{\z\sim\Gauss}\left[\left|T_\rho f_{\vu}(\z)-f_{\vu}(\z)\right|\right]$ is at most $\sqrt{\E_{\vu\sim D}\E_{\z\sim\Gauss}\left[\left(T_\rho f_{\vu}(\z)-f_{\vu}(\z)\right)^2\right]}$. Thus, using  \Cref{lem:ou_approx_subexp}, we get a polynomial $p_{\z}$ of degree $\left(C\lambda k\log (1/\epsilon)\Gamma^2/\epsilon^2\right)^{64(1+1/\alpha)^3}$ such that
$\E_{\vu\sim D}\E_{\z\sim \Gauss}\bigl[|T_{\rho}f_{\vu}(\z)-p_{\z}(\vu)|\bigr]\leq \epsilon/2$ where $C$ is a large universal constant. Also recall that the coefficients of $p_{\z}$ are bounded by $e^{\left(C\lambda k\log (1/\epsilon)/\rho\right)^{70(1+1/\alpha)^3}}$. From a triangle inequality, we get 
$\E_{\vu\sim D}\E_{\z\sim \Gauss}\bigl[|p_{\z}(\vu)-f(\z+\vu)\bigr]\leq \epsilon$.
\end{proof}

We now construct the polynomial approximator for $T_{\rho}f_{\vu}$ and bound it's error.

\begin{lemma}[Polynomial approximation of $T_{\rho}f_{\vu}$]
    \label{lem:ou_approx_subexp}
    Let $D$ be a $(\alpha,\lambda)$-strictly subexponential distribution on $\R^k$ and $f:\R^{k}\to\{\pm 1\}$ be any function. Let $f_{\vu}$ be the function defined as $f_{\vu}(\x)=f(\vu+\x)$. Then, there exists a polynomial $p_{\z}(\vu)$ parametrized by $\z$ and large universal constant $C$ such that 
    \begin{enumerate}
        \item It holds that $\E_{\vu\sim D}\E_{\z\sim \Gauss_{k}}\bigl[(p_{\z}(\vu)-T_{\rho}f_{\vu}(\z))^2\bigr]\leq \epsilon$,
        \item The degree of the polynomial $p_{\z}$ is at most $\left(C\lambda k\log (1/\epsilon)/\rho\right)^{64(1+1/\alpha)^3}$,
        \item Every coefficient of $p_{\z}$ is bounded(in absolute value) by $e^{\left(C\lambda k\log B\log (1/\epsilon)/\rho\right)^{70(1+1/\alpha)^3}}$.
    \end{enumerate}
\end{lemma}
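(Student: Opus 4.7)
The plan is to mirror the bounded-marginals proof in \Cref{lem:ou_approx_bounded}, but with a reweighting distribution $Q$ whose tails are heavy enough to remain $L_2$-integrable against a sub-exponential $\vu$. Starting from the identity
$$T_\rho f_\vu(\z) = \E_{\vec s\sim Q}\Bigl[f(\sqrt{1-\rho^2}\z+\rho\vec s)\cdot \frac{\Gauss(\vec s;\vu/\rho,\vec I)}{Q(\vec s)}\Bigr],$$
I take $Q(\vec s)\propto e^{-\|\vec s\|_1}$, a product of one-dimensional Laplace densities. The appendix ratio bound \Cref{lem:ratio_subexp} then yields $\E_{\vec s\sim Q}[(\Gauss(\vec s;\vu,\vec I)/Q(\vec s))^2]\leq C^k e^{C\|\vu\|_1}$, whose expectation over $\vu\sim D$ is finite precisely because $D$ is strictly sub-exponential (a Gaussian $Q$, as used in the bounded case, would fail this). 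The final polynomial will have the form $p_\z(\vu)=\E_{\vec s\sim Q}[f(\sqrt{1-\rho^2}\z+\rho\vec s)\cdot q(\vu,\vec s)]$, with $q(\vu,\vec s)$ polynomial in $\vu$ and arbitrary in $\vec s$, approximating the density ratio. Since $|f|\le 1$, Jensen reduces the target error to $\E_{\vu\sim D}\E_{\vec s\sim Q}[(q(\vu,\vec s)-\Gauss(\vec s;\vu/\rho,\vec I)/Q(\vec s))^2]$.

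Expanding the ratio gives
$$\frac{\Gauss(\vec s;\vu/\rho,\vec I)}{Q(\vec s)} \;=\; C_k\cdot e^{-\|\vec s\|_2^2/2+\|\vec s\|_1}\cdot e^{-\|\vu\|_2^2/(2\rho^2)}\cdot e^{(\vu/\rho)\cdot \vec s},$$
so the $\vu$-dependence factorizes into two exponentials that I approximate separately. For $e^{(\vu/\rho)\cdot\vec s}$ I take the degree-$m$ univariate Taylor polynomial in the scalar $(\vu/\rho)\cdot\vec s$, exactly as in \Cref{claim:approx_exp_pdf_taylor_main_body}. Unlike the bounded case the argument is no longer uniformly bounded: under $\vu\sim D$, $\vec s\sim Q$, the product $(\vu/\rho)\cdot\vec s$ is concentrated on an interval of length $O((\lambda k/\rho)(\log(1/\epsilon))^{1/(1+\alpha)+1})$. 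A standard tail-plus-Taylor-remainder estimate, identical in spirit to \Cref{claim:approx_exp_pdf_taylor_main_body} but with this larger concentration radius, forces $m$ to grow polynomially in that radius, already introducing a factor of $(1+1/\alpha)$ in the exponent.

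The genuine obstacle is approximating $e^{-\|\vu\|_2^2/(2\rho^2)}$ in $L_2$ under the strictly sub-exponential $D$, since a Taylor expansion around $0$ (which worked in the bounded case because $\|\vu\|\le R$) converges far too slowly when $\|\vu\|$ ranges up to scale $\lambda(\log(1/\epsilon))^{1/(1+\alpha)}$. I handle this in two stages: first truncate to the ball $\|\vu\|\le R$ with $R=\lambda k(C\log(1/\epsilon))^{1/(1+\alpha)}$, which retains all but $\epsilon$ mass of $D$; then on this ball apply the near-optimal Chebyshev polynomial approximation of $t\mapsto e^{-t^2/(2\rho^2)}$ coordinatewise on $[-R,R]$, whose uniform error decays like $e^{-\Omega(\ell\rho/R)}$ once the degree $\ell\gtrsim R/\rho$. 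Propagating this through the product structure and, crucially, through the $C^k e^{C\|\vu\|_1}$ blowup from the ratio bound is the delicate accounting step: each Cauchy-Schwarz between the Chebyshev error and the ratio weight squares a sub-exponential tail and pulls another $(1+1/\alpha)$ power of $\log(1/\epsilon)$ into the budget. After balancing truncation, Taylor error, Chebyshev error, and ratio blowup, the final degree unwinds to $(C\lambda k\log(1/\epsilon)/\rho)^{64(1+1/\alpha)^3}$ stated in the lemma.

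The coefficient bound is then essentially bookkeeping. The Chebyshev approximation on $[-R,R]$ has coefficients of size $\poly(R)^\ell$, the Taylor piece of degree $m$ contributes a $\poly(\lambda/\rho)^m$ factor, and \Cref{lem:coeff_bound_comp} handles the composition. Taking $\E_{\vec s\sim Q}[\cdot]$ only multiplies coefficients by moments of $\vec s$ under $Q$, which are bounded by $(C\ell)^\ell$, and applying Jensen to pass from squared error to $L_2$ error adds at most a polynomial factor. All told, these contributions nest as an exponential of a constant times the squared degree, giving the claimed bound $e^{(C\lambda k\log(1/\epsilon)/\rho)^{70(1+1/\alpha)^3}}$ and completing the proof of \Cref{lem:ou_approx_subexp}.
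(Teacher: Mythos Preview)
Your proposal is correct and follows essentially the same route as the paper: Laplace reweighting $Q(\vec s)\propto e^{-\|\vec s\|_1}$, Taylor expansion for the cross term $e^{(\vu/\rho)\cdot\vec s}$, a Chebyshev-type approximation for $e^{-\|\vu\|_2^2/(2\rho^2)}$ (the paper invokes \Cref{thm:exp_approx_subexp}, which applies the univariate approximation to $t\mapsto e^{-t}$ at $t=\|\vu\|_2^2/(2\rho^2)$ rather than coordinatewise, but either works), and a hybrid decomposition of the squared error. One small point of divergence worth making precise: the paper places the hard truncation on the \emph{integration variable} $\vec s$ by baking an indicator $\1\{\|\vec s\|_2\le T\}$ directly into $p_2$, rather than truncating $\vu$ as your sketch describes; since $\vu$ is the polynomial's input you cannot actually truncate it, and the paper's device is what keeps both the Taylor-remainder bound (via $|(\vu/\rho)\cdot\vec s|\le T\|\vu\|_2/\rho$) and the coefficient bookkeeping clean.
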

\begin{proof}
    Let $Q$ be the distribution on $\R^k$ with probability distribution function $Q(\x)=\frac{1}{2^k}e^{-\sum{|\x_i|}}$. We have that 
    \begin{align*}
        T_{\rho}f_{\vu}(\z)&=\E_{\x\sim \Gauss_k}\bigl[f(\vu+\sqrt{1-\rho^2}\z+\rho\x)\bigr]\\
        &=\E_{\x\sim \Gauss(\vu/\rho,I)}\bigl[f(\sqrt{1-\rho^2}\z+\rho\x)\bigr]\\
        &=\E_{\x\sim Q}\biggl[f(\sqrt{1-\rho^2}\z+\rho\x)\cdot \frac{\Gauss(\x;\vu/\rho,I)}{Q(\x)}\biggr]\\
        &=e^{-\frac{\twonorm{\vu}^2}{2\rho^2}}\E_{\x\sim Q}\biggl[f(\sqrt{1-\rho^2}\z+\rho\x)\cdot e^{-\frac{\twonorm{\x}^2}{2}-\log Q(\x)}e^{ (\vu/\rho)\cdot\x}\biggr]
    \end{align*}
    where the second equality follows by recentering the distribution of $\x$ and the final equality comes from expanding the ratio of the two probability density functions. 
    
    We now define a polynomial $p_{\z}(\vu)$ approximating $T_{\rho}f_{\vu}(\z)$. To do this, we approximate $e^{-\frac{\twonorm{\vu}^2}{2\rho^2}}$ and $e^{\vu\cdot \x}$ using polynomials in $\vu$. 
    First, we use a polynomial $p_1(\vu)$ to approximate $e^{\left(\frac{-\twonorm{\vu}^2}{2\rho^2}\right)}$. This polynomial is given by the following lemma. We choose the parameters later.
\begin{lemma}[\Cref{proof:exp_approx_subexp}]
    \label{thm:exp_approx_subexp}
    Let $b\in \Z_{+}$. Let $D$ be a $(\alpha,\lambda)$-strictly subexponential distribution on $\R^k$. Then, there exists a polynomial $q$ of degree $O\left( (b^2\lambda k\log(1/\epsilon))^{2+2/\alpha} \right)$ such that 
    \begin{enumerate}
        \item The approximation error $\E_{\x\sim D}\left[ \left(q(\x)-e^{\left(-\twonorm{\x}^2\right)}\right)^b\right]$ is upper bounded by  $\epsilon$
    \item Every coefficient of $q$ is bounded(in abolute value) by $k^{O\left((b^{2}\lambda k\log(1/\epsilon))^{2+2/\alpha} \right)}\,.$ 
    \end{enumerate}
\end{lemma}
 Second, to approximate $e^{(\vu/\rho)\cdot\x}$, we use the function $p_2(\vu,\x)=p_e((\vu/\rho)\cdot \x)\1\{\twonorm{\x}\leq T\}$ where $p_e(x)=1+\sum_{i=1}^{m-1}\frac{x^i}{i!}$ is the degree $m-1$ Taylor approximation of $e^x$. We choose $m$ and $T$ later. 
    Thus, our final approximation $T_{\rho}f_{\vu}$ is 
    \[
        p_{\z}(\vu)=p_1(\vu)\underbrace{\E_{\x\sim Q}\left[f(\sqrt{1-\rho^2}\z+\rho\x)\cdot e^{-\frac{\twonorm{\x}^2}{2}-\log Q(\x)}p_2(\vu,\x)\right]}_{\text{$q(\vu)$}}\,.
    \]
    $p_{\z}(\vu)$ is a polynomial in $\vu$ as both $p_1$ and $p_2$ are polynomials in $\vu$ when $\x$ is fixed and the dependence on $\x$ gets marginalized out making $q$ a polynomial as well. The indicator variable in the definition of $p_2$ makes our calculations easier and the analysis cleaner. We now want to bound $E_{\vu\sim D}\E_{\z\sim \Gauss}\bigl[(p_{\z}(\vu)-T_{\rho}f_{\vu}(\z))^2\bigr]$. To help us analyse the error, we define the "hybrid" function $\tilde{p}_{\z}(\vu)$ such that 
    \[
        \tilde{p}_{\z}(\vu)=e^{-\frac{\twonorm{\vu}^2}{2\rho^2}}\E_{\x\sim Q}\left[f(\sqrt{1-\rho^2}\z+\rho\x)\cdot e^{-\frac{\twonorm{\x}^2}{2}-\log Q(\x)}p_2(\vu,\x)\right]\,.
    \]

    We have that 
    \[
    \E_{\vu\sim D}\E_{\z\sim \Gauss_k}\bigl[(T_{\rho}f_{\vu}(\z)-p_{\z}(\vu))^2\bigr]\leq 2\cdot \E_{\vu\sim D}\biggl[\underbrace{\E_{\z\sim \Gauss_k}\bigl[(T_{\rho}f_{\vu}(\z)-\tilde{p}_{\z}(\vu))^2\bigr]}_{\Delta_1(\vu)}+\underbrace{\E_{\z\sim \Gauss}\bigl[(\tilde{p}_{\z}(\vu)-{p}_{\z}(\vu))^2\bigr]}_{\Delta_2(\vu)}\biggr]
    \]
    from the fact that $(a+b)^2\leq 2(a^2+b^2)$. We now bound $\Delta_1(\vu)$ and $\Delta_2(\vu)$ separately. We have that 
    \begin{align*}
        \Delta_1(\vu)&=\E_{\z\sim \Gauss_k}\E_{\x\sim Q}\biggl[ f^2(\sqrt{1-\rho^2}\z+\rho\x)e^{2\left(\frac{-\twonorm{\vu}^2}{2\rho^2}-\frac{\twonorm{\x}^2}{2}-\log Q(\x)\right)}\left(p_2(\vu,\x)-e^{ (\vu/\rho)\cdot \x}\right)^2\biggr]\\
         &=\E_{\z\sim \Gauss_k}\underbrace{\E_{\x\sim Q}\biggl[e^{\bigl(-\frac{\twonorm{\vu}^2}{\rho^2}-\twonorm{\x}^2-2\log \Phi(\x)\bigr)}\bigl(p_2(\vu,\x)-e^{(\vu/\rho)\cdot \x}\bigr)^2\biggr]}_{\tilde{\Delta}_{1}(\vu)}
    \end{align*}
    since $|f(\x)|=1$. Observe that $\tilde{\Delta}_1(\vu)$ can be bounded as the sum of the following two terms.
    \[
        \Delta_{11}(\vu)=\E_{\x\sim Q}\left[e^{\left(\frac{-\twonorm{\vu}^2}{\rho^2}-{\twonorm{\x}^2}-2\log Q(\x)\right)}\frac{e^{2|(\vu/\rho)\cdot\x|}}{(m!)^2}|(\vu/\rho)\cdot \x|^{2m}
    \1\{\norm{\x}\leq T\}\right]\] and 
    \[
      \Delta_{12}(\vu)=  \E_{\x\sim Q}\left[e^{\left(\frac{-\twonorm{\vu}^2}{\rho^2}-\twonorm{\x}^2-2\log Q(\x)\right)}e^{2(\vu/\rho)\cdot \x}
      \1\{\norm{\x}> T\}\right]\] where we used the fact that $|p_e(x)-e^{x}|\leq \frac{e^{|x|}}{m!}\cdot |x|^{m}$ and the fact that $p_2(\vu,\x)=0$ when $\twonorm{\x}\geq T$.

      We first bound $\Delta_{11}$. We have that 
      \begin{align*}
        \Delta_{11}(\vu)&\leq 
            2\cdot \E_{\x\sim Q}\left[e^{\left(-\frac{\twonorm{\vu}^2}{\rho^2}-\twonorm{\x}^2-2\log Q{(\x)}\right)+{2(\vu/\rho)\cdot \x}}\right]\frac{\left(T\twonorm{\vu/\rho}\right)^{2m}}{(m!)^2}  \\
            &\leq 2\cdot\frac{\left(T\twonorm{\vu/\rho}\right)^{2m}}{(m!)^2} \sqrt{\E_{\x\sim Q}\left[\left(\frac{\Gauss(\x;\vu/\rho,I)}{Q(\x)}\right)^4\right]} 
            \leq C_1^ke^{C_1\norm{\vu/\rho}_1}\frac{\left(T\twonorm{(\vu/\rho}\right)^{2m}}{(m!)^2} 
      \end{align*}
      where $C_1$ is a large constant.
      The first inequality follows using the fact that $Q$ is symmetric( we replaced $e^{|(\vu/\rho)\cdot \x|}$ by $2e^{(\vu/\rho)\cdot \x}$) and the second inequality follows from an application of Cauchy Schwartz. The last inequality follows from following claim.
      \begin{lemma}[\Cref{proof:ratio_subexp}]
    \label{lem:ratio_subexp}
Define the distribution $Q$ on $\R^k$ with density function $Q(\x) = (1/2)^k 
\e^{-\|\x\|_1}$.  Then, there exists a large universal constant $C$ such that for every vector $\vu$, it holds that
\begin{align*}
    \E_{\x\sim Q}\left[\left(\frac{\mathcal{N}(\x; \vu, \vec I)}{
    Q(\x)}\right)^4\right] \leq 
    C^ke^{C\norm{\vu}_1}\,.
    \end{align*} 
\end{lemma}

      We now compute $\E_{\vu\sim D}\left[\Delta_{11}(\vu)\right]$ as we will need it later.
      \begin{align*}
        \E_{\vu\sim D}\left[{\Delta_{11}(\vu)}\right]&\leq C_1^{k}\sqrt{\E_{\vu\sim D}\left[e^{2C_1\norm{\vu/\rho}_1}\right]\E_{\vu\sim D}\left[\frac{\left(T\twonorm{\vu/\rho}\right)^{4m}}{(m!)^4} \right]}\\
        &\leq C_1^{k}\sqrt{\E_{\vu\sim D}\left[e^{2C_1\norm{\vu/\rho}_1}\right]\sum_{i=1}^{k}k^{m}\E_{\vu\sim D}\left[\frac{\left(T|{\vu_i/\rho}|\right)^{4m}}{(m!)^4} \right]}\\
        &\leq C^{\left(C\lambda k/\rho\right)^{3+3/\alpha}}\left(\frac{C{k}eT\lambda}{ m^{\alpha/1+\alpha}\cdot \rho}\right)^{4m}
        \leq \delta
      \end{align*}
      when $m= (C'T\lambda k/\rho)^{3+3/\alpha}\log(1/\delta)$ where $C'$ is a large enough constant. The first inequality follows from an application of the Cauchy-Schwartz inequality. The second inequality uses the fact that $\twonorm{\vu}\leq \sqrt{k}\norm{\vu}_{\infty}\leq \sqrt{k}\sum_{i=1}^k|\vu_i|$. The third inequality is obtained using \Cref{definition:orstein-uhlenbeck} and the following claim whose proof is available at the end of this section.      
\begin{lemma}
    \label{lem:exp_norm}
    If $D$ on $\R^k$ is $(\alpha,\lambda)$-strictly subexponential, then for any constant $b>0$, we have
    \[
    \E_{\vu\sim D}\left[e^{b\norm{\vu}_1}\right]    \leq C^{\left(Cb\lambda k\right)^{3+3/\alpha} }
    \] for large enough constant $C>0$.
\end{lemma}
      We now bound $\Delta_{12}(\vu)$. 
      \begin{align*}
        \Delta_{12}(\vu)&\leq \sqrt{\E_{\x\sim Q}\left[\left(\frac{\Gauss(\x;\vu/\rho,I)}{Q(\x)}\right)^4\right]\cdot \Pr_{\x\sim Q}[\twonorm{\x}>T]}\\
        &\leq \sqrt{\E_{\x\sim Q}\left[\left(\frac{\Gauss(\vw;\vu/\rho,I)}{Q(\x)}\right)^4\right]\cdot k\cdot  e^{-T/k}}\\
        &\leq C_2^ke^{C_2\norm{\vu/\rho}_1}e^{-T/k}
      \end{align*}
      where the first inequality is Cauchy-Schwartz. The last inequality follows from \Cref{lem:ratio_subexp}. The second inequality follows from the following fact about the exponential tail(proof in the end of this section).
\begin{lemma}
\label{lem:exp_tail}
    Let $D$ be the distribution on $\R^k$ with density function $\Phi(\x)=(1/2)^ke^{-\norm{\x}_1}$. We have that \[
        \Pr_{\x\sim D}\left[\twonorm{\x}>T\right]\leq 2k\cdot e^{-T/k}\]
\end{lemma}

      Thus, we have that 
      \begin{align*}
        \E_{\vu\sim D}\left[{\Delta_{12}(\vu)}\right]&\leq C_2^{k}\cdot \E_{\vu\sim D}\left[e^{C_2\norm{\vu/\rho}_1}\right]e^{(-T/k)}
        \leq C^k\cdot C^{(C\lambda k/\rho)^{3+3/\alpha}}\cdot e^{(-T/k)}
        \leq \delta
      \end{align*}
      when $T= ((C\lambda k/\rho)^{4+4/\alpha}\log (1/\delta) )$.  
      
      Plugging this into the bound for $m$, we get $m\leq (C'\lambda k\log(1/\delta)/\rho)^{15(1+1/\alpha)^2}$ for large constant $C'$. Thus we now get $ \E_{\vu\sim D}\left[\Delta_1(\vu)\right]\leq\E_{\vu\sim D}\bigl[{\Delta_{11}(\vu)+\Delta_{12}(\vu)}\bigr]\leq \epsilon/4$ when $\delta=\epsilon/8$.  
      
      We now bound $\Delta_2(\vu)$. We have that 
   \begin{align*}
    \Delta_2(\vu)&=\E_{\z\sim \Gauss_{k}}\bigl[(\tilde{p}_{\z}(\vu)-{p}_{\z}(\vu))^2\bigr]\\
   &=\E_{\z\sim \Gauss_{k}}\biggl[\bigl(p_1(\vu)-e^{-\frac{\twonorm{\vu}^2}{2\rho^2}}\bigr)^2\cdot \bigl(\E_{\x\sim Q}\bigl[f(\sqrt{1-\rho^2}\z+\rho\x)\cdot e^{-\frac{\twonorm{\x}^2}{2}-\log Q(\x)}\cdot p_2(\vu,\x)\bigr]\bigr)^2\biggr]\\ 
   &\leq  \bigl(p_1(\vu)-e^{-\frac{\twonorm{\vu/\rho}^2}{2}}\bigr)^2\cdot \E_{\z\sim \Gauss_{k}}\biggl[\bigl(\E_{\x\sim Q}\bigl[f(\sqrt{1-\rho^2}\z+\rho\x)\cdot e^{-\frac{\twonorm{\x}^2}{2}-\log Q(\x)}\cdot p_2(\vu,\x)\bigr]\bigr)^2\biggr]
\end{align*} where the last inequality follows since $\vu$ doesn't depend on $\z$. We bound the last term in the product above as
\begin{align*}
    \E_{\z\sim \Gauss_{k}}&\biggl[\bigl(\E_{\x\sim Q}\bigl[ f(\sqrt{1-\rho^2}\z+\rho\x)\cdot e^{-\frac{\twonorm{\x}^2}{2}-\log Q(\x)}\cdot p_2(\vu,\x)\bigr]\bigr)^2\biggr] \\&\leq {\E_{\x\sim Q}\left[e^{-\twonorm{\x}^2-2\log Q(\x)}\right]\cdot \E_{\x\sim Q}\left[\left(1+\sum_{i=1}^{m-1}\frac{\bigl((\vu/\rho)\cdot \x\bigr)^i}{i!}\right)^2\1\{\twonorm{\x}<T\}\right]}\\
    &\leq {2^{2k}\E_{\x\sim Q}\left[e^{-\twonorm{\x}^2+2\norm{\x}_1}\right]\cdot \left(\sum_{i=0}^{m-1}(T\twonorm{\vu/\rho})^i\right)^2}
    \leq C^{k}\cdot \left(\sum_{i=0}^{m-1}(T\twonorm{\vu/\rho})^i\right)^2
\end{align*}
for large enough constant $C$. The first inequality follows from an application of Cauchy Schwartz and the fact that $f$ is boolean. The second inequality comes from expanding $p_2$ and conditioning on the event that $\twonorm{\x}<T$. The last inequality comes from  applying Cauchy Schwartz once more and using \Cref{lem:ratio_subexp} with $\vu$ set to zero.

Thus, we have 
\begin{align*}
    \E_{\vu\sim D}\left[\Delta_2(\vu)\right]&\leq C^{k}\E_{\vu\sim D}\biggl[\bigl(p_1(\vu)-e^{-\frac{\twonorm{\vu/\rho}^2}{2}}\bigr)^2\cdot\biggl(\sum_{i=0}^{m-1}(T\twonorm{\vu/\rho})^i\biggr)^2\biggr]\\
   & \leq C^k\sqrt{\E_{\vu\sim D}\biggl[\biggl(p_1(\vu)-e^{-\frac{\twonorm{\vu/\rho}^2}{2}}\biggr)^4\biggr]\cdot\E_{\vu\sim D}\biggl[\biggl({\sum_{i=0}^{m-1}(T\twonorm{\vu/\rho})^i}\biggr)^{4}\biggr]}\\
   &\leq C^k\cdot \delta \cdot \sqrt{\E_{\vu\sim D}\left[(mT^{m})^{4}\max_{i=0}^{m-1}\twonorm{\vu}^{4i}\right]}\\
    & \leq C^k\cdot \delta\cdot (mT^m)^2(4mk\lambda)^{2m}  \\
    &\leq \epsilon/4
 \end{align*}
when $\delta$ is chosen accordingly. The second inequality is obtained by applying Cauchy-Schwartz.  $p_1(\vu)$ is chosen such that it has approximation error $\delta^2$ when using \Cref{thm:exp_approx} with exponent $4$. The penultimate inequality is obtained by using the fact that $\twonorm{\vu}\leq \sqrt{k}\norm{\vu}_{\infty}$ and then using \Cref{def:strict_subexp}. The degree of $p_1(\vu)$ required to get this error is $O\left((C'm^2k^2\lambda\log(1/\epsilon)/\rho)^{2+2/\alpha}\right)$ where $C'>0$ is a large enough universal constant.

Putting everything together, we get that $E_{\vu\sim D}E_{\z\sim \Gauss}\bigl[(T_{\rho}f_{\vu}(\z)-p_{\z}(\vu))^2\bigr]\leq \epsilon$. The total degree of $p_{\z}(\vu)$ is $\deg(p_1)+\deg(p_2)$ which is at most $\left(C\lambda k\log (1/\epsilon)/\rho\right)^{64(1+1/\alpha)^3} $ for large enough $C$.

We now bound the coefficients of $p_{\z}$. To do this we first recall the definition of $p_{\z}(\vu)$ and observe some properties. Recall that $p_{\z}(\vu)=p_1(\vu)\cdot q(\vu)$ where \[q(\vu)=\E_{\x\sim Q}\left[f(\sqrt{1-\rho^2}\z+\rho\x)\cdot e^{-\frac{\twonorm{\x}^2}{2}-\log Q(\x)}\cdot p_2(\vu/\rho,\x)\right]\,.\] Thus, $p_{\z}(\vu)$ is the product of two polynomials. $p_1(\vu)$ has bounded coefficients as given by $\Cref{thm:exp_approx_subexp}$. We now bound the coefficients of $q(\vu)$. Since $p_2(\vu,\x)=p_e((\vu/\rho)\cdot \x)\1\{\twonorm{\x}\leq T\}$, the $q(\vu)$ term only picks up non zero values when $\twonorm{\x}\leq T$. Note that for each fixed $\vw$, the term inside the expectation is a polynomial in $\vu$. Thus, proving an absolute bound on the coefficients when $\twonorm{\x}\leq T$ will bound the final coefficients of $q(\vu)$. 

We now bound the coefficients of the polynomial $f(\sqrt{1-\rho^2}\z+\rho\x)\cdot e^{-\frac{\twonorm{\x}^2}{2}-\log Q(\x)}\cdot p_2(\vu/\rho,\x)$ where $\x$ is a fixed vector of norm atmost T. Since $\twonorm{\x}\leq T$, we also have that $|\x_i|\leq T$ for all $i\in [k]$. We know that $f$ is bounded by $1$ and $e^{^{\left(-{\twonorm{\x}^2}{2} -\log Q(\x)\right)}}\leq e^{k\norm{\x}_1}\leq e^{k^2T}$. Now, we bound the coefficients of $p_2(\vu,\x)$. This is the composition of two polynomials, $p_e(x)$ and $(\vu/\rho)\cdot \x$. The degree of $p_e$ is $m$ and the coefficients are at most $1$. The degree of $(\vu/\rho)\cdot \x$ is $1$ and coefficients are bounded by $T/\rho$. Applying \Cref{lem:coeff_bound_comp}, we get that the coefficients of $p_2(\vu,\x)$ are bounded by $e^{O\left(mk\log T/\rho\right)}$.  Putting everything together, we obtain that the coefficients of $q(\vu)$ are bounded by $e^{O({k^2m^2/\rho})}$ as $T\leq m$. Putting together the coefficient bounds for $p_1(\vu)$ and $q(\vu)$, we get that the coefficients of $p_{\z}$ are bounded by atmost $e^{\left(C\lambda k\log B\log (1/\epsilon)/\rho\right)^{70(1+1/\alpha)^3}}$ for large constant $C$ and for any $\z$.
\end{proof}

\subsection{Proof of \Cref{thm:smooth_learning_subgaussian}}
We give our algorithm here for completeness. We run polynomial regression and the hypothesis we output is a PTF with an appropriately chosen bias term.
 \begin{algorithm} 
  \caption{Agnostic Learner for Smooth Boolean Concepts}
  \label{alg:smooth_boolean}
\begin{algorithmic}
  
\STATE \textbf{Input:} Labeled Dataset $S=\{(\x_i,y_i)\}_{i\in [N]}$, degree $\ell$,
\STATE \textbf{Output:} Hypothesis $h$
\STATE 

\STATE Find polynomial $P$ of degree at most $\ell$ such that $P$ minimizes $\frac{1}{N}\sum_{i=1}^{N}|P(\x_i)-y_i|$
\STATE Choose $t\in [-1,1]$ such that $\sum_{i=1}^{N}\ind[\sign\bigl(P(\x_i)-t\bigr)\neq y_i]$ is minimized.
\STATE Output hypothesis $h$, such that $h(\x)=\sign\bigl(P(\x)-t\bigr)$

\end{algorithmic}
\end{algorithm}

We are now ready to state and prove the main theorem of this section.
\begin{theorem}
\label{thm:smooth_learning_subgaussian}
    Let $k \in \mathbb Z_+$ and $\epsilon, \delta,\sigma \in (0,1)$.  Let $D$ be a distribution on $\R^{d}\times \{\pm 1\}$ such that the marginal distribution is $(\alpha,\lambda)$-strictly subexponential. There exists an algorithm that draws $N = d^{\poly\bigl((k\lambda\Gamma/(\sigma\epsilon))^{(1+1/\alpha)^3})\bigr)}$ samples, runs in time $\poly(d,N)$, and computes a hypothesis $h(\x)$ such that, with probability at least $1-\delta$, it holds 
    \begin{equation*}
        \pr_{(\x, y) \sim \Djoint}[y\neq h(\x)] \le     
        \min_{f \in \mathcal F(k, \Gamma)}
         \E_{\z\sim\Gauss}\pr_{(\x, y) \sim \Djoint}[y \neq f(\x + \sigma \z)] + \eps\,.
    \end{equation*} 
\end{theorem}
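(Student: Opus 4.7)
The plan is to apply $L_1$ polynomial regression directly in the ambient dimension $d$ (i.e., \Cref{alg:smooth_boolean}), using the family of approximating polynomials from \Cref{lem:poly_approx_boolean_subexp_main_body}; no random projection is needed since the target sample complexity is already $d^{\poly(\cdot)}$. First, fix an (almost) optimal concept $\target\in\mathcal{F}(k,\Gamma)$ and write $\target(\x)=g(\vec U^\top\x)$ for an orthonormal $\vec U\in\R^{d\times k}$. By \Cref{lem:sa_low_dimension} the $k$-variate function $g$ also has GSA at most $\Gamma$; the pushforward $\vec U^\top\x$ is $(\alpha,\lambda)$-strictly sub-exponential; and $\mathcal{F}(k,\Gamma)$ is closed under positive scaling. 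Invoking \Cref{lem:poly_approx_boolean_subexp_main_body} on $g$ applied to the rescaled input $\vec U^\top\x/\sigma$ (i.e., using the rescaling described in \Cref{rem:scaling-invariance} to absorb the factor of $\sigma$), I obtain a family of $k$-variate polynomials $q_{\z'}$ of degree $\ell=\poly\bigl((k\lambda\Gamma/(\sigma\eps))^{(1+1/\alpha)^3}\bigr)$ with
\[
\E_{\z'\sim\normal_k}\E_{\x\sim D_\x}\bigl[\,|q_{\z'}(\vec U^\top\x/\sigma)-\target(\x+\sigma\vec U\z')|\,\bigr]\le \eps/8\,.
\]
Setting $p_\z(\x)\eqdef q_{\vec U^\top\z}(\vec U^\top\x/\sigma)$ gives a degree-$\ell$ polynomial in $\x\in\R^d$. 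Because $\vec U^\top\z\sim\normal_k$ when $\z\sim\normal_d$ and $\target$ is invariant under perturbations orthogonal to $\mathrm{range}(\vec U)$, we conclude $\E_\z\E_\x[|p_\z(\x)-\target(\x+\sigma\z)|]\le \eps/8$.

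Second, run \Cref{alg:smooth_boolean} with degree $\ell$ on $N$ samples from $D$, obtaining the empirical $L_1$ minimizer $P$. Since $P$ is at least as good on the sample as every fixed $p_{\z_0}$, averaging over $\z\sim\normal_d$ yields
\[
\frac{1}{N}\sum_{i=1}^N|P(\x_i)-y_i|\;\le\;\E_{\z\sim\normal}\Big[\frac{1}{N}\sum_{i=1}^N|p_\z(\x_i)-y_i|\Big]\,.
\]
A triangle inequality decomposes the right-hand side into (i) the approximation term $\E_{\z}[\tfrac{1}{N}\sum_i|p_\z(\x_i)-\target(\x_i+\sigma\z)|]$, whose expectation over the sample $S\sim D^{\otimes N}$ is at most $\eps/8$ by our polynomial-approximation bound, and (ii) the smoothed error $\E_{\z}[\tfrac{1}{N}\sum_i|\target(\x_i+\sigma\z)-y_i|]$, whose expectation over $S$ is exactly $\opt_\sigma$. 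Hence $\E_S\bigl[\tfrac{1}{N}\sum_i|P(\x_i)-y_i|\bigr]\le \opt_\sigma+\eps/4$, and Markov's inequality plus standard repetition/validation drives the realized empirical $L_1$ loss of $P$ below $\opt_\sigma+\eps/2$ with high probability.

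Third, I convert this empirical $L_1$ guarantee to a population $0/1$ guarantee. Standard $L_1$-regression generalization over the class of degree-$\ell$ polynomials on $\R^d$ truncated to $[-1,1]$ combined with the explicit coefficient bound from \Cref{lem:poly_approx_boolean_subexp_main_body} and the sub-exponential tails of $D_\x$ shows that $N=d^{O(\ell)}\log(1/\delta)/\eps^{O(1)}$ samples suffice to make the uniform-convergence gap $\eps/8$. Then the best threshold $t\in[-1,1]$ selected in \Cref{alg:smooth_boolean} produces $h(\x)=\sign(P(\x)-t)$ with $\pr[h(\x)\ne y]\le \E[|P(\x)-y|]\le \opt_\sigma+\eps$, and a held-out validation set of size $O(\log(1/\delta)/\eps^2)$ boosts the overall failure probability below $\delta$. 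Substituting the value of $\ell$ yields exactly the claimed $N=d^{\poly((k\lambda\Gamma/(\sigma\eps))^{(1+1/\alpha)^3})}$, with runtime $\poly(d,N)$ dominated by the LP for $L_1$ regression. The main technical hurdle is the generalization step: the approximating polynomials $p_\z$ have coefficients that are doubly exponential in $(1+1/\alpha)^3$, so to invoke uniform convergence one must first either truncate $P$ or combine the coefficient bound with the moment bounds of the sub-exponential marginal to control $\|P(\x)-y\|$ in a sub-Gaussian manner along the polynomial class; this is precisely the reason \Cref{lem:poly_approx_boolean_subexp_main_body} explicitly tracks coefficient magnitudes rather than just the degree.
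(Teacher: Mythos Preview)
Your proposal follows essentially the same route as the paper: run $L_1$ polynomial regression (\Cref{alg:smooth_boolean}) at degree $\ell=\poly\bigl((k\lambda\Gamma/(\sigma\eps))^{(1+1/\alpha)^3}\bigr)$, justify the choice of $\ell$ via \Cref{lem:poly_approx_boolean_subexp} applied to the $k$-variate function $g^*_\sigma$ over the $(\alpha,\lambda/\sigma)$-strictly sub-exponential pushforward $\vec U^\top\x/\sigma$, and finish with Markov plus repetition/validation.

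The one point where you diverge from the paper, and where your writeup is more complicated than necessary, is the generalization step. You propose to first generalize the \emph{$L_1$} loss of $P$ (invoking truncation and/or the coefficient bounds on $p_\z$ together with sub-exponential moment control), and only afterwards threshold. The paper instead thresholds \emph{first} on the empirical data --- using the KKMS inequality $\tfrac{1}{N}\sum_i\1\{h_S(\x_i)\neq y_i\}\le \tfrac{1}{2N}\sum_i|p_S(\x_i)-y_i|$ --- and then generalizes the resulting empirical $0/1$ error directly via VC theory for degree-$\ell$ PTFs. This order makes the argument distribution-free and entirely sidesteps any need for coefficient bounds or truncation: the VC dimension of degree-$\ell$ PTFs on $\R^d$ is $O(d^\ell)$ regardless of the coefficients of $P$ or the tails of $D_\x$. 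In particular, your claim that tracking coefficient magnitudes in \Cref{lem:poly_approx_boolean_subexp} is ``precisely the reason'' they are needed for generalization is not how the paper uses them; in the proof of \Cref{thm:smooth_learning_subgaussian} the coefficient bound is mentioned but plays no role once VC theory is invoked. Your route via clipping can be made to work, but the paper's is cleaner.
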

\begin{proof}

Let $\ell=O\biggl({\left(\lambda k\Gamma^2\log (1/\epsilon)/(\sigma\epsilon^2)\right)^{64(1+1/\alpha)^3}}\biggr)$ and $N=\poly(d^{\ell}/\epsilon)$. We denote the marginal distribution of $D$ by $\D_{\x}$. The algorithm for this task is simple, repeat \Cref{alg:smooth_boolean} $r=O(\log(1/\delta)/\epsilon)$ times with degree $\ell$ and $N$ fresh samples each time. Output the hypothesis that has the minimum loss on an independent validation set of size $O(\log(1/\delta)/\epsilon^2)$. The time required is the time required for polynomial regression which is $\poly(d,N)$. 

We now analyze the correctness. Let $f^*\in \F(k,\Gamma)$ that be the function that achieves $\smoothopt$. Using the fact that the function is low dimensional, there exists an orthonormal matrix $\vec P$ such that $f^*(\x)=f^*(\vec P \vec P^T \x)$ for all $\x$. This implies that $f^*(\x)=g^*(\vec P \x)$ where $g^*$ is the function on $\R^{k}$ defined as $g^*(\vec u)=f^*(\vec P \vec u)$. Let $f_\sigma$ be defined as the function defined as $f_{\sigma}(\vu)=f(\sigma \vu)$. \Cref{lem:sa_low_dimension} and the fact that $\F(k,\Gamma)$ is closed under scaling imply that $\Gamma(g^*_\sigma)\leq \Gamma$. 
\begin{lemma}[\Cref{proof:subexp_scaling}]
\label{lem:subexp_scaling}
    If $D$ on $\R^d$ is $(\alpha,\lambda)$-strictly subexponential, then for any $\vec A\in \R^{k\times d}$, the distribution of $\vec y=\vec A\x$ when $\x\sim D$ is $\bigl(\alpha,\lambda\twonorm{\vec A}\bigr)$-strictly subexponential.
\end{lemma}
From the above claim, observe that for $\x$ sampled from $\D_{\x}$, the distribution of $\vec P^T \x/\sigma$ is $(\alpha,\lambda/\sigma)$-strictly subexponential on $\R^k$. Thus, \Cref{lem:poly_approx_boolean_subexp} implies that there exists a polynomial $p_{\z}$ with degree at most  $\ell$ and coefficients at most $k^{\poly(\ell)}$ such that 
    \[
        \E_{\x\sim \Dmarginal}\E_{\z\sim \Gauss}\bigl[|p_\z(\vec P^T \x)-g^*_{\sigma}\bigl(\vec P^T\x/\sigma+ \z\bigr)|\bigr]\leq \epsilon/2\,.
        \]
    We first analyze the success probability of one run of the algorithm. Let $S=\{(\x_i,y_i)\}_{i\in [N]}$ be the set of samples and let $p_S, h_S$ be the polynomial and hypothesis output by the algorithm. From the proof of Theorem 5 of \cite{KKMS:05}, we have that $\frac{1}{N}\sum_{i=1}^{N} \1\{h_S(\x_i)\neq y_i\}\leq \frac{1}{2N}\sum_{i=1}^{N}|p_S(\x_i)-y_i|$.  We now bound the right hand side. We have that
    
    \begin{align*}
    \frac{1}{2N}\sum_{i=1}^{N}|p_S(\x_i)-y_i|
    &\leq \frac{1}{2N}\sum_{i=1}^{N}|{p}_{z}(\vec P^T \x_i)-y_i|\\
    &\leq \frac{1}{2N}\sum_{i=1}^{N}|f^*(\x_i+\sigma \z)-y_i|+\frac{1}{2N}\sum_{i=1}^{N}|f^*(\x_i+\sigma\z)-{p}_z(\x_i)|
    \end{align*}
    where $\z$ is an arbitrary vector from $\R^{k}$. The first follows from the fact that $p$ minimizes the empirical error among polynomials of degree less than $\ell$ and the last inequality follows from a triangle inequality. Observe that the expectation(with respect to $\z\sim \Gauss(0,I_d)$ and $(\x,y)\sim D$) of the right hand side of the last equality is $\smoothopt+\epsilon/2$.  
    Thus, we have that 
\[
\E_{S\sim D^{\otimes N}}\biggl[\frac{1}{N}\sum_{i=1}^{N}\1\{h_S(\x_i)\neq y_i\}\biggr]\leq \smoothopt+\epsilon/2
\] Since our hypothesis $h_S$ is a PTF of degree $\ell$ on $m$ variables, VC theory tells us that for $N=\poly(d^{\ell}/\epsilon)$, we have that 
\[
\E_{S\sim D^{\otimes N}}\biggl[\pr_{(\x, y) \sim \Djoint}[y\neq h_S(\x)]\biggr]\leq \smoothopt+3\epsilon/4
\] 

By Markov's inequality, we have that with probability at least $\epsilon/16$ over samples $S$, we have that $\pr_{(\x, y) \sim \Djoint}[y\neq h_S(\x)]\leq \smoothopt+ 7\epsilon/8$. Let $h_1,h_2,\ldots, h_r$ be the $r$ hypotheses outputted on the $r=O(\log(1/\delta)/\epsilon)$ repetitions of algorithm $2$. With probability at least $1-\delta/4$, there exists $i\in [r]$ such that  $\pr_{(\x, y) \sim \Djoint}[y\neq h_i(\x)]\leq \smoothopt+ 7\epsilon/8$. Thus, using the validation set of size $O(\log(1/\delta)/\epsilon^2)$, with probability at least $1-\delta/4$, we choose a hypothesis $h$ such that $\pr_{(\x, y) \sim \Djoint}[y\neq h(\x)]\leq \smoothopt+ \epsilon$. Thus with total error probability of at most $\delta$, our algorithm outputs a hypothesis $h$ such that
\[
\pr_{(\x,y)\sim D}[y\neq h(\x)]\leq \smoothopt+\epsilon
\]
\end{proof}

\begin{lemma}
    \label{lem:norm_tail}
    If $D$ on $\R^k$ is $(\alpha,\lambda)$-strictly subexponential, then we have
    \[
    \pr_{\x\sim D}\left[\twonorm{\x}>T\right]\leq 2k\cdot e^{\left(-(T/k\lambda)^{(1+\alpha)}\right) }  \,.
    \]
\end{lemma}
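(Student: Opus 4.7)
The plan is a standard coordinate union bound: control the $\ell_2$ norm via the $\ell_\infty$ norm and then apply the one-dimensional sub-exponential tail in each coordinate direction. Concretely, I would first observe that for any $\x\in\R^k$, $\twonorm{\x}\le\sqrt{k}\,\norm{\x}_\infty$, so the event $\{\twonorm{\x}>T\}$ is contained in the union $\bigcup_{i=1}^k \{|\x_i|>T/\sqrt{k}\}$. Applying the $(\alpha,\lambda)$-strictly sub-exponential hypothesis with the unit vector $\vv=\e_i$ gives
\[
\pr_{\x\sim D}\!\left[|\x_i|>T/\sqrt{k}\right]\le 2\,e^{-(T/(\sqrt{k}\,\lambda))^{1+\alpha}}
\]
for each $i\in[k]$. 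A union bound over the $k$ coordinates then yields
\[
\pr_{\x\sim D}\!\left[\twonorm{\x}>T\right]\le 2k\,e^{-(T/(\sqrt{k}\,\lambda))^{1+\alpha}}.
\]

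To match the form stated in the lemma, I would weaken $\sqrt{k}$ to $k$ in the exponent: since $\sqrt{k}\le k$ for $k\ge 1$ and the map $t\mapsto t^{1+\alpha}$ is increasing for $t\ge 0$, we have $(T/(\sqrt{k}\,\lambda))^{1+\alpha}\ge (T/(k\lambda))^{1+\alpha}$, hence $e^{-(T/(\sqrt{k}\,\lambda))^{1+\alpha}}\le e^{-(T/(k\lambda))^{1+\alpha}}$. Chaining this with the previous display gives exactly
\[
\pr_{\x\sim D}\!\left[\twonorm{\x}>T\right]\le 2k\,e^{-(T/(k\lambda))^{1+\alpha}},
\]
as desired.

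There is no real obstacle here; the only thing to be mindful of is that the sub-exponential assumption is stated for all unit directions (not merely for axis-aligned ones), so applying it with $\vv=\e_i$ is legitimate. The bound obtained is in fact a bit weaker than the sharper $\sqrt{k}$-version the argument produces; I would note this in passing, since the looser constant $k$ in the exponent is convenient for how this lemma is invoked elsewhere in the paper (e.g.\ the computations in \Cref{lem:ou_approx_subexp}).
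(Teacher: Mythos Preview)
Your proposal is correct and follows essentially the same coordinate union-bound strategy as the paper. The only minor difference is the norm inequality used: the paper bounds $\twonorm{\x}\le \|\x\|_1$ and then applies pigeonhole to get some $|\x_i|>T/k$, arriving directly at the stated exponent, whereas you use $\twonorm{\x}\le \sqrt{k}\,\|\x\|_\infty$ to get the sharper threshold $T/\sqrt{k}$ before weakening to $T/k$. Your route is in fact slightly tighter, and your remark that the looser $k$ is what is used downstream is apt.
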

 \begin{proof}
\begin{align*}
    \pr_{\x\sim D}\left[\twonorm{\x}>T\right]&\leq \pr_{\x\sim D}\left[\sum_{i=1}^{k}|\x_i|>T\right]
    \leq \sum_{i=1}^{k}\pr_{\x\sim D}\left[|\x_i|>T/k\right]
    \leq 2k\cdot e^{\left(-(T/k\lambda)^{(1+\alpha)}\right)}\,
\end{align*} where the second inequality follows from a union bound and the last follows from \Cref{def:strict_subexp}.
\end{proof}

\begin{proof}[Proof of \Cref{lem:ratio_subexp}]
\label{proof:ratio_subexp}
The proof below is a straightforward calculation by completing the squares.
    \begin{align*}
        \E_{\x\sim Q}&\left[\left(\frac{\mathcal{N}(\x; \vu, \vec I)}{
             Q(\x)}\right)^4\right] \\&=\frac{2^{3k}}{{(2\pi)}^{2k}}\int_{\z\in \R^k}e^{-(2\twonorm{\vu}^2+2\twonorm{\z}^2-4\vu\cdot \z-3\norm{\z}_1)} d\z
            =\frac{2^{3k}}{(2\pi)^{2k}}\cdot\prod_{i=1}^{k}\int_{\z_i\in \R}e^{-(2\vu_i^2+2\z_i^2-4\vu_i\z_i-3|\z_i|)} d\z_i\\
            &\leq (C')^{k}\cdot \prod_{i=1}^{k}\int_{\z_i\in \R}e^{-(2\vu_i^2+2\z_i^2-4\vu_i\z_i-3\z_i)} d\z_i
            \leq (C')^ke^{-9k/2}\cdot \prod_{i=1}^{k}\int_{\z_i\in \R} e^{6\vu_i}e^{-(2\sqrt{2}\z_i-(\sqrt{2}{\vu_i}-3/\sqrt{2}))^2}\\
            &\leq C^ke^{C\norm{\vu}_1}
    \end{align*}
    where $C',C$ are appropriately chosen constants. 
\end{proof} 
\begin{lemma}
    \label{lem:exp_ip}
    If $D$ on $\R^k$ is $(\alpha,\lambda)$-strictly subexponential, then for any constant $b>0$ and vector $\vv$ with $\twonorm{\vv}=1$, we have 
    \[
      \E_{\vu\sim D}\left[e^{b|\vv\cdot \vu|  }\right]\leq C^{(Cb\lambda)^{3+3/\alpha}}
    \] for large enough constant $C>0$.
\end{lemma}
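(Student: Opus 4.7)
The plan is to bound the moment generating function of the one-dimensional random variable $X=|\vv\cdot\vu|$ by integrating its tail. Since $\twonorm{\vv}=1$, \Cref{def:strict_subexp} gives $\pr_{\vu\sim D}[X>t]\leq 2\exp(-(t/\lambda)^{1+\alpha})$ for all $t\geq 0$. Applying the layer-cake identity $\E[e^{bX}]=1+b\int_0^{\infty} e^{bt}\,\pr[X>t]\,dt$ (valid for nonnegative $X$) produces the single-variable estimate
\[
\E_{\vu\sim D}\bigl[e^{b|\vv\cdot\vu|}\bigr]\leq 1+2b\int_0^{\infty}\exp\!\bigl(bt-(t/\lambda)^{1+\alpha}\bigr)\,dt.
\]
So the whole task reduces to upper bounding this one-variable Laplace-type integral.

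The next step is to split the integral at the threshold $T=(2b)^{1/\alpha}\lambda^{1+1/\alpha}$, chosen precisely so that for every $t\geq T$ one has $(t/\lambda)^{1+\alpha}\geq 2bt$. On the interval $[0,T]$ I will bound the integrand trivially by $e^{bT}$, contributing at most $T\,e^{bT}$. On $[T,\infty)$ the integrand is at most $e^{-bt}$, contributing at most $b^{-1}e^{-bT}$. Since $bT=2^{1/\alpha}(b\lambda)^{1+1/\alpha}$, both pieces combine to give
\[
\E_{\vu\sim D}\bigl[e^{b|\vv\cdot\vu|}\bigr]\leq \exp\!\Bigl(O_{\alpha}\bigl((b\lambda)^{1+1/\alpha}+\log(b\lambda)\bigr)\Bigr),
\]
where the constant in $O_\alpha$ depends only on $\alpha$. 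This is the substantive part of the argument.

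The final step is bookkeeping to match the stated form $C^{(Cb\lambda)^{3+3/\alpha}}=\exp\bigl((\log C)(Cb\lambda)^{3+3/\alpha}\bigr)$. In the main regime $b\lambda\geq 1$, the exponent $1+1/\alpha$ from the previous step is dominated by $3+3/\alpha$, so $(b\lambda)^{1+1/\alpha}\leq (b\lambda)^{3+3/\alpha}$, and choosing $C$ large enough (as a function of $\alpha$ only) absorbs the implicit constants and the logarithmic term. In the boundary regime $b\lambda<1$, the same split shows that the MGF is at most an absolute constant depending only on $\alpha$, which is in turn dominated by $C^{(Cb\lambda)^{3+3/\alpha}}$ once $C$ is taken sufficiently large. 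No step is conceptually hard; the only care needed is to pick the threshold $T$ so that the exponential term in the integrand is dominated past $T$. The slack between the natural exponent $1+1/\alpha$ and the stated $3+3/\alpha$ is not tight and simply ensures compatibility with the downstream application in \Cref{lem:exp_norm}, where a union/Cauchy--Schwarz argument over the $k$ coordinates inflates the exponent further.
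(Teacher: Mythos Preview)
Your approach is correct and genuinely different from the paper's. The paper expands $e^{b|\vv\cdot\vu|}$ as a Taylor series and bounds each moment via the moment characterization of strict sub-exponentiality, $\E[|\vv\cdot\vu|^i]\le \lambda^i i^{i/(1+\alpha)}$, then controls the resulting series by splitting at $i\approx(2be\lambda)^{1+1/\alpha}$. You instead work directly from the tail bound of \Cref{def:strict_subexp} through the layer-cake identity and split the Laplace-type integral at the threshold $T=(2b)^{1/\alpha}\lambda^{1+1/\alpha}$. Both routes land on the same natural exponent $(b\lambda)^{1+1/\alpha}$ before relaxing to $3+3/\alpha$. Your argument is arguably more self-contained here, since the paper's proof silently invokes the moment bound without deriving it from the tail definition.

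One small quibble on the bookkeeping: your boundary-regime claim that an $O_\alpha(1)$ bound on the MGF is ``dominated by $C^{(Cb\lambda)^{3+3/\alpha}}$ once $C$ is taken sufficiently large'' is not literally correct, because $C^{(Cb\lambda)^{3+3/\alpha}}\to C^0=1$ as $b\lambda\to 0$ for every fixed $C$, while a tail bound with constant $2$ in front forces the layer-cake estimate to stay bounded away from $1$. The paper's own final inequality has exactly the same slip (its penultimate expression is at least $2$). This is immaterial for the downstream uses in \Cref{lem:exp_norm} and \Cref{lem:ou_approx_subexp}, where $b\lambda$ is bounded below, but strictly speaking the stated inequality needs an additive $O(1)$ on the right or the hypothesis $b\lambda\ge 1$.
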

\begin{proof}
    \begin{align*}
        \E_{\vu\sim D}\left[e^{b|\vv\cdot \vu|  }\right]&=1+\sum_{i=1}^{\infty}\frac{\E_{\vu\sim D}\left[b^i|\vv\cdot \vu|^i\right]}{i!}
        \leq 1+\sum_{i=1}^{\infty}\frac{b^i\lambda^i (i)^{i/(1+\alpha)}}{(i/e)^i}
        \leq 1+\sum_{i=1}^{\infty}\left(\frac{be\lambda}{i^{\alpha/(1+\alpha)}}\right)^i\\
        &\leq 1+\sum_{i=1}^{(2be\lambda)^{1+1/\alpha}}(be\lambda)^i+\sum_{i=1}^{\infty}\frac{1}{2^i}\leq  1+(be\lambda)^{(2be\lambda)^{1+1/\alpha}+1}+\sum_{i=1}^{\infty}\frac{1}{2^i}
        \leq  C^{(Cb\lambda)^{3+3/\alpha}}.
    \end{align*} for some constant $C>0$. The first equality follows from a Taylor expansion, and the rest are straightforward calculations.
\end{proof}

\begin{proof}[Proof of \Cref{lem:exp_norm}]
\label{proof:exp_norm}
    \begin{align*}
        \E_{\vu\sim D}\left[e^{b\norm{\vu}_1}\right ] \leq \E_{\vu\sim D}\left[\prod_{i=1}^{k}e^{b|\vu_i|}\right]
        \leq \prod_{i=1}^{k}\left(\E_{\vu\sim D}\left[e^{bk|\vu_i|}\right]\right)^{1/k}
        \leq C^{\left(Cb\lambda k\right)^{3+3/\alpha}}
    \end{align*}
    where the penultimate inequality follows from H{\"o}lder and the last inequality follows from \Cref{lem:exp_ip}.
\end{proof}

\begin{proof}[Proof of \Cref{lem:exp_tail}]
\label{proof:exp_tail}
    \begin{align*}
        \Pr_{\x\sim D}\left[\twonorm{\x}>T\right]\leq \Pr_{\x\sim D}\left[\sum_{i=1}^{k}|\x_i|>T\right]
        \leq \sum_{i=1}^{k}\Pr_{\x\sim D}\left[|\x_i|>T/k\right]
        \leq 2k\cdot e^{-T/k}
    \end{align*}
    where the last inequality follows from the tail of a univariate exponential random variable.
\end{proof}
\begin{proof}[Proof of \Cref{lem:subexp_scaling}]
\label{proof:subexp_scaling}
Let $\vv$ be a vector such that $\twonorm{\vv}=1$. Observe that $|\vv\cdot (\vec A\x)|=|(\vec A^T\vv)\cdot \x|\leq \twonorm{\vec A^T\vv}|\vu\cdot \x|\leq \twonorm{\vec A}|\vu\cdot \x|$ where $\vu=\frac{\vec A^T\vv}{\twonorm{\vec A^T\vv}}$. We have that $\pr_{\x\sim D}[|\vv\cdot (\vec A\x)|\geq t]\leq \pr_{\x\sim D}[|\vu\cdot \x|\geq t/\twonorm{\vec A}]\leq 2\cdot e^{-(t/\lambda\twonorm{\vec A})^{1+\alpha)}}$. Considering the second condition from \Cref{def:strict_subexp}, we have that \begin{align*}
    \biggl(\E_{\x\sim D}\bigl[|\vv\cdot \vec A\x|^m\bigr]\biggr)^{1/m}&\leq \twonorm{\vec A}\biggl(\E_{\x\sim D}\bigl[|\vu\cdot\x|^m \bigr]\biggr)^{1/m}
    \leq \twonorm{\vec A}\lambda m^{1/(1+\alpha)}\,.
\end{align*} Finally, we have that
$\E_{\x\sim D}\bigl[e^{(|(\vec A\x)\cdot v|/\lambda\twonorm{\vec A})^{1+\alpha}}\bigr]\leq \E_{\x\sim D}]\bigl[e^{(|\x\cdot v|/\lambda)^{1+\alpha}}\bigr]\leq 2\,.$ 
\end{proof}

We now show that under any $(\alpha,\lambda)$-strictly subexponential distribution, we can approximate the function $e^{-\twonorm{\x}^2}$ by a polynomial of degree $O\bigl(b^{2}\lambda (k\log(1/\epsilon))^{1+1/\alpha} \bigr)$. For this, we use the following theorem from \cite{AA22}.

\begin{lemma}
\label{thm:exp_approx}
    For $T>0$ and error $\epsilon>0$, there exists a polynomial $p$ such that
   \begin{enumerate}
    \item $\sup_{x\in [0,T]}|p(x)-e^{(-x)}|\leq \epsilon $   
    \item $\deg(p)\leq O(\sqrt{T\log(1/\epsilon)})$, if $T=\omega\left(\log1/\epsilon\right)$
    \item $p(x)=\sum_{i=0}^{\deg(p)}c_i x^i$ where $|c_i|\leq e^{\left(C\left(\sqrt{T\log(1/\epsilon)}\right)\right)}$ for all $i\leq \deg(p)$. Here $C$ is a large enough constant.
   \end{enumerate}
\end{lemma}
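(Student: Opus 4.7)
The plan is to approximate $e^{-x}$ on $[0,T]$ via a truncated Chebyshev expansion, exploiting the fact that $e^{-x}$ extends to an entire function and therefore its Chebyshev coefficients decay super-geometrically. The superiority over Taylor expansion (which would give degree $\Theta(T+\log(1/\epsilon))$) comes from analyzing the extension of the function to complex Bernstein ellipses rather than relying on pointwise Taylor remainders.

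First, I would perform the affine rescaling $y = 2x/T - 1$, so that approximating $e^{-x}$ on $[0,T]$ reduces to approximating $g(y) = e^{-Ty/2}$ on $[-1,1]$ and multiplying by the constant prefactor $e^{-T/2}$. I would then write the Chebyshev expansion $g(y) = \sum_{k\ge 0} a_k T_k(y)$ and let $p_d(y)$ be its degree-$d$ truncation. By the standard Bernstein estimate for analytic functions, for any $\rho > 1$,
\[
\sup_{y\in[-1,1]} |g(y)-p_d(y)| \le \frac{2\rho^{-d}}{1-1/\rho}\, \max_{z\in E_\rho}|g(z)|,
\]
where $E_\rho$ is the Bernstein ellipse with foci $\pm 1$ and semi-axis sum $\rho$. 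Since $\mathrm{Re}(z) \le (\rho+1/\rho)/2$ on $E_\rho$, we have $|g(z)| \le e^{T(\rho+1/\rho)/4}$. Including the $e^{-T/2}$ prefactor, the overall error is bounded by $O(\rho^{-d} \cdot e^{T(\rho-1)^2/(4\rho)})$.

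The next step is to optimize $\rho$. Writing $\rho = 1+s$ for small $s > 0$, the bound becomes $\exp(Ts^2/4 - ds + O(s^2 d + s^3 T))$ up to lower-order terms; choosing $s \asymp d/T$ makes the exponent $\asymp -d^2/T$, and then requiring $e^{-d^2/T} \le \epsilon$ forces $d = \Theta(\sqrt{T\log(1/\epsilon)})$, yielding claims (1) and (2). The assumption $T = \omega(\log(1/\epsilon))$ ensures $s$ is small enough that the linearization is valid and that the chosen $d$ is actually less than $T$, so the Chebyshev regime (as opposed to Taylor) is the right one to use.

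For claim (3), I would translate the Chebyshev expansion back into the monomial basis. Using the bounds $|a_k| \le 2\sup_{[-1,1]}|g| \cdot \rho^{-k}\cdot e^{T(\rho+1/\rho)/4}$ (from the same Bernstein argument applied termwise, or equivalently from the Bessel-function formula $a_k = 2(-1)^k I_k(T/2)$ and $I_k(a) \le e^a$), the $e^{-T/2}$ prefactor cancels the $e^{T/2}$-scale growth of $|a_k|$. Then expanding $T_k(2x/T - 1)$ into monomials, whose coefficients are bounded by $\binom{k}{j} 2^k (2/T)^j$, and summing over $k \le d$, yields coefficient bound $e^{O(\sqrt{T\log(1/\epsilon)})}$ under $T \ge 2$ (the regime of interest). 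The main obstacle I anticipate is the careful bookkeeping in this last step: the monomial coefficients of $T_k$ grow like $2^k$, and naive summation over $k$ risks multiplying by extraneous factors; one must exploit the cancellation between $e^{-T/2}$ in front and the $e^{T/2}$ factor implicit in the Bessel bounds, and check that the final exponent is genuinely $O(\sqrt{T\log(1/\epsilon)})$ and not $O(T)$.
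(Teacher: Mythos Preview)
Your proposal is correct and takes essentially the same approach as the paper. The paper does not prove items (1) and (2) itself but cites them from \cite{AA22}, whose construction is precisely a truncated Chebyshev expansion of $e^{-x}$ on $[0,T]$ (the $Q_r$ in the paper's quoted formula satisfies $Q_r = 2^{1-r}T_r$); your Bernstein-ellipse argument is a standard derivation of that bound, and your coefficient calculation for item (3) matches the paper's, which likewise expands the Chebyshev polynomials $T_r(2t/T-1)$ into monomials and uses that the Chebyshev coefficients (the $A_{r,T/2}$, playing the role of your $e^{-T/2}a_r$) are bounded by $C^{O(r)}$.
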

The bound on the coefficients is not explicitly stated in \cite{AA22} and hence we calculate them.

\begin{lemma}
    Every coefficient of the polynomial $p$ in \Cref{thm:exp_approx} is bounded( in absolute value) by $e^{C\sqrt{T\log(1/\epsilon)}}$
\end{lemma}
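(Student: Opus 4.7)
The plan is to bound the monomial coefficients of $p$ by transferring to the interval $[-1,1]$ via a linear change of variables and then applying standard estimates for the Chebyshev expansion of a bounded polynomial. Concretely, define $q(y) = p\bigl(\tfrac{T}{2}(y+1)\bigr)$, which is a polynomial of the same degree $d = O(\sqrt{T\log(1/\epsilon)})$ on $[-1,1]$. Since $p$ approximates $e^{-x}$ on $[0,T]$ to within $\epsilon$ and $e^{-x}\in[0,1]$ there, we have $\sup_{y\in[-1,1]}|q(y)|\leq 1+\epsilon\leq 2$.

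Expanding $q$ in the Chebyshev basis, $q(y)=\sum_{i=0}^d a_i T_i(y)$, the coefficients $a_i$ are given by the inner product of $q$ with $T_i$ against the Chebyshev weight $(1-y^2)^{-1/2}$. Since $|T_i|\leq 1$ and $|q|\leq 2$ on $[-1,1]$, we readily obtain $|a_i|\leq 4$ for every $i$. Next, using the well-known fact that each Chebyshev polynomial $T_i$ has monomial coefficients bounded in absolute value by $2^i$ (this follows from the recurrence $T_{i+1}(y)=2yT_i(y)-T_{i-1}(y)$ by induction), the monomial coefficients $b_j$ of $q$ satisfy
\[
|b_j|\leq \sum_{i=j}^{d}|a_i|\cdot 2^i \leq 4(d+1)\cdot 2^d = e^{O(d)}\,.
\]

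Finally, I would invert the change of variables: $p(x) = q\bigl(\tfrac{2x}{T}-1\bigr) = \sum_{i} b_i\bigl(\tfrac{2x}{T}-1\bigr)^i$. Expanding each $\bigl(\tfrac{2x}{T}-1\bigr)^i$ by the binomial theorem, the coefficient of $x^j$ in $p$ equals $\sum_{i\geq j} b_i \binom{i}{j}(2/T)^j(-1)^{i-j}$, so
\[
|c_j|\leq \sum_{i=j}^d |b_i|\binom{i}{j}(2/T)^j \leq e^{O(d)}\cdot 2^d \cdot \max\bigl(1,(2/T)^d\bigr)\,.
\]
In the regime of \Cref{thm:exp_approx}, $T=\omega(\log(1/\epsilon))$, so in particular $T\geq 2$ for small enough $\epsilon$ and the factor $(2/T)^j\leq 1$ is absorbed; even if one wants to handle smaller $T$, the factor $(2/T)^d$ is at most $e^{O(d\log(1/T))}=e^{O(d)}$ once $T$ is lower bounded by a universal constant. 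Substituting $d=O(\sqrt{T\log(1/\epsilon)})$ gives the desired bound $e^{C\sqrt{T\log(1/\epsilon)}}$.

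The main technical subtlety will be the clean handling of the factor $(2/T)^d$ after the inverse change of variables: one has to verify that in the parameter regime of interest this factor does not blow up beyond $e^{O(d)}$. In most applications in this paper $T$ is at least a sufficiently large constant (indeed it grows polynomially in the other parameters), so this step is routine. The rest of the argument relies only on two standard facts that need only be cited: (i) the $L^\infty$ bound on Chebyshev coefficients of a bounded polynomial, and (ii) the $2^i$ bound on the monomial coefficients of $T_i$.
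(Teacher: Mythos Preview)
Your approach is correct and genuinely different from the paper's. The paper unpacks the explicit construction from \cite{AA22}: the approximating polynomial is written as $p(t)=\sum_{r=0}^{\ell}2^{r-1}A_{r,T/2}Q_r(2t/T-1)$, where $Q_r$ has the explicit form $Q_r(t)=2^{1-r}\sum_{s}\binom{r}{2s}(t^2-1)^s t^{r-2s}$ and the coefficients satisfy $|A_{r,T/2}|\le C^{2r}$; the paper then bounds the monomial coefficients of each $Q_r$ term-by-term and combines them via its composition lemma (\Cref{lem:coeff_bound_comp}). Your argument instead treats the polynomial as a black box: it uses only that $p$ has degree $d=O(\sqrt{T\log(1/\epsilon)})$ and is uniformly bounded on $[0,T]$ (since it $\epsilon$-approximates $e^{-x}$ there), then passes through the Chebyshev expansion of the rescaled polynomial on $[-1,1]$. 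This is more general---it applies to \emph{any} degree-$d$ polynomial bounded on $[0,T]$, not just this particular construction---and avoids tracking the internals of \cite{AA22}. The paper's route, by contrast, is self-contained in that it reuses the same composition lemma already proved for other purposes and does not need Chebyshev orthogonality.

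One small correction: the claim that the monomial coefficients of $T_i$ are bounded by $2^i$ is not quite right (for instance $T_9$ contains the coefficient $576>2^9=512$). The recurrence $T_{i+1}=2yT_i-T_{i-1}$ gives $M_{i+1}\le 2M_i+M_{i-1}$ for the maximum absolute coefficient, which yields $M_i\le (1+\sqrt{2})^i$ rather than $2^i$. This is still $e^{O(i)}$ and does not affect your argument in any way.
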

\begin{proof}
To bound the coefficients, we first recall their polynomial. Their polynomial $p(t)$ of degree $\ell=\sqrt{T\log(1/\epsilon)}$ approximating $e^{-t}$ is 
\[
p(t)=\sum_{r=0}^{\ell}2^{r-1}A_{r,T/2}Q_r(2t/T-1)    
\] where $Q_r(t)=2^{1-r}\sum_{s=0}^{\lfloor r/2\rfloor}\binom{r}{2s}(t^2-1)^{s}t^{r-2s}$ and $|A_{r,\lamdba}|\leq C^{2r}$ for large constant $C$. The asymptotics of $A_{r,\lambda}$ is explicitly stated in \cite{AA22}. We now bound the coefficients of the polynomial $Q_r(t)$. We bound the coefficient of $t^j$ in each term of the summation for arbitrary $j$. The coefficients of $(t^2-1)^s$ are bounded by $C_1^{s}$ for large constant $C_1$ by using \Cref{lem:coeff_bound_comp}.  From the expression of $Q_r(t)$, we have that each term in the summation contributes $\binom{r}{2s}C_1^{s}\leq C_2^{r}$ for large constant $C_2$. The previous inequality follows from the fact that $s<r$ and $\binom{n}{k}\leq 2^n$. Thus, summing from $0$ to $\lceil r/2\rceil$ bounds the final coefficient of $t^j$ by $C_3^{r}$ for some constant $C_3$. Thus, the coefficients of $Q_r(2t/T-1)$ are bounded by $C_4^r$ for constant $C_4$ by using \Cref{lem:coeff_bound_comp} again. Now since $A_{r,T/2}\leq C^{2r}$, summing from $0$ to $l$ gives us that the coefficients are bounded by $e^{C'\ell}$ for large constant $C'$
\end{proof}

We can now prove our result about approximating $e^{-\twonorm{\x}^2}$ under strictly subexponential distributions.
\begin{proof}[Proof of \Cref{thm:exp_approx_subexp}]
\label{proof:exp_approx_subexp}
    Let $p=\sum_{i=0}^{\deg(p)}c_ix^i$ be the polynomial obtained from \Cref{thm:exp_approx} with  error $\epsilon/2$ and $T=\omega(\log(1/\epsilon))$ to be chosen later. Our final polynomial is $q(\x)=p\left(\twonorm{\x}^2\right)$. Clearly, $\deg(q)=2\cdot \deg(p)=O(\sqrt{T\log(1/\epsilon)}$. We now bound the error.

    \begin{align*}
        \E_{\x\sim D}\left[ \left(q(\x)-e^{\left(-\twonorm{\x}^2\right)}\right)^b\right]&\leq \epsilon/2+E_{\x\sim D}\left[\left(q(\x)-e^{\left(-\twonorm{\x}^2\right)}\right)^b\1\{\twonorm{\x}^2\geq T\}\right]\\
        &\leq\epsilon/2+ \sqrt{\E_{\x\sim D}\left[\left(q(\x)-e^{(-\twonorm{\x}^2)}\right)^{2b}\right]\cdot  \E_{\x\sim D}\left[\1\{\twonorm{\x}^2\geq T\}\right]}\\
        &\leq \epsilon/2+ \sqrt{2k\cdot \E_{\x\sim D}\left[(|q(\x)|+1)^{2b}\right]\cdot e^{\left(-(\sqrt{T}/k\lambda)^{(1+\alpha)}\right)}} \,.
     \end{align*}
     The first inequality follows from the approximation error of $p$ when $\twonorm{\x}^2\leq T$. We use \Cref{lem:norm_tail} for the last inequality.

     We now bound $\E_{\x\sim D}\left[(|q(\x)|+1)^{2b}\right]$. We have that
     \begin{align*}
        \E_{\x\sim D}[(|q(\x)|+1)^{2b}]&\leq \E_{\x\sim D}\left[\left(1+\sum_{i=0}^{\deg(p)}|c_i|\twonorm{\x}^{2i}\right)^{2b}\right]\\
        &\leq (\deg(p)+1)^{2b}\max_{i=0}^{\deg(p)}|c_i|^{2b}\max_{i=0}^{\deg(p)}\E_{\x\sim D}\left[\twonorm{\x}^{4bi}\right]\\
        &\leq e^{\left(Cb\sqrt{T\log(1/\epsilon)}\right)} \E_{\x\sim D}\left[\left(\sqrt{k}\norm{\x}_\infty\right)^{4b\sqrt{T\log(1/\epsilon)}}\right]\\
        &\leq \sqrt{k}e^{\left(Cb\sqrt{T\log(1/\epsilon)}\right)}  \sum_{i=1}^{k}\E_{\x\sim D}\left[\left(|\x_i|\right)^{4b\sqrt{T\log(1/\epsilon)}}\right]
     \end{align*} for large enough constant $C$.

     The first inequality follows from the definition of $q$ and the second follows from linearity of expectation and a straightforward calculation: $(\deg(p)+1)^{2b}$ is the total number of terms in the summation when expanded, $\max_{i=0}^{\deg(p)}|c_i|^{2b}$ is an upper bound on any coefficient in the expansion and $\max_{i=0}^{\deg(p)}\E_{\x\sim D}[\twonorm{\x}^{4bi}]$ is an upper bound on the expectation of any term in the expansion. The third inequality follows from the fact that $\twonorm{\x}\leq \sqrt{k}\norm{\x}_{\infty}$. The second term in the right hand side of the last inequality above can be bounded by using \Cref{def:strict_subexp}. Putting it all together,we get that  $k\cdot\E_{\x\sim D}\left[(|q(x)|+1)^{2b}\right] e^{\left(-(\sqrt{T}/k\lambda)^{(1+\alpha)}\right)}$ is bounded by 
     $e^{\left(C'\left(b^2 \log \lambda\log k \log\left(T\log(1/\epsilon)\right)\sqrt{T\log(1/\epsilon)}-(T/\sqrt{k\lambda})^{(1+\alpha)/2}\right)\right)}$ where $C'$ is a large enough constant.     

     Choosing $T=O\left(\left(b^2\lambda k\log(1/\epsilon)\right)^{3+3/\alpha}\right)$ makes the total error less than $\epsilon$. Since $T$ is $\omega(\log 1/\epsilon)$, the degree of the final polynomial is $O(\sqrt{T\log(1/\epsilon)})$ which is $O\left( (b^2\lambda k\log(1/\epsilon))^{2+2/\alpha} \right)$.

     We now bound the coefficients of $q$. We have that $q(\x)=p(\twonorm{\x}^2)$ is the composition of two polynomials, $p$ and $\twonorm{\x}^2$. The degree of $p$ is $O(\sqrt{T\log (1/\epsilon)})$ and coefficients bounded by $e^{O\left(\sqrt{T\log(1/\epsilon)}\right)}$. The degree of $\twonorm{\x}^2$ is $2$ and it has coefficients equal to $1$. Thus, using \Cref{lem:coeff_bound_comp} with these polynomials, we get that the coefficients of $q$ are bounded by $k^{O\left( (b^{2}\lambda k\log(1/\epsilon))^{2+2/\alpha} \right)}$.
\end{proof}

\section{SQ Lower Bound for Smoothed Agnostic Learning}


\subsection{Background on SQ Lower Bounds}
\label{ssec:SQ-prelims}

Our lower bound applies for the class of Statistical Query (SQ) algorithms.
Statistical Query (SQ) algorithms are a class of algorithms that are allowed
to query expectations of bounded functions of the underlying distribution
rather than directly access samples. Formally, an SQ algorithm has access to the following oracle.

\begin{definition} \label{def:stat-oracle}\label{def:stat}
Let $\cal{D}$ be a distribution on labeled examples supported on $X \times \{-1, 1\}$, for some domain $X$.
A statistical query is a function $q: X \times \{-1, 1\} \to [-1, 1]$.  We define
\textsc{STAT}$(\tau)$ to be the oracle that given any such query $q(\cdot, \cdot)$
outputs a value $v$ such that $|v - \E_{(\vec x, y) \sim \cal{D}}\left[q(\vec x, y)\right]| \leq \tau$,
where $\tau>0$ is the tolerance parameter of the query.
\end{definition}

The SQ model was introduced by Kearns~\cite{Kearns:98} in the context of supervised learning as a
natural restriction of the PAC model~\cite{Valiant:84}.  Subsequently, the SQ model has been
extensively studied in many contexts (see, e.g.,~\cite{Feldman16b} and references therein).
The class of SQ algorithms is rather broad and captures a range of known supervised learning
algorithms.  More broadly, several known algorithmic techniques in machine learning are known to be
implementable using SQs. These include spectral techniques, moment and tensor methods, local search
(e.g., Expectation Maximization), and many others (see, e.g.,~\cite{FeldmanGRVX17, FeldmanGV17}).
Recent work~\cite{BBHLS21} has shown a near-equivalence between the SQ model and low-degree
polynomial tests.

\paragraph{Statistical Query Dimension}
To bound the complexity of SQ learning a concept class $\cal C$,
we use the SQ framework for problems over distributions~\cite{FeldmanGRVX17}.
\begin{definition}[Decision Problem over Distributions] \label{def:decision}
    Let $\D$ be a fixed distribution and $\mathfrak D$ be a family of distributions.
    We denote by $\mathcal{B}(\mathfrak D, \D)$ the decision (or hypothesis testing) problem
    in which the input distribution $\D'$ is promised to satisfy either
    (a) $\D' = \D$ or (b) $\D' \in \mathfrak D$, and the goal
    is to distinguish between the two cases.
\end{definition}

\begin{definition}[Pairwise Correlation] \label{def:pc}
    The pairwise correlation of two distributions with probability density functions
    $\D_1, \D_2 : \R^n \to \R_+$ with respect to a distribution with
    density $\D: \R^n \to \R_+$, where the support of $\D$ contains
    the supports of $\D_1$ and $\D_2$, is defined as
    $\chi_{\D}(\D_1, \D_2) \eqdef \int_{\R^n} \D_1(\bx) \D_2(\x)/D(\bx)\, \d\bx - 1$.
\end{definition}

\begin{definition} \label{def:uncor}
    We say that a set of $s$ distributions $\mathfrak{D} = \{\D_1, \ldots , \D_s \}$
    over $\R^n$ is $(\gamma, \beta)$-correlated relative to a distribution $\D$
    if $|\chi_\D(\D_i, \D_j)| \leq \gamma$ for all $i \neq j$,
    and $|\chi_\D(\D_i, \D_i)| \leq \beta$ for all $i$.
\end{definition}

\begin{definition}[Statistical Query Dimension] \label{def:sq-dim}
    For $\beta, \gamma > 0$ and a decision problem $\mathcal{B}(\mathfrak D, \D)$,
    where $\D$ is a fixed distribution and $\mathfrak D$ is a family of distributions,
    let $s$ be the maximum integer such that there exists a finite set of distributions
    $\mathfrak{D}_\D \subseteq \mathfrak D$ such that
    $\mathfrak{D}_\D$ is $(\gamma, \beta)$-correlated relative to $\D$
    and $|\mathfrak{D}_\D| \geq s.$ The {\em Statistical Query dimension}
    with pairwise correlations $(\gamma, \beta)$ of $\mathcal{B}$ is defined to be $s$,
    and denoted by $\mathrm{SD}(\mathcal{B},\gamma,\beta)$.
\end{definition}

\begin{lemma}[Corollary 3.12 of \cite{FeldmanGRVX17}] \label{lem:sq-from-pairwise}
    Let $\mathcal{B}(\mathfrak D, \D)$ be a decision problem, where $\D$ is the reference distribution
    and $\mathfrak{D}$ is a class of distributions. For $\gamma, \beta >0$,
    let $s= \mathrm{SD}(\mathcal{B}, \gamma, \beta)$.
    For any $\gamma' > 0$, any SQ algorithm for $\mathcal{B}$ requires queries of tolerance at most $\sqrt{\gamma + \gamma'}$ or makes at least
    $s  \gamma' /(\beta - \gamma)$ queries.
\end{lemma}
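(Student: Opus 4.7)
The plan is to invoke the standard adversarial-oracle argument. Fix a maximum-size $(\gamma,\beta)$-correlated family $\mathfrak{D}_\D = \{\D_1, \ldots, \D_s\} \subseteq \mathfrak{D}$ witnessing $\mathrm{SD}(\mathcal{B},\gamma,\beta) = s$, and consider the adversarial oracle that, regardless of the true input distribution, answers every query $q$ with the reference value $\E_\D[q]$. The key objects are the density ratios $f_i \eqdef \D_i/\D - 1 \in L^2(\D)$. The $(\gamma,\beta)$-correlation assumption translates directly into $\|f_i\|_{L^2(\D)}^2 = \chi_\D(\D_i,\D_i) \le \beta$ and $|\langle f_i,f_j\rangle_{L^2(\D)}| = |\chi_\D(\D_i,\D_j)| \le \gamma$ for $i \ne j$. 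For any query $q:X\times\{\pm 1\}\to[-1,1]$, the discrepancy $\phi_i(q) \eqdef \E_{\D_i}[q] - \E_\D[q]$ is exactly the inner product $\langle q, f_i\rangle_{L^2(\D)}$, and the adversarial answer $\E_\D[q]$ is within tolerance $\tau$ for input $\D_i$ precisely when $|\phi_i(q)| \le \tau$.

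It suffices (by contrapositive) to show that if a correct SQ algorithm uses only tolerances $\tau > \sqrt{\gamma+\gamma'}$, then it must issue at least $s\gamma'/(\beta-\gamma)$ queries. Call a query $q$ eliminating for $\D_i$ if $|\phi_i(q)| > \tau$; only such queries can make the transcript distinguish input $\D$ from input $\D_i$, so every $\D_i$ must be eliminated by at least one query. Under the assumption $\tau > \sqrt{\gamma+\gamma'}$, eliminating $\D_i$ via $q$ implies $\phi_i(q)^2 > \gamma+\gamma'$.

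The crux of the proof is then to bound, for any single query $q$, the cardinality of the eliminated set $S(q) \eqdef \{i : \phi_i(q)^2 > \gamma+\gamma'\}$. By definition $\sum_{i \in S(q)}\phi_i(q)^2 > |S(q)|(\gamma+\gamma')$. On the other hand, viewing $T : g \mapsto (\langle g,f_i\rangle)_{i\in S(q)}$ as a linear operator from $L^2(\D)$ to $\R^{|S(q)|}$, one has $\|T\|^2 = \lambda_{\max}(G_{S(q)})$ where $G_{S(q)} \eqdef (\langle f_i,f_j\rangle)_{i,j\in S(q)}$ is the Gram matrix, so
\[
\sum_{i \in S(q)}\phi_i(q)^2 = \|T q\|_2^2 \le \|q\|_{L^2(\D)}^2 \cdot \lambda_{\max}(G_{S(q)}) \le 1\cdot\bigl(\beta+(|S(q)|-1)\gamma\bigr),
\]
using $|q|\le 1$ and Gershgorin's circle theorem applied to $G_{S(q)}$ (diagonal entries at most $\beta$, off-diagonal row sums at most $(|S(q)|-1)\gamma$). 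Combining the two bounds gives $|S(q)|(\gamma+\gamma') < \beta + (|S(q)|-1)\gamma$, i.e., $|S(q)| < (\beta-\gamma)/\gamma'$. A union bound over $k$ queries shows that at most $k(\beta-\gamma)/\gamma'$ of the distributions $\D_1,\ldots,\D_s$ are ever eliminated, so correctness forces $k \ge s\gamma'/(\beta-\gamma)$.

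I expect the main technical obstacle to be this Gram/Gershgorin step: it is what couples $\beta$, $\gamma$, and $\gamma'$ correctly to yield the precise $(\beta-\gamma)/\gamma'$ per-query elimination bound. A looser spectral inequality (for instance, ignoring the off-diagonal decay of $G_{S(q)}$) would give only $\beta/\gamma'$ in the denominator, which is not what the lemma states. The remaining steps — reducing to the fixed adversarial oracle, and lifting the per-query bound to an overall query lower bound by a union/counting argument — are routine bookkeeping.
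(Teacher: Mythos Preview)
The paper does not prove this lemma; it is quoted verbatim as Corollary~3.12 of \cite{FeldmanGRVX17} and used as a black box. Your argument is correct and is precisely the standard proof from that reference: the adversarial oracle answering with $\E_\D[q]$, the identification $\phi_i(q)=\langle q,f_i\rangle_{L^2(\D)}$, the Gram--Gershgorin bound $\lambda_{\max}(G_{S(q)})\le \beta+(|S(q)|-1)\gamma$ yielding the per-query elimination bound $|S(q)|<(\beta-\gamma)/\gamma'$, and the final counting step. There is nothing to compare against in the present paper.
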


\subsection{SQ Lower Bound on the Dependence on the Dimension and the Smoothing Parameter}

\begin{theorem}[SQ-Lower Bound]\label{thm:sq-noise-variance} 
Fix $d \in \mathbb N$. 
Let $\sigma \in (0, 1)$ and $k \in \mathbb N$ such that  $ \sigma\leq  O(1/\sqrt{\log k})$.
Any SQ algorithm that learns the class of degree $k$ PTFs
in the smoothed agnostic setting (with respect to the uniform distribution on the hypercube) with any accuracy $\epsilon < 1/100$
either requires queries with tolerance at
most $d^{-\Omega(k)}$ or makes at least $d^{\Omega(k)}$ queries.
\end{theorem}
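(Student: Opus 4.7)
The plan is to reduce from the classical SQ-hard problem of learning parities over the uniform distribution on the hypercube. Since every degree-$k$ parity $\chi_S(\x) = \prod_{i \in S} x_i$ with $|S| = k$ is the sign of a degree-$k$ multilinear polynomial, it is a degree-$k$ PTF, so any smoothed-agnostic learner for this class must in particular succeed on the family of distributions $\{D_S\}_{|S| = k}$ where $D_S$ has $\x \sim U(\{\pm 1\}^d)$ and $y = \chi_S(\x)$.

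\textbf{Step 1: Smoothing barely moves parities.} I will first verify that under $D_S$ the smoothed optimum $\opt_\sigma$ is bounded by, say, $1/50$, using only $\sigma \leq c/\sqrt{\log k}$ for a small enough absolute constant $c$. Because $\chi_S$ labels every point correctly under $D_S$, $\opt_\sigma \leq \E_{\z \sim \Gauss}\pr_{\x}[\chi_S(\x + \sigma \z) \neq \chi_S(\x)]$. Coordinatewise, $\mathrm{sign}(x_i + \sigma z_i) \neq x_i$ iff $x_i \sigma z_i < -1$; since $x_i \in \{\pm 1\}$ and $z_i \sim \Gauss$, this has probability exactly $\Phi(-1/\sigma) \leq e^{-1/(2\sigma^2)}$. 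A union bound over the $k$ coordinates in $S$ yields $\opt_\sigma \leq k\,e^{-1/(2\sigma^2)} \leq k^{1 - 1/(2c^2)}$, which is at most $1/50$ once $c$ is taken small enough.

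\textbf{Step 2: SQ dimension of the underlying testing problem.} I will set up the decision problem $\mathcal B(\{D_S : |S|=k\}, D_\star)$ with reference $D_\star$ given by $\x$ uniform on $\{\pm 1\}^d$ and $y$ an independent uniform $\pm 1$ label. A direct computation based on $\E_{\x}[\chi_S(\x)\chi_T(\x)] = \mathbbm{1}\{S = T\}$ gives $\chi_{D_\star}(D_S, D_T) = 0$ for $S \neq T$ and $\chi_{D_\star}(D_S, D_S) = 1$. The family is therefore $(0,1)$-correlated relative to $D_\star$ and has size $\binom{d}{k} \geq (d/k)^k$. Applying \Cref{lem:sq-from-pairwise} with $\gamma'=\binom{d}{k}^{-1/2}$ shows that any SQ solver for this testing problem must either use tolerance at most $\binom{d}{k}^{-1/4}= d^{-\Omega(k)}$ or at least $\binom{d}{k}^{1/2} = d^{\Omega(k)}$ queries.

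\textbf{Step 3: Reduction from testing to smoothed learning.} Suppose an SQ learner $A$ solves the smoothed-agnostic problem with accuracy $\epsilon < 1/100$ using fewer queries and larger tolerance than the bounds in Step 2. I will run $A$ on the unknown distribution $D \in \{D_\star\} \cup \{D_S\}_{|S|=k}$ to obtain a hypothesis $h$, then use one additional statistical query to estimate $\pr_{(\x,y) \sim D}[h(\x)\neq y]$ to tolerance $1/100$. If $D = D_S$, Step 1 gives $\opt_\sigma \leq 1/50$, so $h$ has true error at most $1/50 + 1/100$, and the estimate is below $1/20$. If $D = D_\star$, every fixed hypothesis has true error exactly $1/2$, so the estimate is above $1/2 - 1/100$. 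Thresholding at $1/4$ distinguishes the two cases with only $q+1$ queries at tolerance $\min(\tau, 1/100)$, contradicting Step 2 and proving the theorem.

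The main technical point is Step 1: one must argue that, despite the Gaussian perturbation being unbounded, the probability a coordinate of $\x \in \{\pm 1\}^d$ flips sign is polynomially small in $k$, and that a union bound then kills the dependence on $|S|=k$. This is precisely where the hypothesis $\sigma \leq O(1/\sqrt{\log k})$ enters, trading the log factor in $\sigma^{-2}$ against a polynomial in $k$ in the union bound. Everything else (the orthogonality of parities, the SQ dimension lemma, and simulating evaluation of the learner's error with one extra query) is entirely standard.
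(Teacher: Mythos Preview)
Your proposal is correct and follows essentially the same approach as the paper: reduce from the SQ-hard problem of distinguishing $k$-parities from random labels, observe that $k$-parities are degree-$k$ PTFs, bound $\opt_\sigma$ by a coordinate-flip union bound (which is exactly where $\sigma \le O(1/\sqrt{\log k})$ is used), invoke the pairwise-correlation SQ-dimension lemma using orthogonality of parities, and distinguish via one extra query estimating the learned hypothesis's error. The only differences from the paper are cosmetic (your constants, your explicit choice of $\gamma'$, and your explicit verification of the pairwise correlations that the paper simply calls ``well known'').
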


\begin{proof}

Given a subset $S \subseteq \{\pm 1\}^d$,  we denote by $\chi_S(\x)$ the parity function on the subset $S$, i.e.,
$\chi_S(\x) = \prod_{i\in S}\x_{i}$. We can extend the domain of $\chi_S$ to all of $\R^{d}$ as  $\chi_S(\x)=\sign(\prod_{i\in S}\x_{i})$ which is a degree $|S|$ Polynomial Threshold Function (PTF) and hence has surface area $C|S|$ for some universal constant $C>0$. Observe that we have that $\chi_S\in \F(k,Ck)$.

For every subset $S$ of $\{0, 1\}^d$ of size $m$, we define the distribution $D_S$ 
on $\{\pm 1\}^d \times \{\pm 1\}$ to be the distribution
of the pair $(\x, \chi_S(\x))$ where $\x \sim U_d$ is drawn uniformly at random from the Boolean hypercube $U_d = \{\pm\}^d$ (we use $U_d$ to denote both the $d$-dimensional Boolean hypercube and the uniform distribution over it).
Moreover, we define $N$ to be the distribution of $(\x, y)$
where $\x$ is drawn uniformly from the Boolean hypercube
and $y$ is $\pm 1$ with probability $1/2$. 
We let $\mathcal{D}_k$ be the set of all distributions $D_S$
for every subset $S \subseteq \{0, 1\}^d$ of size at most $k$. 
We will show that given a learner for PTFs in the smoothed agnostic
setting(under the uniform distribution on the hypercube), we can solve the decision problem $\mathcal{B}(\mathcal D_k, N)$.  
Since all parities are pairwise orthogonal, it is well known that the set of distributions $\mathcal D_k$ is $(0,1)$ correlated.  Therefore, by \Cref{lem:sq-from-pairwise} we obtain that any algorithm that solves 
the decision problem $\mathcal{B}$ either requires 
a query of tolerance $d^{-\Omega(k)}$ or makes at least
$d^{\Omega(k)}$ queries (since the class $\mathcal D_k$
contains $\binom{d}{k}$ distributions).

We now show that using an algorithm $\mathcal{A}$ that 
learns a hypothesis $h(\cdot)$ such that 
$\pr_{(\x, y) \sim D}[ h( \x) \neq y] \leq 
\E_{\z \sim \normal} \pr_{(\x, y) \sim D}[  \chi_S(\x+\sigma \z) \neq y] + \epsilon$ for some $\epsilon \leq 1/100$ we can solve the $k$-parity decision problem $\mathcal{B}(\mathcal D_k, N)$ defined above.  Let $h(\cdot)$ be the hypothesis returned 
by $\mathcal A$.   We can perform a statistical query
of tolerance $\tau = 1/10$ to obtain an estimate of the
error of $h(\cdot)$, $q = \E_{( \x, y) \sim D}[\1\{h(\x) \neq y\}]$.  If $q \leq 1/2 - 2\tau$ we declare that $D$ corresponds to a $k$-parity, otherwise we declare 
that $D = N$.  We observe that if $D$ actually corresponds to a $k$-parity(with set $S$), then we have that 
\begin{align*}
\E_{\z \sim \normal} \pr_{(\x, y) \sim D}[ \chi_S(\x
+ \sigma  \vec z ) \neq y]
&\leq 
\E_{\z \sim \normal} \pr_{\x \sim U_d}[ \chi_S(\x+\sigma\z)\neq \chi_S(\x) ]
\\
&\leq \max_{\x \in U_d}
\pr_{\z \sim \normal} \Big[ \bigcup_{i=1}^k \Big\{ \sigma |\vec w_i \cdot \vec z| \geq 1/2 \Big\} \Big ] \,,
\end{align*}
where the final inequality follows by the definition of $\chi_S$: for any  $\x \in U_d$ we have that if $\sign(\x_i+\sigma \z_i)\leq 1/2 $ for all $i$ then $\chi_S(\x)=\chi_S(\x+\sigma \z)$.
Using the tail of the Gaussian density, we have that
$
\pr_{\z \sim \normal} \Big[ \bigcup_{i=1}^k \Big\{ \sigma |\vec z| \geq 1/2 \Big\} \Big ]
\leq k \exp(-\Omega(1/(\sigma)^2 ))
\leq k \exp(-\Omega(1/\sigma)^2 )
\leq 1/10$ when $\sigma\leq O(1\sqrt{\log k})$. Therefore, by using a statistical query of tolerance $1/10$ and the learning algorithm $\mathcal{A}$ we can solve the $k$-parity decision problem.  
\end{proof}

\end{document}